  \providecommand\BibTeX{{%
    \normalfont B\kern-0.5em{\scshape i\kern-0.25em b}\kern-0.8em\TeX}}}
\crefname{equation}{Eq.}{Eqs.}
\crefname{section}{Section}{Sections}
\crefname{algorithm}{Algorithm}{Algorithms}
\crefname{figure}{Fig.}{Figs.}
\crefname{table}{Table}{Tables}
\crefname{theorem}{Theorem}{Theorems}
\crefname{lemma}{Lemma}{Lemmas}
\crefname{corollary}{Corollary}{Corollaries}
\crefname{definition}{Definition}{Definitions}
\begin{document}

\title{Dueling Bandits: From Two-dueling to Multi-dueling}  


\author{Yihan Du}
\affiliation{%
	\institution{IIIS, Tsinghua University}
	\city{Beijing, China} 
}
\email{duyh18@mails.tsinghua.edu.cn}

\author{Siwei Wang}
\affiliation{%
	\institution{IIIS, Tsinghua University}
	\city{Beijing, China} 
}
\email{wangsw15@mails.tsinghua.edu.cn}

\author{Longbo Huang}
\affiliation{%
	\institution{IIIS, Tsinghua University}
	\city{Beijing, China} 
}
\email{longbohuang@mail.tsinghua.edu.cn}

\begin{abstract}  
	We study a general multi-dueling bandit problem, where an agent compares multiple options simultaneously and aims to minimize the regret due to selecting suboptimal arms. 
	This setting generalizes the traditional two-dueling bandit problem and  finds many real-world applications involving subjective feedback on multiple options. 
	We start with the two-dueling bandit setting and propose two efficient algorithms, DoublerBAI and MultiSBM-Feedback. 
	DoublerBAI provides a generic schema for translating known results on best arm identification algorithms to the dueling bandit problem, and achieves a regret bound of $O(\ln T)$. MultiSBM-Feedback not only has an optimal $O(\ln T)$ regret, but also reduces the constant factor by almost a half compared to benchmark results.  
	Then, we consider the  general multi-dueling case and develop an efficient algorithm MultiRUCB. 
	Using a novel finite-time regret analysis for the general multi-dueling bandit problem, we show that MultiRUCB also achieves an $O(\ln T)$ regret bound and the bound tightens as the capacity of  the comparison set  increases. 
	Based on both synthetic and real-world datasets, we empirically demonstrate that our algorithms outperform existing algorithms. 
\end{abstract}

%

\keywords{Multi-Armed Bandits; Dueling Bandits; Exploration-Exploitation Trade-off; Online Learning} 

\maketitle


\section{Introduction} 
The stochastic Multi-Armed Bandit (MAB) problem is a classic online learning problem and has been extensively studied \cite{thompson1933,UCB_auer2002,TS_agrawal2012analysis}. It has a wide range of applications such as clinical trials \cite{villar2015clinical_trials}, recommendation systems \cite{kohli2013recommendation_systems}, and online advertisement \cite{chakrabarti2009mortal_online_advertisement}. 
In the MAB problem,  an agent chooses one option from $K$ alternatives, often called ``arms,''  and observes a numerical reward at each time-step. The goal is to minimize the cumulative regret, defined as the expected difference between the actual reward collected and the offline optimal reward. 

The dueling bandits problem \cite{IF_JCSS2012} is an important variant of the MAB problem. In this problem, an agent chooses a pair of arms every time, but only observes the outcome of a noisy comparison between the two selected  arms. This setting is particularly useful in applications involving implicit or subjective (human) feedback, such as information retrieval \cite{hofmann2013_Information_Retrieval} and recommendation systems \cite{kohli2013recommendation_systems}. 

The original dueling bandit setting focuses on only comparing two arms at any time. 
%
In this work, we consider a general $K$-armed multi-dueling bandit problem which has been studied by \cite{Multi-dueling_bandits_and_their_application,Multi-dueling_Bandits_with_Dependent_Arms}, and propose a novel finite-time analysis for that. In this setting, an agent selects a subset of  arms with size at most $m$ ($2 \leq m \leq K$), and observes  pairwise dueling outcomes in the selected subset. 
The objective  is to minimize the regret, being the  advantage that the optimal arm has over the chosen  arms, cumulated up to $T$ plays. 

This multi-dueling bandit model can be used in many real-world applications. 
For example, in information retrieval, the emergence of numerous ranking algorithms (often called ``rankers,'' \emph{e.g.}, PageRank \cite{PageRank} and BM25 \cite{BM25}) necessitates efficient methods to evaluate these rankers.
Conventional online ranker evaluation methods often use interlearving comparison \cite{interleav2015_sigir,interleav2017_sigir,interleav2018_sigir}, which produces a combined result list of two rankers and translates the user clicks on this list  to  preference feedback. Recently, several multileaving methods \cite{brost2016multileaving_sigir,multileaving_2017,multileaving_2019} have been proposed, which permits multiple rankers to be compared at once and provides detailed feedback about how these rankers compare to each other, using less data than sequential interleaving comparisons. However, previous works did not address the key issue of how to select a subset of rankers for each comparison, in order to balance  between finding the potentially optimal ranker and presenting results of instantaneously good rankers to users, namely the exploration-exploitation trade-off. 
%
The multi-dueling bandit model, on the other hand, provides a principled way of selecting multiple rankers (``arms'') for each comparison with the objective of guaranteeing few results of poor rankers to be presented to users (regret minimization). 

Another application of the multi-dueling bandit model is the problem of online treatment decision in clinical trials. For instance, in motor function recovery, patients' motor responses to treatments are hard to quantify. 
Thus,  treatment performance is evaluated by clinicians via pairwise comparisons \cite{sui2014clinical}. Since  clinical trial is expensive and time-consuming, it is more efficient to compare multiple treatments simultaneously in a single trial rather than conducting sequential trials on treatment pairs. 
The clinicians often provide a ranking of patients'  recovery status in a trial, which can be transformed to all pairwise  feedbacks. The multi-dueling bandit model can efficiently handle this sequential decision making problem to  maximize treatment gains with lower economic costs.



%
Note that our algorithm and finite-time analysis for the multi-dueling bandit setting is not a trivial extension.
Indeed, if one naively extends algorithms for two-dueling bandits to multi-dueling bandits by repeatedly performing the  original strategies of selecting two arms, it is hard to simultaneously guarantee an efficient selection of comparing arms, a small overall regret, and that the regret improves as $m$ increases, three desired features of effective algorithms.  

To design efficient algorithms for our problem, we first revisit the original dueling bandit problem,
and propose two efficient algorithms, called DoublerBAI and MultiSBM-Feedback. 
Our algorithms build upon the Doubler and MultiSBM algorithms in \cite{Reducing_dueling_bandits}, which reduces the dueling bandits problem to the conventional stochastic MAB problem. 
DoublerBAI incorporates Best Arm Identification (BAI) algorithms to the dueling bandit problem, and improves the regret bound of Doubler from $O((\ln T)^2)$ to optimal $O(\ln T)$. 
MultiSBM-Feedback, on the other hand, not only has an optimal regret bound of $O(\ln T)$, but also reduces the constant factor of the logarithmic term by almost a half, compared to benchmark results. This regret bound is comparable with that of UCB \cite{UCB_auer2002} in a standard MAB problem in terms of both order and factor.
We then turn to the general formulation with comparing $m$ arms, and propose an efficient algorithm, called MultiRUCB. We prove that MultiRUCB achieves an $O(\ln T)$ regret, and the regret bound tightens as  the size of  the comparison set  $m$ increases, which cannot be achieved by directly applying existing two-dueling bandit solutions. This implies that given the ability of simultaneously comparing more arms, MultiRUCB efficiently exploits more information, and its performance boosts as such ability increases.


While there have been previous work on the  multi-dueling bandit problem \cite{Multi-dueling_bandits_and_their_application,Multi-dueling_Bandits_with_Dependent_Arms}, to the best of our knowledge, this is the first work to provide a finite-time regret analysis for the general multi-dueling bandit problem. Moreover, we conduct  experiments based on both the synthetic and real-world datasets \cite{Introducing_LETOR_Datasets}. The results demonstrate the  superior performance of our algorithms over existing benchmarks.

\section{Problem Setting}

We consider a general $K$-armed multi-dueling bandit problem, where an agent is given a set of $K$ arms, denoted by $\mathcal{X}:=\{x_1,x_2,$ $...,x_K\}$. 
At each time-step $t\in\{1, 2, ..., T\}$, the agent selects a subset $\mathcal{A}_t\subset\mathcal{X}$ for comparison, where the size of $\mathcal{A}_t$ is constrained by $|\mathcal{A}_t| \leq m$ ($2 \leq m \leq K$), 
and observes all pairwise dueling outcomes in $\mathcal{A}_t$. 
%
%
Specifically, dueling comparison works as follows \cite{Reducing_dueling_bandits}. 
Each arm $x_i \in \mathcal{X}$ has a latent utility distribution in $[0, 1]$ with expectation $\mu(x_i)$.  
Then, there is a link function $\phi: [0,1] \times [0,1] \mapsto [0,1]$, based on which the probability that arm $x_i$ beats arm $x_j$ is given by $p_{ij}=\phi \big(\mu(x_i), \mu(x_j) \big)$. 
The dueling outcome for arm $i$ and arm $j$ at every time is an independent Bernoulli random variable that takes value $1$, representing arm $i$ beats arm $j$, with probability $p_{ij}$. 

As in \cite{Reducing_dueling_bandits}, in this paper, we focus on the following linear link function:\footnote{We also extend our results to more general non-utility-based models \cite{IF_JCSS2012} in \cref{section_extension}, and show numerical results for the extended models in our experiments.}
\begin{eqnarray}
\phi \big(\mu(x_i), \mu(x_j) \big):= \frac{\mu(x_i)- \mu(x_j)+1}{2}. \nonumber 
\end{eqnarray} 
We also assume without loss of generality that $\mu(x_1)> \mu(x_2) \geq ... \geq \mu(x_K)$. 
We use $P:=[p_{ij}]$, whose ${ij}$-th entry is  the preference probability $p_{ij}$, to denote the $K \times K$ preference matrix .


%
%

For the multi-dueling bandit problem, the  expected cumulative regret up to time $T$ is defined to be: 
\begin{align}
\mathbb{E}[R_T]:=\sum \limits_{t=1}^{T} \sum \limits_{a \in \mathcal{A}_t} \frac{1}{|\mathcal{A}_t|}\Delta(x_1, a), \nonumber
\end{align}
where  $\Delta(x_i, x_j) :=p_{ij}-\frac{1}{2} \in [-\frac{1}{2}, \frac{1}{2}]$ is a  measure of the distinguishability between two arms. 
This  regret measures the average advantage that the best arm  has over the $|\mathcal{A}_t|$ arms  being chosen at each time-step $t$. This implies that an expected zero regret can be achieved if and only if $\mathcal{A}_t=\{x_1\}$. 
Note that when $m=2$, our problem becomes the original two-dueling bandit problem. For ease of notation, below we write $\Delta_{ij}$ for $\Delta(x_i, x_j)$  and $\Delta_{i}$ for $\Delta(x_1, x_i)$. 

Note that our multi-dueling bandit formulation is different from \cite{Multi-dueling_bandits_and_their_application}. 
In our setting, the algorithm can choose at most $m$ different arms  rather than  an arbitrary subset of $K$ arms at each time-step $t$. This scenario fits many practical applications better, as the number of arms being compared simultaneously is often constrained. While our multi-dueling bandit setting is the same to that in \cite{Multi-dueling_Bandits_with_Dependent_Arms}, we are the first to provide a finite-time regret analysis for this problem.

\section{Algorithms for Two-Dueling Bandits} 
We first start from the special case when $m=2$, i.e., the original two-dueling bandit problem,\footnote{When $m=2$, having $|\mathcal{A}_t|=1$ is equivalent to selecting $(a_0, a_0)$ (in this case $\mathcal{A}_t$ only contains a single arm $a_0$) in the original two-dueling bandit problem. Therefore, our setting reduces to the original two-dueling bandit problem when $m=2$.}   and propose two efficient algorithms  DoublerBAI and MultiSBM-Feedback for achieving an optimal regret. 
Our algorithms build upon the Doubler and MultiSBM algorithms in \cite{Reducing_dueling_bandits}. 

\subsection{DoublerBAI with Best Arm Identification Algorithms} \label{subsection_DoublerBAI}

\begin{algorithm}[!th]
	\caption{DoublerBAI}
	\label{algorithm_DoublerBAI}
	\LinesNumbered
	\KwIn{Exponentially growing sequence $\{T_i\}_{i \in \mathbb{N}}$, where $T_i=  \lfloor  a^{b^i} \rfloor \   (a, b>1)$}
	$S$ $\leftarrow$ new BAIM over $\mathcal{X}$\;
	Set the identified best arm $\hat{x}_{i}\  = \textup{NULL}$\ for all epoch $i \in \{0,1,...\}$\;
	Set the length of epoch $i$  $\tau_i=
	\left\{\begin{matrix}
	&T_0, &i=0
	\\ 
	&T_i-T_{i-1}, &i>0
	\end{matrix}\right.$\;
	\While{\textup{true}}
	{
		
		\If {$\hat{x}_{i-1} \neq \textup{NULL}$}
		{
			$\bar{x}_i \leftarrow\hat{x}_{i-1}$\;
		}
		\Else
		{
			Choose $\bar{x}_i$ randomly from $\mathcal{X}$\;
		}
		Reset($S$, $\delta_i=\frac{1}{\tau_{i+1}}$)\;
		\For{$j=1,...,\tau_i$}
		{
			\If {$\hat{x}_{i} \neq \textup{NULL}$}
			{
				$//\mathtt{exploit}$\\
				$y_t \leftarrow \hat{x}_{i}$\;
				Play ($\bar{x}_i, y_t$)\;
			}
			\Else  
			{
				$//\mathtt{explore}$\\
				$y_t \leftarrow \textup{Advance}(S)$\;
				Play ($\bar{x}_i, y_t$) and observe the binary dueling outcome $b_t$\;
				Feedback($S$, $b_t$)\;
				\If { \textup{StopTest}($S$)=\textup{true}}
				{
					$\hat{x}_i \leftarrow $ Return($S$)\;
					$\tau_i^{\textup{explore}} \leftarrow j$\;
				}
			}
			$t\leftarrow t+1 $\;
		}
		$i\leftarrow i+1 $\;
	}
\end{algorithm}

To present our algorithm, we define a generic Best Arm Identification Machine (BAIM) as a procedure which performs a $K$-armed BAI algorithm with an internal timer and memory, e.g., LUCB \cite{lucb}. 
A BAIM has five operations: Reset, Advance, Feedback, StopTest and Return. The Reset operation clears its state. The Advance operation decides the next arm to play. The Feedback operation updates its state with the observed information. The StopTest operation checks whether the internal BAI algorithm has terminated and the Return operation returns the identified best arm. 

With the BAIM procedure above, Algorithm \ref{algorithm_DoublerBAI} presents the formal definition of DoublerBAI. 
Generally speaking, we first divide the time horizon into exponentially growing epochs, motivated by the doubling trick \cite{doubling_trick_auer2010ucb,What_Doubling_Tricks}. 
Then, in each epoch $i$,  we fix one arm  $\bar{x}_i$ (the left arm)
of the played duel $(\bar{x}_i, y_t)$, and adaptively choose the other arm  $y_t$ (the right arm) using an  exploration-then-exploitation strategy. 

In the stage of exploration (Lines 16-22), we choose the right arm  $y_t$ according to the sample strategy provided by  $S$, the BAIM procedure, and feed back the dueling  outcome $b_t$ to  $S$. Thus, $S$ is actually estimating the probabilities of arms in $\mathcal{X}$ beating the fixed $\bar{x}_i$, and identifying the best arm. Once the internal BAI algorithm in $S$ terminates and returns the identified best arm $\hat{x}_i$ (Lines 20-22), we enter the  exploitation stage (Lines 12-14) and $y_t$ is chosen to be  $\hat{x}_i$. 
For the choice strategy of the left arm $\bar{x}_i$ (Lines 5-8), if $S$ terminates and returns a best arm in the previous epoch, i.e., $\hat{x}_{i-1} \neq \mathtt{NULL}$, then we set $\bar{x}_i$ to be the identified best arm $\hat{x}_{i-1}$ found in the previous epoch. Otherwise, we simply choose $\bar{x}_i$ randomly from $\mathcal{X}$. 

The key of DoublerBAI is to identify the best arm with high probability  in each epoch and  fix the left arm in the next epoch as the identified arm. The error probability of the BAIM in each epoch is set according to the length of the next epoch. This guarantees that the expected regret of the left  arm is a constant and the regret of the right arms is bounded by the internal regret of the BAIM. 

The following theorem provides the regret bound for DoublerBAI. 

\begin{restatable}{theorem}{thmdoubler} 
	\label{theorem1_DoublerBAI}
	Consider a K-armed utility-based two-dueling bandits game. Assume that the BAIM $S$ in DoublerBAI  has a sample complexity of $O(H \ln(\frac{H}{\delta}))$, where $S$ outputs the best arm with probability at least $1-\delta$. Given an exponentially growing sequence $\{T_i\}_{i \in \mathbb{N}}$ with parameters $a, b>1$, i.e., $T_i=  \lfloor  a^{b^i} \rfloor$, the expected regret of DoublerBAI is bounded by 
	\begin{align}
	\mathbb{E}[R_T] = & O((H \ln H)^b) +O(H \ln T)  \nonumber\\
	& + O(H \ln H \ln \ln T)+O(\ln \ln T), \nonumber
	\end{align}
	where $H:=\sum \limits_{i=2}^{K}\frac{1}{\Delta_i^2}$ is the problem complexity for a bandit instance.
\end{restatable}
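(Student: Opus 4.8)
The plan is to decompose the per-step regret into a left-arm term and a right-arm term, show that the left-arm regret and the exploitation part of the right-arm regret are each only a constant per epoch, and attribute the bulk of the regret to the exploration (best-arm-identification) phases. First I would record a structural observation specific to the linear link: since $p_{i\bar{x}}=\tfrac{\mu(x_i)-\mu(\bar{x})+1}{2}$, for \emph{any} fixed left arm $\bar{x}$ the responder maximizing the winning probability is always $x_1$, and the induced BAI gap of $x_i$ equals $p_{1\bar{x}}-p_{i\bar{x}}=\tfrac{\mu(x_1)-\mu(x_i)}{2}=\Delta_i$. Hence the BAIM run inside each epoch is a $K$-armed BAI instance whose complexity parameter is exactly $H=\sum_{i\ge2}1/\Delta_i^2$, independent of $\bar{x}_i$; its sample complexity is $C_i=O(H\ln(H/\delta_i))$ with $\delta_i=1/\tau_{i+1}$, and with probability at least $1-\delta_i$ it both terminates within $C_i$ pulls and returns $\hat{x}_i=x_1$. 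As $|\mathcal{A}_t|=2$, the per-step regret is $\tfrac12\big(\Delta(x_1,\bar{x}_i)+\Delta(x_1,y_t)\big)$, which splits cleanly into the two terms above.

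Next I would bound the \emph{left-arm} regret. In epoch $i\ge1$ we have $\bar{x}_i=\hat{x}_{i-1}$, so on the good event of epoch $i-1$ (probability at least $1-\delta_{i-1}=1-1/\tau_i$) the left arm is $x_1$ and contributes nothing, while otherwise it contributes at most $\tfrac14\tau_i$ over the whole epoch. The expected left-arm regret per epoch is therefore at most $\tfrac1{\tau_i}\cdot\tfrac14\tau_i=\tfrac14$, a constant. The same argument handles the \emph{exploitation} part of the right arm: when $\hat{x}_i=x_1$ it is regret-free, and multiplying the error probability $\delta_i=1/\tau_{i+1}$ by the worst case $\tfrac14\tau_i$ again gives a constant per epoch. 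Summing these two contributions over the $N=O(\log_b\log_a T)=O(\ln\ln T)$ epochs needed to cover $T$ yields the $O(\ln\ln T)$ term.

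For the \emph{exploration} part of the right arm I would use the crude bound that each of the at most $C_i$ exploratory pulls incurs per-step regret at most $\tfrac14$, so the exploration regret of epoch $i$ is $O(H\ln(H/\delta_i))=O(H\ln H)+O(H\ln\tau_{i+1})$. Summing the first piece over the $O(\ln\ln T)$ epochs gives $O(H\ln H\ln\ln T)$, and for the second I use $\ln\tau_{i+1}\le\ln T_{i+1}\approx b^{\,i+1}\ln a$; this geometric series is dominated by its last term $\approx b\ln T$, giving $O(H\ln T)$. These account for the two middle terms of the bound.

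The remaining term $O((H\ln H)^b)$ is the main obstacle, arising from the early epochs that are too short to contain a full BAI run, so the preceding per-epoch arguments fail there. I would define $i_0$ as the first epoch with $\tau_{i_0}\ge C_{i_0}$ and bound the total regret of all epochs $i<i_0$ trivially by their total length $T_{i_0-1}\le T_{i_0}$. The crux is to control $T_{i_0}$: using $\tau_{i+1}\approx\tau_i^{\,b}$ (from the double-exponential form $T_i=\lfloor a^{b^i}\rfloor$), the defining inequality $\tau_{i_0-1}<C_{i_0-1}=O\big(H\ln(H\tau_{i_0})\big)=O\big(H(\ln H+b\ln\tau_{i_0-1})\big)$ forces $\tau_{i_0-1}=O(H\ln H)$, whence $T_{i_0}\approx\tau_{i_0}\approx\tau_{i_0-1}^{\,b}=O((H\ln H)^b)$. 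Carefully inverting this double-exponential relation while tracking the floor functions and the approximations $\tau_i=T_i(1-o(1))$ is the delicate step; once $i_0$ is pinned down, summing the four contributions gives the claimed bound.
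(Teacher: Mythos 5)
Your proposal is correct and follows essentially the same route as the paper's proof: the same left/right-arm decomposition with constant expected per-epoch regret for the left arm and the exploitation phase, the exploration regret bounded by the BAIM sample complexity with $\delta_i=1/\tau_{i+1}$ and summed as a geometric series over the $O(\ln\ln T)$ epochs to yield $O(H\ln T)+O(H\ln H\ln\ln T)+O(\ln\ln T)$, and the early epochs too short to complete a BAI run bounded by $O((H\ln H)^b)$ via the same inversion of the double-exponential epoch lengths. The only cosmetic difference is that you state explicitly the (correct) fact that the induced BAI gaps equal $\Delta_i$ for every fixed left arm $\bar{x}_i$, which the paper leaves implicit in its regret identity for the internal BAI game.
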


\emph{Proof sketch.} 
(Please refer to  Section A of the supplementary material \cite{SupplementaryMaterial} for the full proof).

We first consider the regret incurred by  the right arm $y_t$. 
Let $B(\delta)$ denote the supremum of the expected regret of $S$ (the BAIM) to identify the best arm with probability at least $1-\delta$. 
In epoch $i$, after fixing the left arm $\bar{x}_i$, we see that $S$ is playing a standard BAI game in the stage of exploration by estimating the probabilities of arms in $\mathcal{X}$ to beat $\bar{x}_i$. Thus, in epoch $i$, the expected regret in  $S$ is 
$
\mathbb{E}[\sum \limits_{t=1} \limits^{\tau_i^{\textup{explore}}} \frac{\mu(x_1)-\mu(y_t)+1}{2} \ ] 
\leq B(\frac{1}{\tau_{i+1}})
$.
Specifically, according to the definition of regret for dueling bandits, we observe that the expected regret of the right arm $y_t$ in the stage of exploration, which  exactly equals to the left-hand side of the inequality, can be bounded by   $B(\frac{1}{\tau_{i+1}})$. 
Using the explore-then-exploit strategy, the expected regret of the right arm in epoch $i$ can be bounded by 
$ (1-\frac{1}{\tau_{i+1}})  B(\frac{1}{\tau_{i+1}}) + \frac{1}{\tau_{i+1}} O(\tau_i) $.
Taking a summation over all epochs (there are $O(\ln\ln T)$ epochs), we  obtain the main term $O(H \ln T)$ of the bound presented in  \cref{theorem1_DoublerBAI}.

Next, we consider the left arm. If the previous epoch returns an identified best arm $\hat{x}_{i-1}$ (with error probability at most  $\frac{1}{\tau_{i}}$), then  the left arm in epoch $i$  is fixed as $\bar{x}_i$=$\hat{x}_{i-1}$, which incurs expected regret of $\frac{1}{\tau_{i}}  \cdot O(\tau_i) + (1-\frac{1}{\tau_{i}}) \cdot 0$. Otherwise, the left arm $\bar{x}_i$ is chosen randomly, which incurs linear expected regret of $O(\tau_i)$. However, one can prove that the latter case only occurs in early short epochs, and the regret can be bounded by  $O((H \ln H)^b)$. $\Box$

\paragraph{Remark 1.} \cref{theorem1_DoublerBAI} suggests that our DoublerBAI improves the upper bound over its baseline, i.e., Doubler in \cite{Reducing_dueling_bandits}, from $O((\ln T)^2)$ (Theorem 3.1 in \cite{Reducing_dueling_bandits}) to $O(\ln T)$ by efficiently incorporating BAI algorithms. 
The upper bound of our DoublerBAI has an additional problem-dependent term $O((H \ln H)^b)$,  caused by not being able to identify best arms due to insufficient epochs length. 
Yet, by  setting $b$ close to $1$, $O((H \ln H)^b)$ becomes negligible for $T$ large enough, which is also efficient in practice.

\subsection{MultiSBM-Feedback with Multi-armed Bandit Algorithms}

We now consider the second algorithm, MultiSBM-Feedback, which  not only has an optimal regret bound of $O (\ln T)$, but also improves the constant factor of its baseline, i.e., MultiSBM in \cite{Reducing_dueling_bandits}.  

In MultiSBM-Feedback, we define a Singleton Bandit Machine (SBM) as a generic procedure representing a MAB algorithm with an internal timer and memory. In this work, we implement SBM with a variant of UCB \cite{UCB_auer2002} which satisfies the $\alpha$-robustness property defined in \cite{Reducing_dueling_bandits}. Below we restate this definition. 

\begin{definition}[$\alpha$-robustness]
	Let $T_i$ be the number of times a (sub-optimal) arm $x_i \in \mathcal{X}$ is played when running the policy $T$ rounds. A MAB policy is said to be \emph{$\alpha$-robust} when it has the following property: for all $s \geq 4(\alpha+4) \Delta_i^{-2} \ln(T)$, it holds that $\Pr[T_i > s] < \frac{2}{\alpha}(s/2)^{-\alpha}$.
\end{definition}

An SBM has four operations: Reset, Advance, Feedback and AdditionalFeedback. The first three operations are inherited from MultiSBM. 
The last AdditionalFeedback is newly added, and plays an important role in improving the regret. 
AdditionalFeedback receives an additional feedback sent from some arm  and updates  the SBM's internal state with the additional feedback. 

Algorithm \ref{algorithm_MultiSBM-Feedback} presents the procedure of MultiSBM-Feedback. Specifically, we operate $K$ different SBMs in parallel,  indexed by the $K$ elements in $\mathcal{X}$. 
SBM $S_x$ $(x \in \mathcal{X})$ performs an MAB algorithm via estimating  the probabilities of arms in $\mathcal{X}$ to beat arm $x$. At each time-step $t$, we choose the right arm $y_t$ of the duel $(x_t, y_t)$ according to the strategy provided by SBM $S_{x_t}$ and feed back the outcome $b_t^y$ ($b_t^y=1$ if $y_t$ wins against $x_t$, otherwise $b_t^y=0$) to $S_{x_t}$. If the  two arms are different, we invoke AdditionalFeedback to collect outcome $b_t^x = 1-b_t^y$  to $S_{y_t}$ (Lines 9-10). 
In the next time-step, the right arm $x_{t+1}$ is chosen to be $y_t$. In other words, the right arm in each time-step equals to the left arm in the next time-step.

The key of AdditionalFeedback is to exploit additional feedback from the perspective of $x_t$, to augment the information in $S_{y_t}$. This is because after one pull, the outcome of $x_t$ beating $y_t$ and that of $y_t$  beating $x_t$ can be respectively fed back to $S_{y_t}$ and $S_{x_t}$. 
Thus, $S_{y_t}$ receives an additional feedback from $x_t$ without pulling $x_t$, which helps $S_{y_t}$ augment its empirical observations on $x_t$. Note that in any SBM $S_x$, the empirical observations received from operations Feedback and AdditionalFeedback are independent. Thus,  the Chernoff-Hoeffding bound used in our theoretical analysis still holds. 

Algorithm \ref{algorithm_UCB_Feedback} presents the procedure of a SBM. $\rho_k$ denotes the number of times arm $x_k \in \mathcal{X}$ has been pulled. $s_k$ denotes the number of times this SBM receives additional feedback sent from arm $x_k$. The operation GetAdditionalFeedback is to obtain an additional feedback sent from  some left arm $x_t$ in Algorithm  \ref{algorithm_MultiSBM-Feedback}, which we label as $x_j$ in Algorithm  \ref{algorithm_UCB_Feedback}. 
If no additional feedback is sent to this SBM, GetAdditionalFeedback simply returns $\mathtt{NULL}$.  Every time before SBM pulls (advances) an arm, it  invokes GetAdditionalFeedback and  updates its empirical observations with the  additional feedback received from some arm $x_j$ (Lines 6-9). 

The following theorem bounds the expected regret of MultiSBM-Feedback.

\begin{algorithm}[t]
	\caption{MultiSBM-Feedback}
	\label{algorithm_MultiSBM-Feedback}
	\LinesNumbered
	For all $x \in \mathcal{X}$: $S_x \leftarrow$ new SBM over $\mathcal{X}$, Reset $(S_x)$\;
	$y_0 \leftarrow$ arbitrary element of $\mathcal{X}$\;
	$t \leftarrow 1$\;
	\While{\textup{true}}
	{
		$x_t \leftarrow y_{t-1}$\;
		$y_t \leftarrow$ Advance$(S_{x_t})$\;
		Play $(x_t, y_t)$, observe choice $b_t^y$\;
		Feedback$(S_{x_t}, b_t^y)$\;
		\If {$x_t \neq y_t$}
		{
			$b_t^x \leftarrow 1-b_t^y$, AdditionalFeedback$(S_{y_t}, b_t^x)$\;
		}
		$t \leftarrow t+1$\;
	}
\end{algorithm}

\begin{algorithm}[h]
	\caption{Implementation of SBM}
	\label{algorithm_UCB_Feedback}
	\LinesNumbered
	\KwIn{Confidence interval parameter $\alpha$}
	$\forall x_k \in \mathcal{X}$, set $\hat{\mu}_k=\infty$\;
	$\forall x_k \in \mathcal{X}$, set $\rho_k=0$\;
	$\forall x_k \in \mathcal{X}$, set $s_k=0$\;
	$t \leftarrow 1$\;
	\While{\textup{true}}
	{
		$b^{x_j}=$GetAdditionalFeedback$()$\;
		\If{ $b^{x_j} \neq$ \textup{NULL} }
		{
			$\hat{\mu}_{j}=\frac{\hat{\mu}_{j} \cdot (\rho_{j}+s_{j}) + b^{x_j} }{\rho_{j}+s_{j}+1} $\;
			$s_{j}=s_{j}+1$\;
		}
		Let $i$ be the  index maximizing $\hat{\mu}_i + \sqrt{ \frac{(\alpha+2)\ln t}{2(\rho_i+s_i)} }$;
		$//$  $\frac{x}{0}:=1$ $\mathtt{for}$ $\mathtt{any}$ $x$\\ 
		Play $x_i$, update $\hat{\mu}_i$, increment $\rho_i$ by 1\;
		$t \leftarrow t+1$\;
	}
\end{algorithm}

\begin{restatable}{theorem}{thmMultiSBMfeedback}
	\label{theorem2_MultiSBM-Feedback}
	Consider a K-armed utility-based two-dueling bandits game. 
	The expected regret of MultiSBM-Feedback, which implements an SBM defined in Algorithm  \ref{algorithm_UCB_Feedback},  is bounded by
	\begin{align*}
	& \mathbb{E}[R_T] 
	\leq \min \left \{ \sum \limits_{i > 1} \frac{(\alpha+2)\Delta_{max} }{\Delta_i^2} \ln T, \  \sum \limits_{i > 1} \frac{2(\alpha+2)}{\Delta_i} \ln T \right \}  
	\\
	& \! +\frac{(\alpha+8)\Delta_{max}}{2\alpha}K + \sum \limits_{j>1} \sum \limits_{i>1} O\Big( \frac{\alpha \Delta_{max}}{\Delta_j^2} \big( \ln\ln T + \ln K + \ln(\frac{1}{\Delta_i}) \big) \Big), \nonumber
	\end{align*}
	where  $\Delta_{max}:=\max \limits_{i>1}\Delta_{i}$ and the confidence interval parameter $\alpha=\max\{3,\frac{\ln K}{\ln \ln T} \}$.
\end{restatable}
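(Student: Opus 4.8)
The plan is to reduce the total dueling regret to a sum of standard MAB regrets over the $K$ internal SBMs, then show that essentially only the machine $S_{x_1}$ attached to the best arm matters at leading order, and finally extract the factor-of-two saving from the additional feedback. Throughout I write $\rho_k^{(x)}$ for the number of pulls of $x_k$ inside $S_x$ and $s_k^{(x)}$ for the number of additional-feedback observations $S_x$ receives about $x_k$.

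First I would set up a regret decomposition from the chaining rule $x_t=y_{t-1}$. Since the per-step regret equals $\frac{1}{2}(\Delta_{x_t}+\Delta_{y_t})$ whether or not $x_t=y_t$, telescoping over $t$ using $x_t=y_{t-1}$ gives $\mathbb{E}[R_T]=\sum_{t=1}^{T-1}\Delta_{y_t}+\frac{1}{2}(\Delta_{y_0}+\Delta_{y_T})$, so up to an $O(1)$ boundary term the regret is $\sum_t\Delta_{y_t}$, i.e. one full gap per advanced (right) arm. The crucial observation is that under the linear link the reward of $x_k$ inside $S_x$ is $p_{kx}=\frac{\mu(x_k)-\mu(x)+1}{2}$, so the best arm of \emph{every} SBM is $x_1$ and the gap of $x_k$ inside every $S_x$ is exactly $p_{1x}-p_{kx}=\Delta_k$, independent of $x$. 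Hence $\mathbb{E}[R_T]\approx\sum_x\sum_{k>1}\rho_k^{(x)}\Delta_k$, and the problem reduces to bounding these pull counts.

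Second I would argue that only $S_{x_1}$ contributes at leading order. Viewing $(y_t)$ as a walk on $\mathcal{X}$, $S_x$ is advanced exactly as often as $x$ occurs as a right arm, and an excursion away from $x_1$ into the suboptimal arms can only be initiated by a suboptimal pull inside $S_{x_1}$. The number of initiations and the length of each excursion (the suboptimal pulls inside the suboptimal machines $S_{x_j}$) are controlled by the $\alpha$-robustness tail $\Pr[T_i>s]<\frac{2}{\alpha}(s/2)^{-\alpha}$ applied in each SBM; summing these tails over the $O(K)$ suboptimal machines and over arms produces exactly the lower-order term $\sum_{j>1}\sum_{i>1}O(\frac{\alpha\Delta_{max}}{\Delta_j^2}(\ln\ln T+\ln K+\ln\frac{1}{\Delta_i}))$, and this is where the choice $\alpha=\max\{3,\frac{\ln K}{\ln\ln T}\}$ enters, keeping the tails summable while holding the main constant $(\alpha+2)$ small. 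For the dominant machine $S_{x_1}$ I then run the standard UCB argument on the augmented empirical mean: since the Feedback and AdditionalFeedback samples of any arm are independent, Chernoff–Hoeffding applies to the mean over all $\rho_k^{(x_1)}+s_k^{(x_1)}$ observations, and the width $\sqrt{(\alpha+2)\ln t/(2(\rho_k+s_k))}$ forces $\rho_k^{(x_1)}+s_k^{(x_1)}\le\frac{2(\alpha+2)}{\Delta_k^2}\ln T$ with high probability. The halving comes from the feedback: the observations $S_{x_1}$ gets about $x_k$ count the $x_k\!\to\!x_1$ transitions of the walk, whereas $\rho_k^{(x_1)}$ counts the $x_1\!\to\!x_k$ transitions, and since almost every visit to $x_k$ enters from and returns to $x_1$, one has $s_k^{(x_1)}\approx\rho_k^{(x_1)}$, hence $\rho_k^{(x_1)}\le\frac{(\alpha+2)}{\Delta_k^2}\ln T$ up to lower order. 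Feeding this into $\sum_{k>1}\rho_k^{(x_1)}\Delta_k$ and bounding each visit's regret either by $\Delta_{max}$ or by its exact gap $\Delta_k$ gives the two candidates inside the $\min$, while $\frac{(\alpha+8)\Delta_{max}}{2\alpha}K$ absorbs the per-machine initialization and boundary constants.

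The main obstacle I expect is making the balance $s_k^{(x_1)}\approx\rho_k^{(x_1)}$ rigorous rather than heuristic: the walk does not return to $x_1$ immediately after every visit to $x_k$, so the in-/out-degree identity at $x_k$ holds only up to the total number of suboptimal-to-suboptimal transitions. Controlling that slack together with the excursion analysis of the previous step, through the same $\alpha$-robustness tails and in a way that keeps the correction inside the stated lower-order term rather than inflating the $\ln T$ coefficient, is the delicate part; it is precisely why the clean exact-gap estimate is stated only in the conservative form $\frac{2(\alpha+2)}{\Delta_k}\ln T$ inside the $\min$.
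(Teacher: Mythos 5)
Your outline matches the paper's proof at the level of strategy: the same chaining decomposition $x_{t+1}=y_t$ reducing the dueling regret to pull counts inside the internal SBMs, the same observation that under the linear link the gap of $x_k$ inside every $S_x$ is exactly $\Delta_k$, the same use of the $\alpha$-robustness tails to push the suboptimal machines $S_x$ ($x\neq x_1$) into the $O\big(\ln\ln T+\ln K+\ln(1/\Delta_i)\big)$ lower-order double sum, and the same identification of the additional feedback as the source of the factor-of-two saving in $S_{x_1}$. The two candidates inside the $\min$ and the role of $\alpha=\max\{3,\ln K/\ln\ln T\}$ are also correctly placed.

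However, the one step you leave heuristic --- the balance $s_k^{(x_1)}\approx\rho_k^{(x_1)}$ --- is exactly the step that carries the theorem, and the paper resolves it by a different and easier route than the per-arm in-/out-degree identity at $x_k$ that you propose. The paper counts degrees at $x_1$, not at $x_k$: since the sequence of left arms is the sequence of right arms shifted by one step, the number of times $x_1$ appears as a left arm equals (up to the boundary) the number of times it appears as a right arm; subtracting the $(x_1,x_1)$ plays from both sides gives the \emph{exact} aggregate identity $\sum_{i>1}\rho_i(t)=\sum_{i>1}s_i(t)$ in $S_{x_1}$, with no slack from suboptimal-to-suboptimal transitions. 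The halving then needs no per-arm balance: one bounds each combined count $\rho_i(T)+s_i(T)$ by the UCB threshold $u_i(T)=\tfrac{2(\alpha+2)}{\Delta_i^2}\ln T$ plus a correction, where the additional feedbacks about $x_i$ arriving after the threshold is crossed are charged either to the UCB failure events in $S_{x_1}$ or to some suboptimal machine $S_z$ having advanced $x_i$ --- this is precisely where the slack you worry about goes, and it lands inside the already-present lower-order term. Summing over $i>1$ and invoking the aggregate identity turns $\sum_{i>1}(\rho_i+s_i)$ into $2\sum_{i>1}\rho_i$, giving $\sum_{i>1}\rho_i\le\tfrac12\sum_{i>1}u_i(T)+(\text{lower order})$ and hence the halved constant, at the price of charging each pull $\Delta_{max}$ rather than $\Delta_i$ --- which is why the halved term of the $\min$ carries $\Delta_{max}/\Delta_i^2$ while the unhalved term keeps $1/\Delta_i$. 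Without this aggregate identity your argument as written does not close: the per-arm balance is only approximate, and bounding its error provably within the lower-order term is the missing content.
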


\emph{Proof sketch.} 
(Please refer to  Section B of the supplementary material \cite{SupplementaryMaterial} for the full proof).

According to MultiSBM-Feedback (Algorithm  \ref{algorithm_MultiSBM-Feedback}), the right arm in each time-step equals to the left arm in the next time-step. Thus, in order to bound the total regret, it suffices to bound the number of times the right arm is suboptimal. Because the right arm is advanced by the SBM indexed by the left arm, we consider the regret from two parts, i.e., suboptimal right arms advanced by $S_{x_1}$ and  by  $S_x \  (x\neq x_1)$. 

We first analyze the latter part. Because the number of times a suboptimal arm $x\neq x_1$ being advanced in any SBM is $O(\ln T)$, according to the results of UCB \cite{UCB_auer2002}, the number of times  $x$ becomes the left arm  is  $O(K \ln T)$,  i.e., the internal timer of  $S_x \  (x\neq x_1)$ is order of $O(K \ln T)$. Thus, the number of times a suboptimal right arm advanced by $S_x$ is $O(\ln (K \ln T) )$. 

Next, we analyze the former part. By exploiting the additional feedbacks, we can prove that in $S_{x_1}$,  $\sum \limits_{i>1} \rho_i(t) =\sum   \limits_{i>1} s_i(t)$ for any internal time $t$. This is because every time   $S_{x_1}$ pulls a suboptimal arm ($\sum \limits_{i>1} \rho_i(t)$ increments by $1$), it must has received an additional feedback before ($\sum   \limits_{i>1} s_i(t)$ increments by $1$). Thus, we can prove an expected upper bound of $O(\ln T)$ for $\rho_i(t)+s_i(t) \  (i>1)$. Therefore, taking a summation over $i>1$,  we obtain a tighter upper bound of $\sum \limits_{i>1} \rho_i(t)$ compared to the original MultiSBM, where the order is still $O(\ln T)$, while the constant shrinks by a half. $\Box$

\paragraph{Remark 2.} \cref{theorem2_MultiSBM-Feedback} suggests that our MultiSBM-Feedback not only has an optimal regret bound of $O(\ln T)$, but also improves the constant factor of its benchmark result in MultiSBM.
This improvement is achieved by additionally exploiting the feedback from the duel. Moreover, the regret bound of MultiSBM is comparable to that of UCB \cite{UCB_auer2002} in a standard MAB setting in terms of both order and factor.  

\section{MultiRUCB for Multi-Dueling Bandits}

In this section, we consider the general case $2 \leq m \leq K$, where we can simultaneously compare multiple arms. We propose an efficient algorithm, called MultiRUCB, for the general multi-dueling bandit problem. We conduct a finite-time regret analysis and show that the regret of MultiRUCB is $O (\ln T)$  and tightens as the comparison set size $m$ increases. 
%
To the best of our knowledge, this is the first   finite-time regret analysis  for multi-dueling bandits. 

Algorithm  \ref{algorithm_MultiRUCB} presents the procedure of MultiRUCB. We define matrix $W_{K \times K}$  to record the empirical observations, whose ${ij}$-th entry denotes the number of times we observe $x_i$ beating $x_j$ ($x_i,x_j \in \mathcal{X}$).  
Motivated by \cite{Relative_upper_confidence_bound}, we also define the relative upper confidence bound matrix $U_{K \times K}$, whose ${ij}$-th entry  optimistically estimates the preference probability $p_{ij}$. We maintain a candidate set $\mathcal{C}$ which contains potential optimal arms and an empty or singleton set $\mathcal{B}$ which contains the hypothesized optimal arm. Note that the hypothesized optimal arm is removed from $\mathcal{B}$ once it   loses to another arm (Line $10$).  At each time-step $t$, we choose  the comparison set  $\mathcal{A}_t$ differently  according to the size of $\mathcal{C}$.  If $ \mathcal{C}=\varnothing $ (Lines $8-9$), we randomly choose $m$ different arms into $\mathcal{A}_t$  from $\mathcal{X}$, which is the trivial case and shown to occur infrequently in our analysis. 

Next we discuss  three non-trivial cases:
\begin{enumerate}[(a)]
	\item If $ |\mathcal{C}|=1 $, we are left with a single potential optimal arm $x_c$, which is hypothesized to be the optimal arm. We put the single arm  into $\mathcal{B}$ and  $\mathcal{A}_t$ (Lines $11-13$).
	\item If $ 1 <|\mathcal{C}| \leq m$, all potential optimal arms in $\mathcal{C}$ can be compared simultaneously. We simply put all of them into $\mathcal{A}_t$ (Lines $14-15$).
	\item If $|\mathcal{C}| > m$, we cannot put all potential optimal arms  into $\mathcal{A}_t$  at once. To choose $m$ different arms from $\mathcal{C}$,  if $\mathcal{B}$ is not empty, we give priority to the hypothesized optimal arm in $\mathcal{B}$ and choose the other arms uniformly at random. Otherwise, we uniformly and randomly choose $m$ different arms into $\mathcal{A}_t$  from $\mathcal{C}$ (Lines $16-22$).
\end{enumerate}

The key of MultiRUCB is to exploit as much information as possible from one pull to target $\mathcal{A}_t=\{x_1\}$.  $\mathcal{C}$ maintains a candidate pool for the potential optimal arms. When $\mathcal{C}$ contains multiple arms, which implies that the confidence region of some suboptimal arms are loose, we explore all of them simultaneously as possible. In the case this cannot be done, we wish to put optimal arm $x_1$ into $\mathcal{A}_t$. Thus, we give priority to the hypothesized optimal arm using the choice strategy define in Lines $18-22$. This is because $x_1$ is the most efficient arm to determine the sub-optimality of other arms.

The following theorem provides the  regret bound for MultiRUCB.

\begin{algorithm}[t]
	\caption{MultiRUCB}
	\label{algorithm_MultiRUCB}
	\LinesNumbered
	\KwIn{$\alpha > \frac{1}{2}$}
	$\mathbf{W}= [w_{ij}] \leftarrow \mathbf{0}_{K \times K}$\;
	$B \leftarrow \varnothing $\;
	\For{ $t=1,...,T$ }
	{
		$\mathbf{U}:= [u_{ij}] = \frac{\mathbf{W}}{\mathbf{W}+\mathbf{W}^T} +\sqrt{\frac{\alpha \ln t}{\mathbf{W}+\mathbf{W}^T}}$\; 
		$//$ $\mathtt{Element-wise}$ $\mathtt{operation;}$ $\frac{x}{0}:=1$ $\mathtt{for}$ $\mathtt{any}$ $x$\\
		$u_{ii} \leftarrow \frac{1}{2} $ for all $i \in \{1,...,K\}$\;
		$\mathcal{C} \leftarrow \{ x_c \ | \  u_{cj} \geq \frac{1}{2},  \ \forall j \in \{1,...,K\} \}$\;
		\If{$ \mathcal{C}=\varnothing $}
		{
			Randomly choose $m$ different arms for $\mathcal{A}_t$  from $\mathcal{X}$\; 
		}
		$\mathcal{B} \leftarrow \mathcal{B} \bigcap \mathcal{C}$\;
		\If{$ |\mathcal{C}|=1 $}
		{
			$\mathcal{B} \leftarrow \mathcal{C}$\;
			$\mathcal{A}_t \leftarrow \mathcal{C}$\;
		}
		\If{$ 1 <|\mathcal{C}| \leq m$}
		{
			$\mathcal{A}_t \leftarrow \mathcal{C}$\;
		}
		\If{$|\mathcal{C}| > m$}
		{	
			Choose $m$ different arms for $\mathcal{A}_t$  from $\mathcal{C}$  using the following strategy:\\
			\If{$\mathcal{B} = \varnothing $}
			{
				Uniformly choose $m$ different arms for $\mathcal{A}_t$  from $\mathcal{C}$\;
			}
			\Else
			{
				With probability of $\frac{1}{2}$, add $x_c \in \mathcal{B}$ into $\mathcal{A}_t$ and   uniformly add  $x_c \in \mathcal{C} \setminus \mathcal{B} $ into $\mathcal{A}_t$\;
				With probability of $\frac{1}{2}$,  uniformly choose $m$ different arms for $\mathcal{A}_t$  from $\mathcal{C} \setminus \mathcal{B}$\;
			}
		}
		Play $\mathcal{A}_t$ and observe all pairwise feedback in $\mathcal{A}_t$\;
		For any  pairwise feedback between   $x_j, x_k \in \mathcal{A}_t$, increment $w_{jk}$ or $w_{kj}$ depending on which arm wins\;
	}
\end{algorithm}

\begin{restatable}{theorem}{thmMultiRUCB}
	\label{theorem3_MultiRUCB}
	Consider a K-armed multi-dueling bandits game, where the number of comparing arms is at most $m$ at every time. Given $\alpha>1$, the expected regret of MultiRUCB  is bounded by
	\begin{align} 
	& \mathbb{E}[R_T] \leq  \left[ \left(\frac{2(4\alpha -1)K^2}{2\alpha -1} \right)^{\frac{1}{2\alpha -1}}  \frac{2\alpha -1}{\alpha -1} \right]\Delta_{max}  \nonumber
	\\
	& + \min \Bigg \{  D \Delta_{max}  \ln T,  \nonumber
	\\
	& \big(8+2D \ln 2D \big) \Delta_{max}  +  \frac{m+1}{m-1}\sum \limits_{i>1} \frac{4\alpha \Delta_{max}}{\Delta^2_i} \ln T \Bigg\},  \nonumber
	\end{align}
	where $D:=\sum \limits_{i>1} \frac{4\alpha}{\Delta^2_i}+ \sum \limits_{1<i<j} \frac{4\alpha}{C_m^2\Delta^2_{ij}}$  and $C_m^2:=\frac{m(m-1)}{2}$.
\end{restatable}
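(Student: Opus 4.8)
\emph{Proof proposal.} The plan is to adapt the relative-UCB analysis of \cite{Relative_upper_confidence_bound} to the subset-selection structure and to the averaged regret. The first step is to define, for each round $t$, the clean event $G_t := \{ u_{ij}(t) \ge p_{ij} \text{ for all } i,j\}$ on which every optimistic estimate is valid. Since $w_{ij}/(w_{ij}+w_{ji})$ is an average of independent $\mathrm{Bernoulli}(p_{ij})$ outcomes, a Chernoff--Hoeffding bound gives $\Pr[u_{ij}(t) < p_{ij}] \le t^{-2\alpha}$ for each fixed comparison count, and a union bound over the at most $K^2$ pairs and the at most $t$ possible counts yields $\Pr[\neg G_t] = O(K^2 t^{1-2\alpha})$. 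On $G_t$ we have $u_{1j} \ge p_{1j} \ge \tfrac12$ for every $j$, so $x_1 \in \mathcal{C}$; hence $\mathcal{C}=\varnothing$ (the random-exploration branch, Lines 8--9) and any singleton $\mathcal{C}\neq\{x_1\}$ can occur only on $\neg G_t$. Summing $\Delta_{max}\Pr[\neg G_t]$ over $t$, splitting at a threshold $t_0=\Theta\big((K^2/(2\alpha-1))^{1/(2\alpha-1)}\big)$ and using $\alpha>1$ for convergence of $\sum_t t^{1-2\alpha}$, produces the first, $T$-independent term of the bound. All remaining work is conditioned on $G_t$, where $|\mathcal{C}|=1$ forces $\mathcal{A}_t=\{x_1\}$ and so only rounds with $|\mathcal{C}|\ge 2$ contribute.

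The core estimate, shared by both halves of the $\min$, is a UCB-type cap on comparisons against $x_1$: on $G_t$, once $w_{i1}+w_{1i} > 4\alpha\ln T/\Delta_i^2$ one has $u_{i1}<\tfrac12$, so $x_i$ is expelled from $\mathcal{C}$, and the standard re-entry argument caps the total number of $x_i$-vs-$x_1$ comparisons by $4\alpha\ln T/\Delta_i^2$ (and, analogously, separates a suboptimal pair in $4\alpha\ln T/\Delta_{ij}^2$ comparisons). For the crude bound I would charge each round with $|\mathcal{C}|\ge2$ to the pairwise comparisons it produces: a played set of size $m$ yields up to $C_m^2$ simultaneous comparisons, so comparisons separating suboptimal pairs are harvested $C_m^2$ at a time (hence the $1/C_m^2$ weighting in $D$), while comparisons against $x_1$ are counted directly; summing the per-pair budgets bounds the number of positive-regret rounds by $D\ln T$, each carrying regret at most $\Delta_{max}$, giving $D\Delta_{max}\ln T$.

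For the $m$-sensitive bound I would split the horizon at the first time $\mathcal{C}$ becomes a singleton. Before it, $\mathcal{B}=\varnothing$, and I would bound the phase length $N$ by the comparison budget: on $G_t$, driving $|\mathcal{C}|$ down to a singleton forces a self-referential inequality $N \le D\ln(2N)$ (the $\ln$ coming from the radius $\sqrt{\alpha\ln t/\,\cdot\,}$ with $t\le N$), which solves to $N=O(D\ln 2D)$ and yields the additive $(8+2D\ln 2D)\Delta_{max}$ term, independent of $T$. On $G_t$ this singleton is $\{x_1\}$, after which $\mathcal{B}=\{x_1\}$ persists, since it is only intersected with $\mathcal{C}$ and $x_1\in\mathcal{C}$ throughout. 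In this steady state the priority rule makes $x_1$ co-occur with any selected $x_i$ with a fixed conditional probability, so the number of rounds in which $x_i$ is selected is a bounded multiple of the number of rounds in which it is directly compared with $x_1$; since each selection contributes only $\Delta_i/|\mathcal{A}_t|$ to the averaged regret and $\Delta_i\le\Delta_{max}$, combining this ratio with the cap $4\alpha\ln T/\Delta_i^2$ gives the coefficient $\tfrac{m+1}{m-1}$ on $\sum_{i>1}4\alpha\Delta_{max}\Delta_i^{-2}\ln T$. Taking the smaller of the two displayed expressions completes the argument.

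I expect the main obstacle to be the steady-state accounting: making the ``rounds-selected versus rounds-compared-with-$x_1$'' ratio rigorous requires controlling the randomness of the uniform subset draws conditioned on the random, time-varying set $\mathcal{C}$ and on $G_t$, which I would handle with a martingale/optional-stopping argument rather than a naive per-round expectation, while simultaneously ruling out that an expelled $x_i$ re-enters $\mathcal{C}$ often enough to violate the $4\alpha\ln T/\Delta_i^2$ cap. A secondary difficulty is the initial $\mathcal{B}=\varnothing$ phase, where the confidence radius depends on the current time $t\le N$, so the governing inequality must be solved in closed form to extract the $D\ln 2D$ constant.
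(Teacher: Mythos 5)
Your proposal is correct and follows essentially the same route as the paper: a confidence-failure term from the clean event (the paper integrates $C(\delta)$ over $\delta$ where you sum per-round failure probabilities), the crude $D\ln T$ bound by charging rounds to pairwise comparison budgets with suboptimal-pair comparisons harvested $C_m^2$ at a time, and the $m$-sensitive bound by splitting at the first singleton-$\mathcal{C}$ time (solving the same self-referential inequality to get $2D\ln 2D$) and then coupling case (c-2) occurrences to case (c-1) occurrences via the probability-$\tfrac12$ priority rule. The only cosmetic difference is that the paper realizes your ``steady-state accounting'' by dominating the inter-(c-1) gaps with i.i.d.\ geometric random variables rather than a martingale/optional-stopping argument, yielding the same $\frac{m+1}{m-1}$ coefficient.
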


\emph{Proof sketch.} 
(Please refer to  Section C of the supplementary material \cite{SupplementaryMaterial} for the full proof).

We see that after $ C(\delta):= \left(\frac{(4\alpha -1)K^2}{(2\alpha -1) \delta} \right)^{\frac{1}{2\alpha -1}}$ time-steps, any preference probability $p_{ij}$ ($x_i,x_j \in \mathcal{X}$) will lie in its estimated confidence interval with probability at least $1-\delta$ (Lemma 1 in \cite{Relative_upper_confidence_bound}).  
Thus, with probability at least $1-\delta$,  after $ C(\delta)$ time-steps, $x_1$ exists in $\mathcal{C}$ ($u_{1i} \geq p_{1i} \geq \frac{1}{2},\  \forall i$). In order to bound the regret after $ C(\delta)$ time-steps, it suffices to bound the number of times  cases (b) or (c) occurs. 
For ease of notation, we define two subcases  (c-1) and   (c-2) of case (c). They respectively refer to the two situations where $x_1$ is added to $\mathcal{A}_t$ and not.

We first bound the sum of the number of times case (b) and case (c-1) occur.  Let $\widetilde{N}_{1i}(t) \  (i>1)$ denote the number of dueling outcomes between $x_1$ and $x_i$ we have observed, between time $C(\delta)+1$ and $t$. 
After $ C(\delta)$ time-steps, every time  case (b) occurs, we can observe at least one  outcome of duel between $x_1$ and some $x_i \  (i>1)$ ($\sum \limits_{i>1}  \widetilde{N}_{1i}(t)$ increments by 1). 
Every time  case (c-1) occurs, we can observe outcomes of  $m-1$  duels between  $x_1$ and $x_i \  (i>1)$ ($\sum \limits_{i>1}  \widetilde{N}_{1i}(t)$ increments by $m-1$).  
According to the definition of $\mathcal{C}$, we can prove $\widetilde{N}_{1i}(t) \leq    \frac{4\alpha}{\Delta^2_i} \ln t $. Thus, taking a summation over $i>1$,  the total number of times case (b) and case (c-1) occur, between time $C(\delta)+1$ and $t$,  is bounded by $  \sum \limits_{i>1}  \widetilde{N}_{1i}(t) \leq \sum \limits_{i>1} \frac{4\alpha}{\Delta^2_i} \ln t$. 

Next we bound the number of times case (c-2) occurs. We use $\widetilde{N}_{ij}(t) \  (1<i<j)$ to denote the number of dueling outcomes between $x_i$ and  $x_j$ we have observed between time $C(\delta)+1$ and $t$. After $ C(\delta)$ time-steps, every time  case (c-2) occurs, we can observe outcomes of  $C_m^2$  duels between  $x_i$ and $x_j$  ($ x_i, x_j \in \mathcal{X} \setminus \{ x_1\}$, $x_i \neq x_j$), i.e.,  $\sum \limits_{1<i<j} \widetilde{N}_{ij}(t)$ increments by $C_m^2$. According to the definition of $\mathcal{C}$, we can prove $\widetilde{N}_{ij}(t) \leq    \frac{4\alpha}{\Delta^2_{ij}} \ln t  $, implying $ \sum \limits_{1<i<j} \widetilde{N}_{ij}(t) \leq   \sum \limits_{1<i<j} \frac{4\alpha}{\Delta^2_{ij}} \ln t$. 
Since each occurrence of case (c-2) increments  $ \sum \limits_{1<i<j} \widetilde{N}_{ij}(t)$ by $C_m^2$, the  number of times case  (c-2) occurs between time $C(\delta)+1$ and $t$ is bounded by $  \sum \limits_{1<i<j} \frac{4\alpha}{C_m^2 \Delta^2_{ij}} \ln t$. Therefore, we obtain the term $D \Delta_{max}  \ln T$ in \cref{theorem3_MultiRUCB}.

Another term $\big(8+2D \ln 2D \big) \Delta_{max} +  \frac{m+1}{m-1} \sum \limits_{i>1} \frac{4\alpha \Delta_{max}}{\Delta^2_i} \ln T $ in \cref{theorem3_MultiRUCB} can be obtained by exploiting a geometric distribution with success probability $\frac{1}{2}$, following the procedures in \cite{Relative_upper_confidence_bound}. 
Specifically, we first need to investigate  when  $\mathcal{B}$ is set.  Define $\widehat{T}_{\delta}$ as the smallest time satisfying
$
\widehat{T}_{\delta}> C(\frac{\delta}{2}) +  D \ln \widehat{T}_{\delta}
$
, where $\widehat{T}_{\delta}$ is guaranteed to exist because the left side of the inequality grows linearly with $\widehat{T}_{\delta}$ and the right side grows logarithmically. It is easy to prove $\widehat{T}_{\delta} \leq 2C(\frac{\delta}{2})+2D \ln 2D$.
According to the definition of $\widehat{T}_{\delta}$,  with probability  at least $1-\frac{\delta}{2}$, there exists a time $T_{\delta} \in (C(\frac{\delta}{2}), \widehat{T}_{\delta} ]$ when case (a) occurs. This implies that with probability  at least $1-\frac{\delta}{2}$, $\mathcal{B}$ has been set as $\mathcal{B} =\{x_1\}$ from time $T_{\delta}$ on. 

Then, we know that from  time $T_{\delta}$ on, if MultiRUCB carries out case (c), case(c-1) will occur with probability of $\frac{1}{2}$.
Let  $\widehat{N}^{b}(t)$, $\widehat{N}^{c}_1(t)$ and  $\widehat{N}^{c}_2(t)$ denote the number of times case (b), (c-1) and (c-2) occur between time $T_{\delta}+1$ and $t$, respectively.
We also introduce  two sets of random variables, $\{\tau_0,\tau_1,\tau_2,...\}$ and $\{n_1,n_2,...\}$. Define $\tau_0:=T_{\delta}$ and $\tau_l$ as the $l^{th}$ time case (c-1) occurs after time $T_{\delta}$.  Define $n_l$ as the number of times case (c-2) occurs between $\tau_{l-1}$ and $\tau_l$.
Similar to the above analysis, we can prove that  with probability at least $1-\frac{\delta}{2}$, between time $T_{\delta}+1$ and $t$, case (c-1) occurs at most $L^{c}_1(t):= \sum \limits_{i>1} \frac{4\alpha}{(m-1)\Delta^2_i} \ln t $ times. 
Moreover, with probability at least $1-\frac{\delta}{2}$,  for any time  $t>T_{\delta}$, if case (c-1) has occurred  $L^{c}_1(t)$ times, all suboptimal arms $x_i \  (i>1)$ satisfy $u_{i1}<\frac{1}{2}$ and  case (c-2) cannot occur.
Thus, we can bound $\widehat{N}^{c}_2(t)$ by $\sum \limits_{l=1} \limits^{L^{c}_1(t)} n_l$. 
Since $n_l$ counts the number of times it takes for case (c) to produce one case (c-1), we can use the conclusion about geometric random variables to bound $\sum \limits_{l=1} \limits^{L^{c}_1(t)} n_l$. Therefore, we have that with probability at least $1-\delta$, $\forall t> T_{\delta}$,
$
\widehat{N}^{c}_2(t) \leq \sum \limits_{l=1} \limits^{L^{c}_1(t)} n_l  \leq 2\sum \limits_{i>1} \frac{4\alpha}{(m-1)\Delta^2_i} \ln t +4 \ln  \frac{2}{\delta}
$.
Taking summation over $T_{\delta}$, $\widehat{N}^{b}(t)$, $\widehat{N}^{c}_1(t)$  and  $\widehat{N}^{c}_2(t)$, we obtain the term $\big(8+2D \ln 2D \big) \Delta_{max} +  \frac{m+1}{m-1} \sum \limits_{i>1} \frac{4\alpha \Delta_{max}}{\Delta^2_i} \ln T $ in \cref{theorem3_MultiRUCB}.

%
At last, integrating the confidence term with respect to $\delta$, we obtain the expected regret bound in \cref{theorem3_MultiRUCB}. $\Box$

\paragraph{Remark 3.} \cref{theorem3_MultiRUCB} suggests that  compared to the two-dueling bandit solutions,  MultiRUCB  has the same $O(\ln T)$ regret. 
However, by exploiting more information from one pull, the regret bound of MultiRUCB tightens as the comparison set size $m$ increases, which is unachievable through only repeating existing two-dueling bandit solutions.
This  implies that our extension of the algorithm and finite-time analysis from two-dueling to multi-dueling is non-trivial and useful. Moreover, to the best of our knowledge, MultiRUCB is the first algorithm providing a finite-time regret analysis  for multi-dueling bandits. 

\begin{figure*}[!h]
	\centering
	\includegraphics[width=0.55\textwidth]{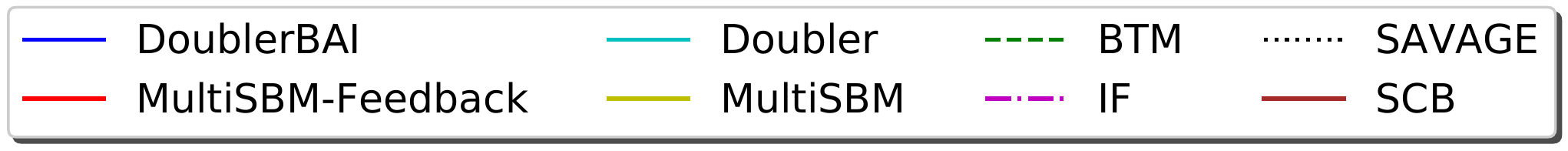}
	
	\subfigure[Synthetic, $K=48$, linear link function]{
		\includegraphics[height=0.22\textwidth]{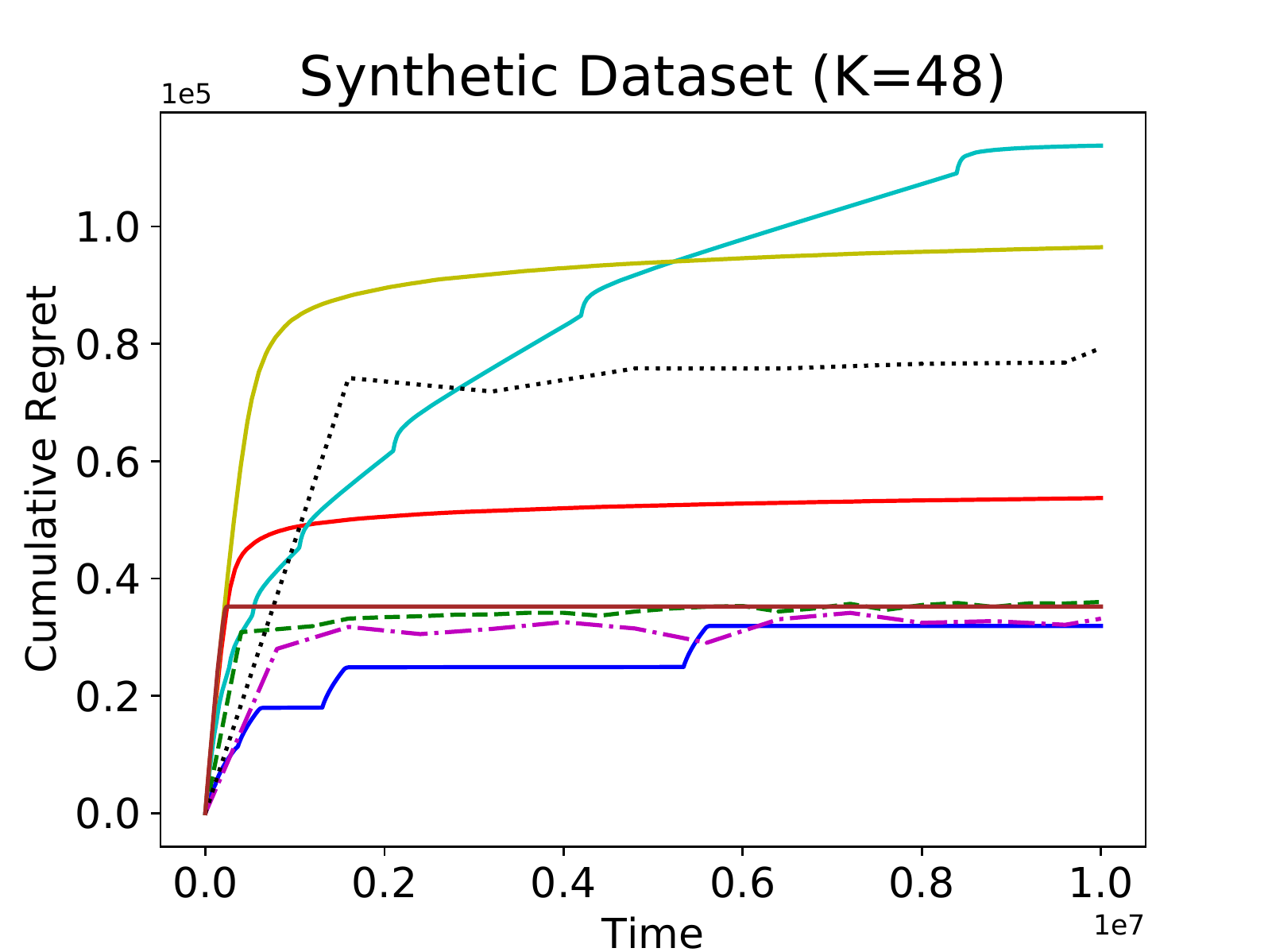}}
	\quad
	\subfigure[Synthetic, $K=48$, natural link function]{
		\includegraphics[height=0.22\textwidth]{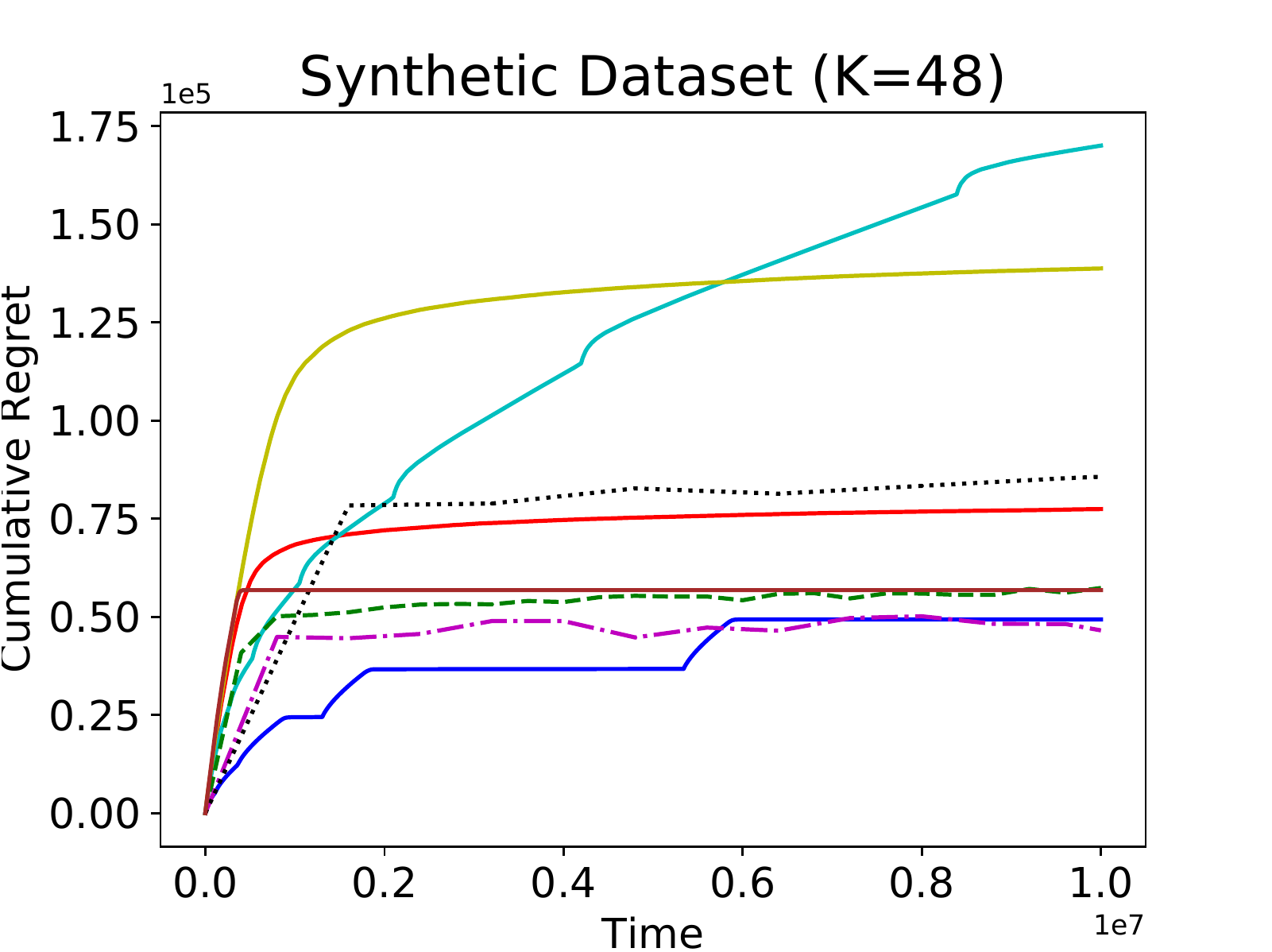}}
	\quad
	\subfigure[Synthetic, $K=48$, logit link function]{
		\includegraphics[height=0.22\textwidth]{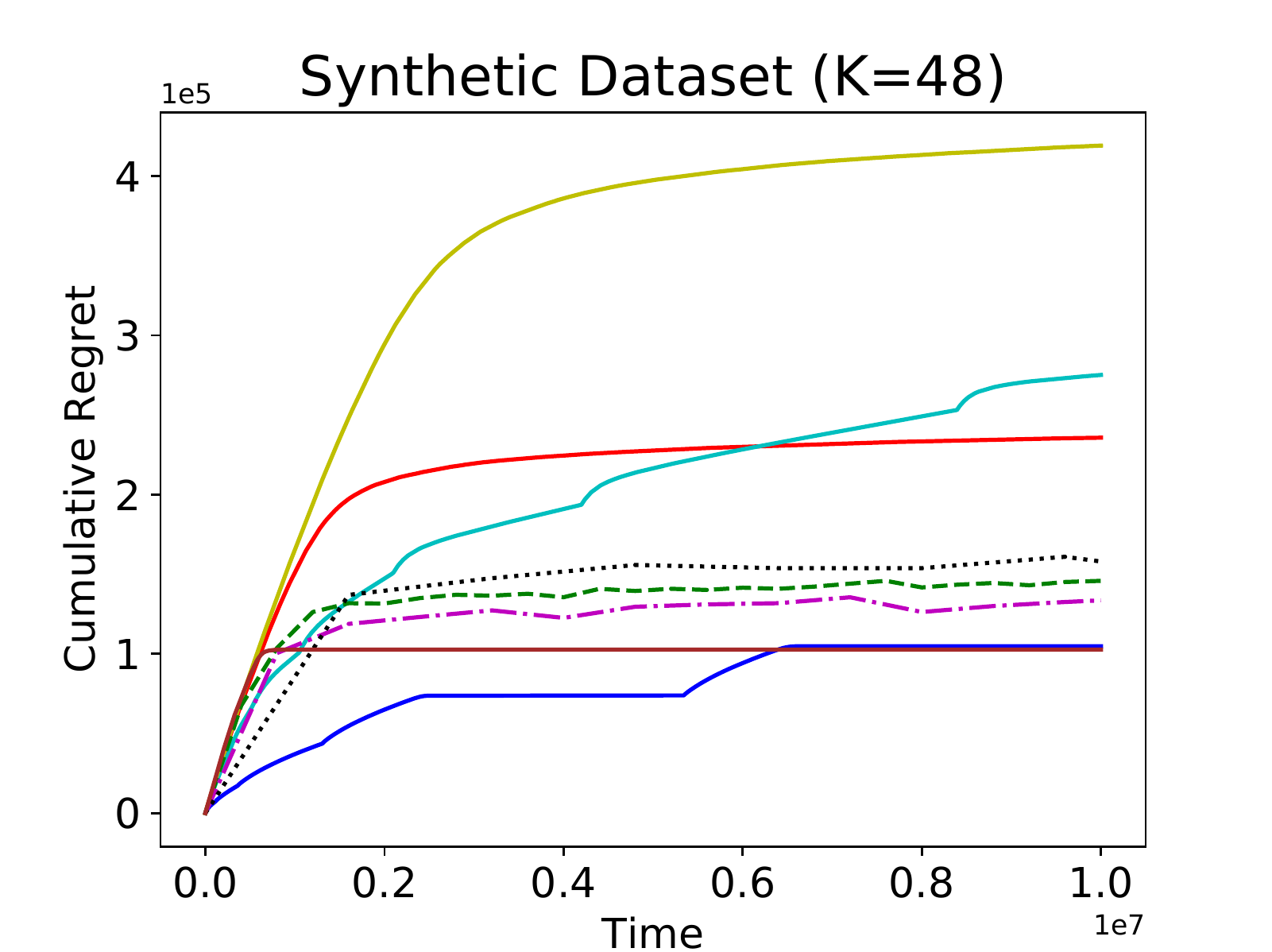}}
	
	\subfigure[Synthetic, $K=72$, linear link function]{
		\includegraphics[height=0.22\textwidth]{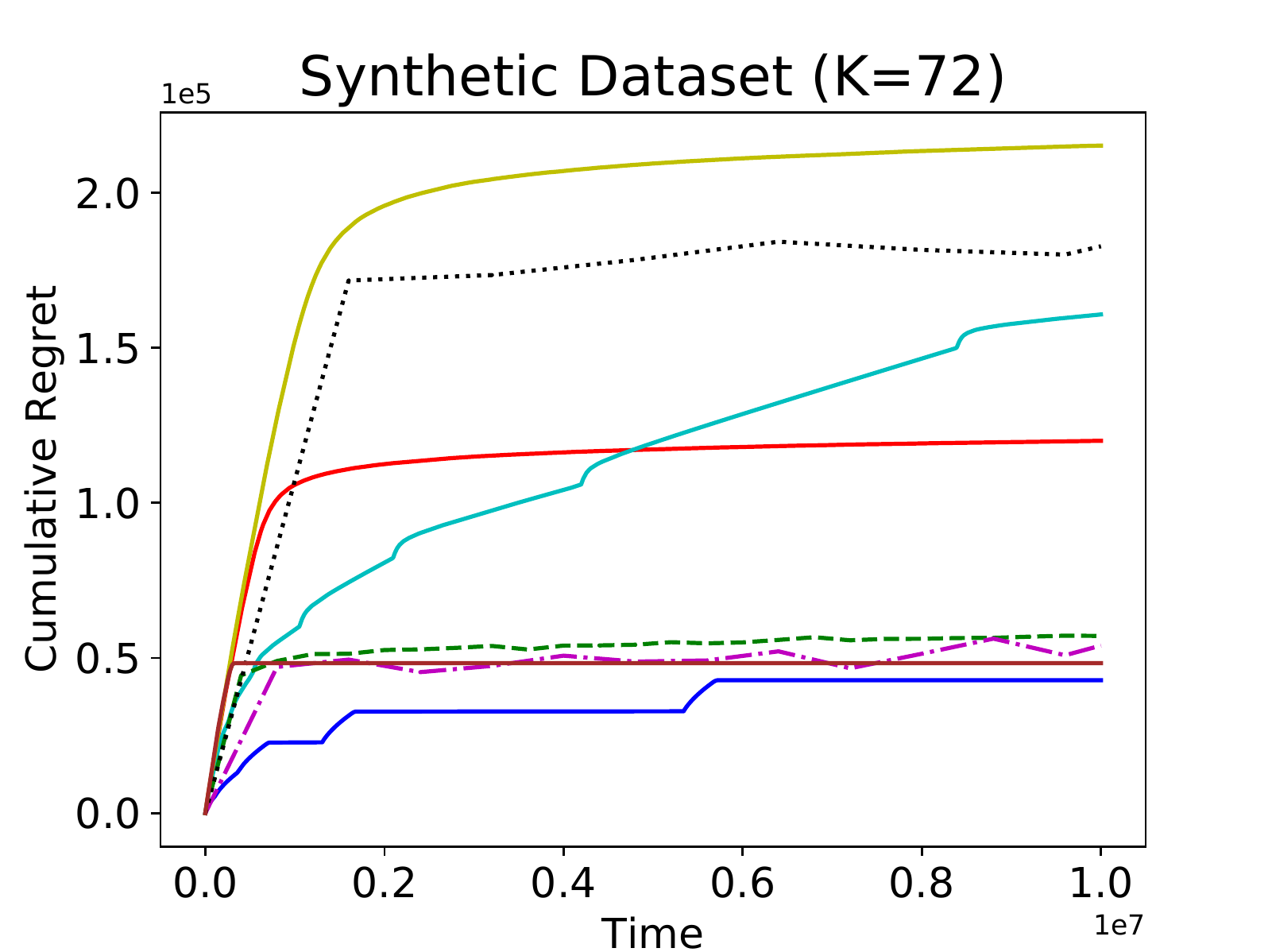}}
	\quad
	\subfigure[Synthetic, $K=72$, natural link function]{
		\includegraphics[height=0.22\textwidth]{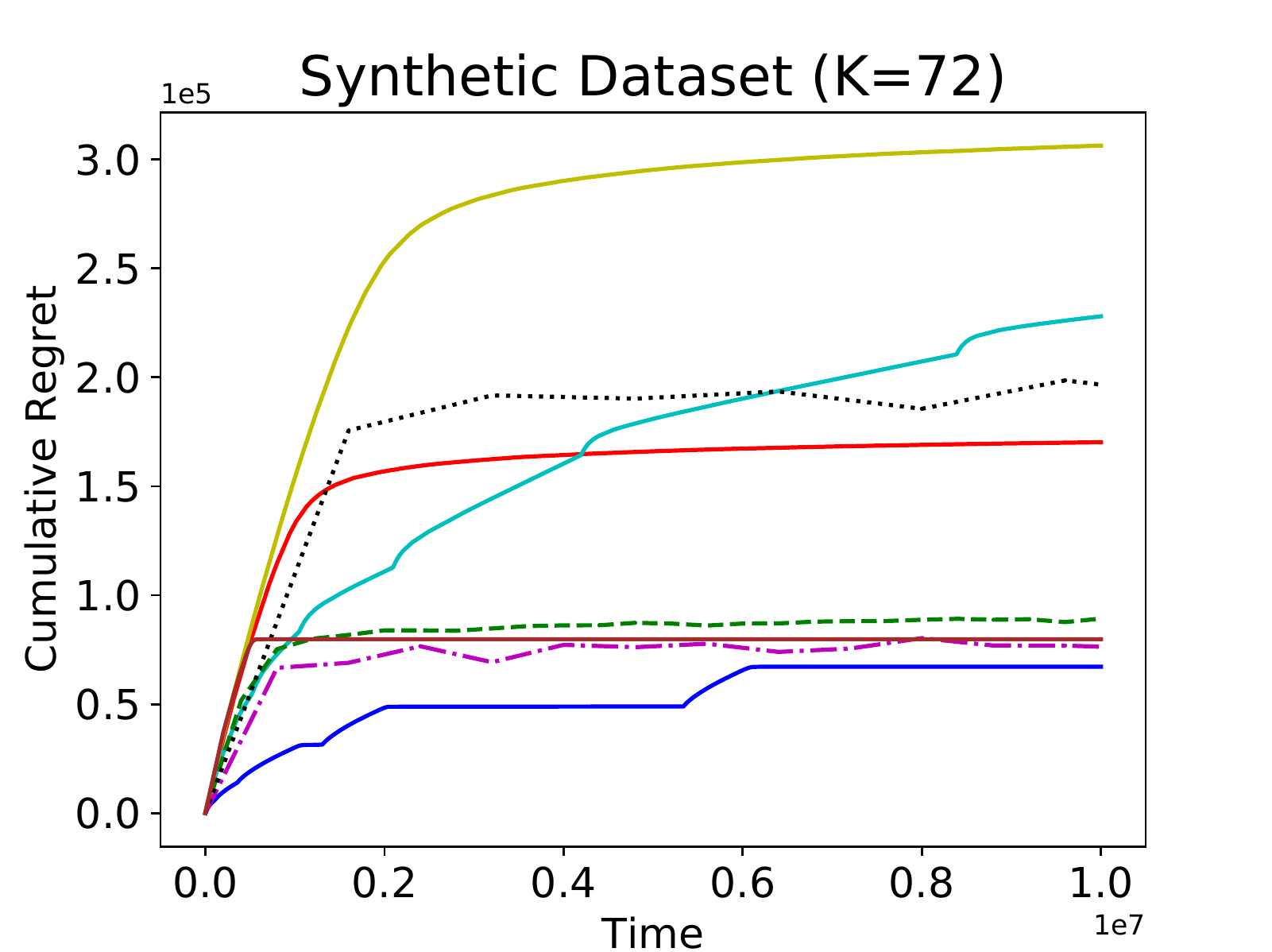}}
	\quad
	\subfigure[MSLR, $K=24$]{
		\includegraphics[height=0.22\textwidth]{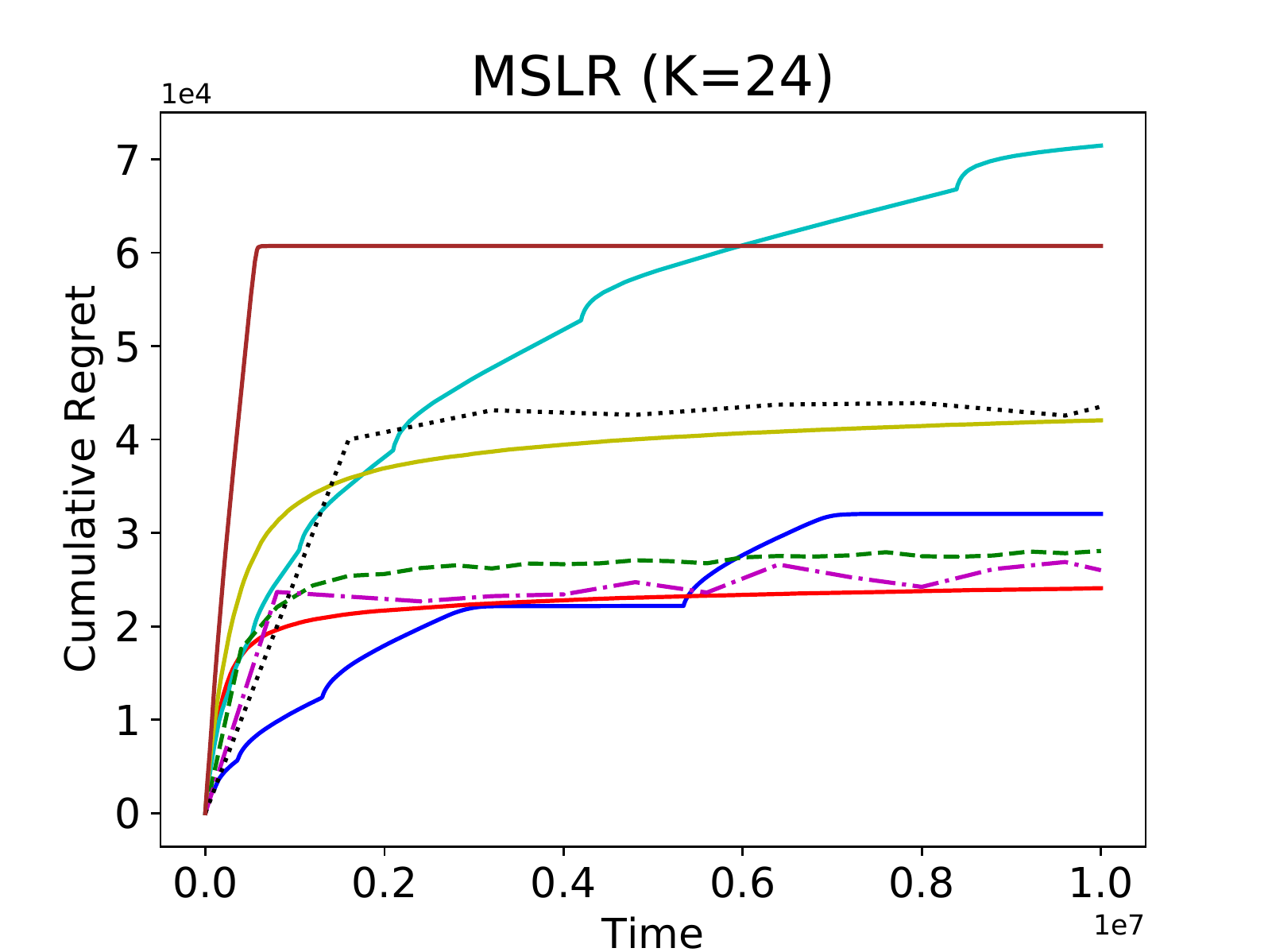}}
	
	\caption{Regret results of two-dueling bandits on the synthetic (a-e) and MSLR (f) dataset. }
	\label{fig_two-dueling}
\end{figure*}

\section{Extension of the Link Function}
\label{section_extension}
Our analysis of DoublerBAI and MultiSBM-Feedback assumes the linear link function $\phi \big(\mu(x_i), \mu(x_j) \big):= \frac{\mu(x_i)- \mu(x_j)+1}{2}$.  In this section, we generalize the linear link function to more general non-utility-based models in \cite{IF_JCSS2012}.

It can be verified that our analysis still holds when $\Delta(\cdot, \cdot)$ satisfies the following property:

\paragraph{Property 1.} For some $\gamma>0$ and any two arms $x_i,x_j \in \mathcal{X}$, 
\begin{eqnarray*}
	\Delta (x_1, x_i) \leq \gamma (\Delta(x_1,x_j)-\Delta(x_i,x_j)).
\end{eqnarray*}

This property holds for a wide family of $\Delta(\cdot, \cdot)$. 
The main idea is that our analysis  holds if  the regret in the dueling bandits problem can be bounded by the regret seen by the BAIM (in Doubler) and SBM (in MultiSBM-Feedback)  with some positive  $\gamma$.
The effect of $\gamma$ on the regret bound of DoublerBAI and MultiSBM-Feedback is shown in the following corollaries:

\begin{corollary}
	Consider a K-armed two-dueling bandits game, in which  $\Delta(\cdot, \cdot)$ satisfies Property 1 with parameter $\gamma$. 
	Assume that the BAIM $S$ in Line 1 of DoublerBAI  has a sample complexity of $O(H \ln(\frac{H}{\delta}))$, where $S$ outputs the best arm with probability at least $1-\delta$. Given an exponentially growing sequence $\{T_i\}_{i \in \mathbb{N}}$ of parameters $a,b>1$ (i.e., $T_i=  \lfloor  a^{b^i} \rfloor$), the expected regret of DoublerBAI is bounded by 
	\begin{align}
	\mathbb{E}[R_T] = & O((H \ln H)^b) +O(H \ln T)  \nonumber\\
	& + O(H \ln H \ln \ln T)+O(\ln \ln T), \nonumber
	\end{align}
	where $H:=\sum \limits_{i=2}^{K}\frac{1}{\Delta_i^2}$ is the problem complexity for a bandit instance.
\end{corollary}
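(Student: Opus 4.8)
The plan is to show that the proof of \cref{theorem1_DoublerBAI} carries over essentially verbatim, with Property 1 supplying the reduction that the linear-link identity supplied in the utility-based case. The single new ingredient is the observation that, when the left arm is held fixed at $\bar{x}_i$, the BAIM $S$ faces a standard best-arm-identification instance in which each arm $x_k$ has ``utility'' equal to its probability $p_{k\bar{x}_i}$ of beating $\bar{x}_i$, the target best arm is $x_1$, and the internal gap that $S$ sees for any arm $x_k$ is $p_{1\bar{x}_i}-p_{k\bar{x}_i}=\Delta(x_1,\bar{x}_i)-\Delta(x_k,\bar{x}_i)$. Applying Property 1 with its second argument set to $\bar{x}_i$ yields two facts at once: since $\Delta(x_1,x_k)\ge 0$ we get $\Delta(x_1,\bar{x}_i)\ge\Delta(x_k,\bar{x}_i)$ for every $k$, so $x_1$ is indeed the arm $S$ should return; and the dueling regret $\Delta(x_1,x_k)$ of a right arm $x_k$ is at most $\gamma$ times this internal gap.

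First I would bound the regret contributed by the right arm $y_t$. The crucial simplification is that, once $S$ has correctly returned $x_1$ in some epoch (which happens from epoch $1$ on with high probability), the left arm is fixed at $\bar{x}_i=x_1$, and then the internal gaps coincide \emph{exactly} with the dueling gaps, since $\Delta(x_1,x_1)-\Delta(x_k,x_1)=\Delta(x_1,x_k)=\Delta_k$. Hence in every steady-state epoch the BAIM's problem complexity is precisely $H=\sum_{i>1}1/\Delta_i^2$, its exploration regret is $B(1/\tau_{i+1})$, and the dueling regret of the right arm equals this quantity (the factor $\gamma$ does not even appear). The explore-then-exploit bound $(1-1/\tau_{i+1})B(1/\tau_{i+1})+(1/\tau_{i+1})O(\tau_i)$ from \cref{theorem1_DoublerBAI} therefore applies unchanged, and summing over the $O(\ln\ln T)$ epochs reproduces the main term $O(H\ln T)$ together with the $O(H\ln H\ln\ln T)$ and $O(\ln\ln T)$ lower-order terms.

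Next I would treat the left arm and the transient phase. Exactly as in \cref{theorem1_DoublerBAI}, once $\bar{x}_i=x_1$ the left arm incurs zero dueling regret, while the rare event that $S$ fails in an epoch (probability at most $1/\tau_i$) contributes $\frac{1}{\tau_i}O(\tau_i)$, and the inability to identify $x_1$ in the early, short epochs is what produces the problem-dependent term $O((H\ln H)^b)$. The only place $\gamma$ enters is epoch $0$ and any failed early epoch, where $\bar{x}_i\neq x_1$: there Property 1 gives $\Delta(x_1,\bar{x}_i)-\Delta(x_k,\bar{x}_i)\ge \Delta_k/\gamma$, so the BAIM's complexity is at most $\gamma^2H$ and the right-arm dueling regret is at most $\gamma$ times its internal regret. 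Because these epochs are short and constant in number, their total contribution is again $O((H\ln H)^b)$; since $\gamma$ is a fixed constant of the instance, every $\gamma$-dependence is absorbed into the big-$O$ constants and the four-term bound of the statement follows.

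The main obstacle is the reduction itself: one must verify that Property 1 simultaneously certifies that $x_1$ is the arm $S$ targets for an \emph{arbitrary} fixed left arm $\bar{x}_i$ and that the dueling regret is controlled by the BAIM's internal regret up to the constant $\gamma$, and then confirm that the complexity appearing in the sample-complexity hypothesis is the correct one---it is exactly $H$ in the steady state where $\bar{x}_i=x_1$, and at most $\gamma^2H$ in the transient. Once this translation is in place, the remainder is precisely the epoch-by-epoch bookkeeping already carried out for \cref{theorem1_DoublerBAI}.
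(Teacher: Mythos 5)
Your proposal is correct and matches the paper's own (very brief) justification: the paper proves the corollary only by remarking that the Theorem~1 analysis carries over because Property~1 lets the dueling regret be bounded by the BAIM's internal regret up to the constant $\gamma$, and that $\gamma$ vanishes once the left arm is $x_1$ (since $\Delta(x_1,x_1)-\Delta(x_k,x_1)=\Delta_k$ exactly). Your elaboration---$x_1$ remains the BAIM's target for any fixed left arm, internal gaps are at least $\Delta_k/\gamma$ so the complexity is at most $\gamma^2H$ in the transient, and all $\gamma$-dependence is absorbed into the $O((H\ln H)^b)$ and big-$O$ constants---is exactly the intended argument.
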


\begin{corollary}
	Consider a K-armed two-dueling bandits game, in which  $\Delta(\cdot, \cdot)$ satisfies Property 1 with parameter $\gamma$. 
	The expected regret of MultiSBM-Feedback, which implements an SBM defined in  Algorithm \ref{algorithm_UCB_Feedback},  is bounded by
	\begin{align*}
	& \mathbb{E}[R_T] 
	\leq  \min \left \{ \sum \limits_{i > 1} \frac{(\alpha+2)\Delta_{max} }{\Delta_i^2} \ln T, \  \sum \limits_{i > 1} \frac{2(\alpha+2)}{\Delta_i} \ln T \right \} +
	\nonumber
	\\
	& \! \frac{(\alpha+8)\Delta_{max}}{2\alpha}K  +\sum \limits_{j>1} \sum \limits_{i>1} O\Big( \frac{\gamma \alpha \Delta_{max}}{\Delta_j^2} \big( \ln\ln T + \ln K + \ln(\frac{1}{\Delta_i}) \big) \Big),
	\nonumber
	\end{align*}
	where  $\Delta_{max}:=\max \limits_{i>1}\Delta_{i}$ and the confidence interval parameter   $\alpha=\max\{3,\frac{\ln K}{\ln \ln T} \}$.
\end{corollary}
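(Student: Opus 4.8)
The plan is to retrace the proof of \cref{theorem2_MultiSBM-Feedback} almost verbatim, isolating the one place where the linear form of $\phi$ was used and replacing that single step by an appeal to Property 1. Recall that in MultiSBM-Feedback the right arm $y_t$ becomes the left arm $x_{t+1}$ of the next round, so the whole regret is controlled by the number of suboptimal right-arm advances, split according to whether the advancing machine is $S_{x_1}$ (indexed by the optimal arm) or some $S_x$ with $x\neq x_1$. The quantity that matters is the per-advance dueling regret $\Delta(x_1,y_t)=\Delta_{y_t}$, and I would compare it with the internal regret that the advancing machine $S_{x_t}$ sees, namely the gap $p_{1,x_t}-p_{y_t,x_t}=\Delta(x_1,x_t)-\Delta(y_t,x_t)$ between advancing $x_1$ (the best arm of $S_{x_t}$) and advancing $y_t$.

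The key observation is that Property 1, taken with the right arm $y_t$ in place of $x_i$ and the left arm $x_t$ in place of $x_j$, reads exactly
\[
\Delta(x_1,y_t)\;\leq\;\gamma\big(\Delta(x_1,x_t)-\Delta(y_t,x_t)\big),
\]
i.e. the dueling regret of the right arm is at most $\gamma$ times the internal regret seen by $S_{x_t}$. This is the only substitution needed: wherever the proof of \cref{theorem2_MultiSBM-Feedback} invoked the linear identity ``dueling regret of the right arm $=$ SBM internal gap'', I would insert this inequality and carry a factor $\gamma$. I would treat the regime $x_t=x_1$ separately: using $\Delta(x_1,x_1)=0$ and the antisymmetry $\Delta(y_t,x_1)=-\Delta(x_1,y_t)$, the internal gap of $S_{x_1}$ equals $\Delta(x_1,y_t)$ \emph{exactly}, so there the conversion is lossless. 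This is precisely why the leading term, which is produced entirely by suboptimal advances inside $S_{x_1}$ and whose constant is halved through the additional-feedback identity $\sum_{i>1}\rho_i(t)=\sum_{i>1}s_i(t)$, remains free of $\gamma$.

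For the lower-order part I would keep the combinatorial counting of \cref{theorem2_MultiSBM-Feedback} unchanged: each suboptimal $x\neq x_1$ becomes the left arm $O(K\ln T)$ times, so the internal timer of $S_x$ is of order $K\ln T$ and, by the $\alpha$-robustness of the UCB implementation, the number of suboptimal right-arm advances inside $S_x$ carries the $\ln\ln T+\ln K+\ln(1/\Delta_i)$ factor. Charging every such advance its dueling regret and bounding it through the displayed inequality multiplies this whole contribution by $\gamma$, which is exactly the stated term $\sum_{j>1}\sum_{i>1}O\big(\gamma\alpha\Delta_{max}\Delta_j^{-2}(\ln\ln T+\ln K+\ln(1/\Delta_i))\big)$. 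Everything else in the argument --- the $\alpha$-robust deviation bounds, the independence of the Feedback and AdditionalFeedback samples (so that Chernoff--Hoeffding still applies), and the tail estimates on how often each arm is the left arm --- never references $\phi$ and therefore transfers without change.

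The step I expect to require the most care is the bookkeeping that keeps $\gamma$ out of the leading term. One must confirm that (i) $x_1$ is the best arm of every machine $S_x$, which in fact follows from Property 1 itself together with $\Delta(x_1,a)\geq 0$: applying it with $x_i=a$, $x_j=x$ forces $\Delta(x_1,x)-\Delta(a,x)\geq\gamma^{-1}\Delta(x_1,a)\geq 0$; and (ii) that the regret-to-gap conversion is lossless \emph{precisely} when $x_t=x_1$, via the direct identity rather than via Property 1 (whose $x_t=x_1$ instance is the vacuous $\Delta(x_1,y_t)\leq\gamma\Delta(x_1,y_t)$). Only after cleanly separating these two regimes can the $S_{x_1}$ term be certified $\gamma$-free while the $S_x$ ($x\neq x_1$) contributions absorb the factor $\gamma$; conflating them would either introduce a spurious $\gamma$ into the leading term or forfeit the constant-halving obtained from the additional feedback.
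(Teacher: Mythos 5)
Your proposal is correct and follows essentially the same route as the paper, which justifies this corollary only by the remark that the dueling regret is bounded by $\gamma$ times the internal regret seen by each SBM, and that for $S_{x_1}$ the conversion is the exact identity $p_{x_1,x_1}-p_{y,x_1}=\Delta(x_1,y)$ so the leading $\min\{\cdot,\cdot\}$ term and the $K$-term stay $\gamma$-free while only the $S_x$ $(x\neq x_1)$ contributions absorb the factor $\gamma$. You are in fact more explicit than the paper on the two points that make this work --- verifying via Property 1 that $x_1$ is the optimal arm of every machine $S_x$, and separating the lossless $x_t=x_1$ regime from the $\gamma$-lossy one --- so the proposal matches and slightly sharpens the paper's own argument.
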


Note that $\gamma$ does not affect the regret bound of DoublerBAI and the main  term in the regret bound of MultiSBM-Feedback. This is because when fixing $x_j=x_1$ in Property 1, $\gamma$ just vanishes and does not affect our analysis.

\begin{figure*}[!h]
	\centering
	\hspace*{-2em}
	\subfigure[Synthetic, $K=48$, $m=8$]{
		\includegraphics[height=0.24\textwidth]{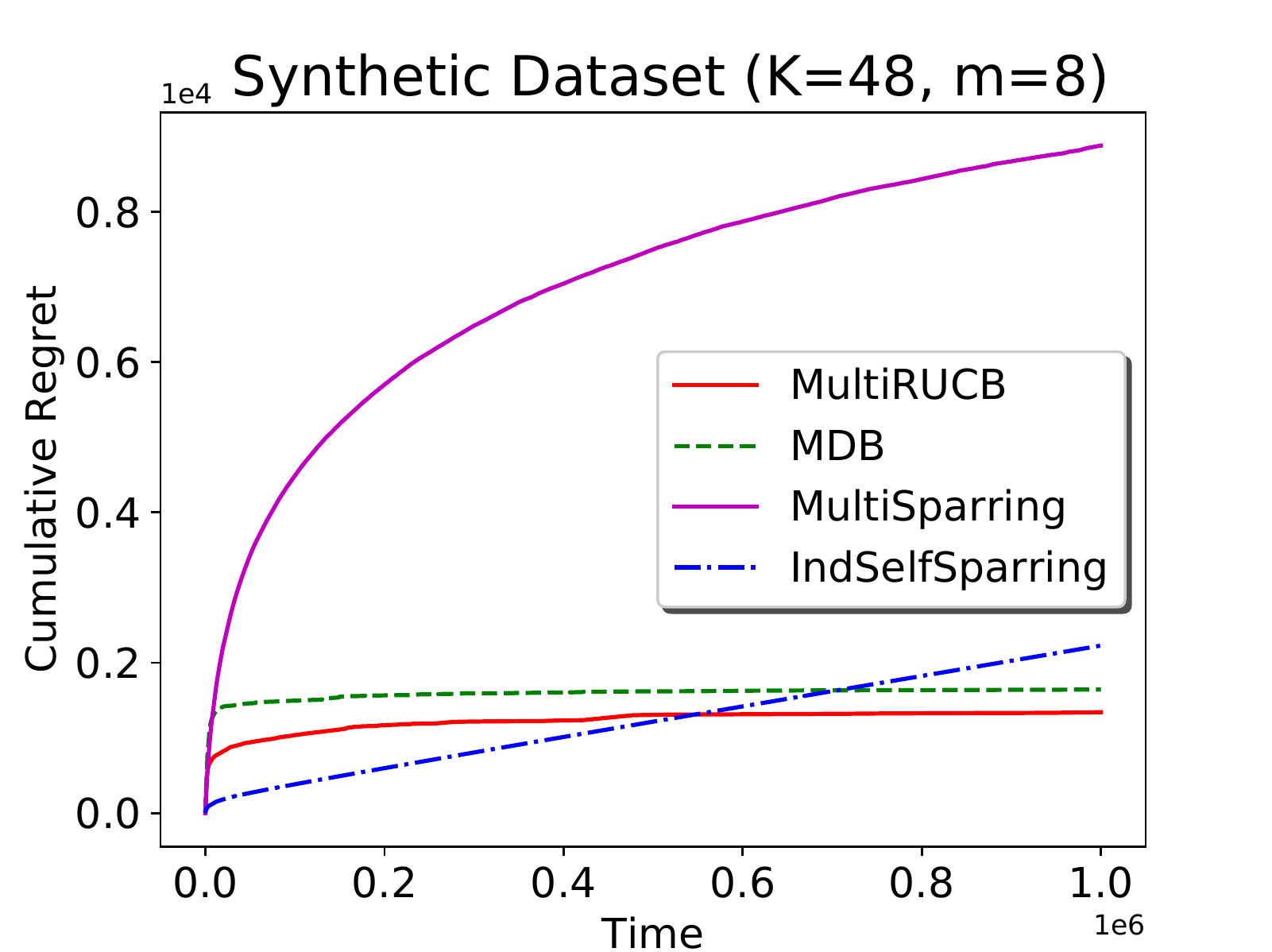}}
	\hspace*{2em}
	\subfigure[Synthetic, $K=48$, $m=16$]{
		\includegraphics[height=0.24\textwidth]{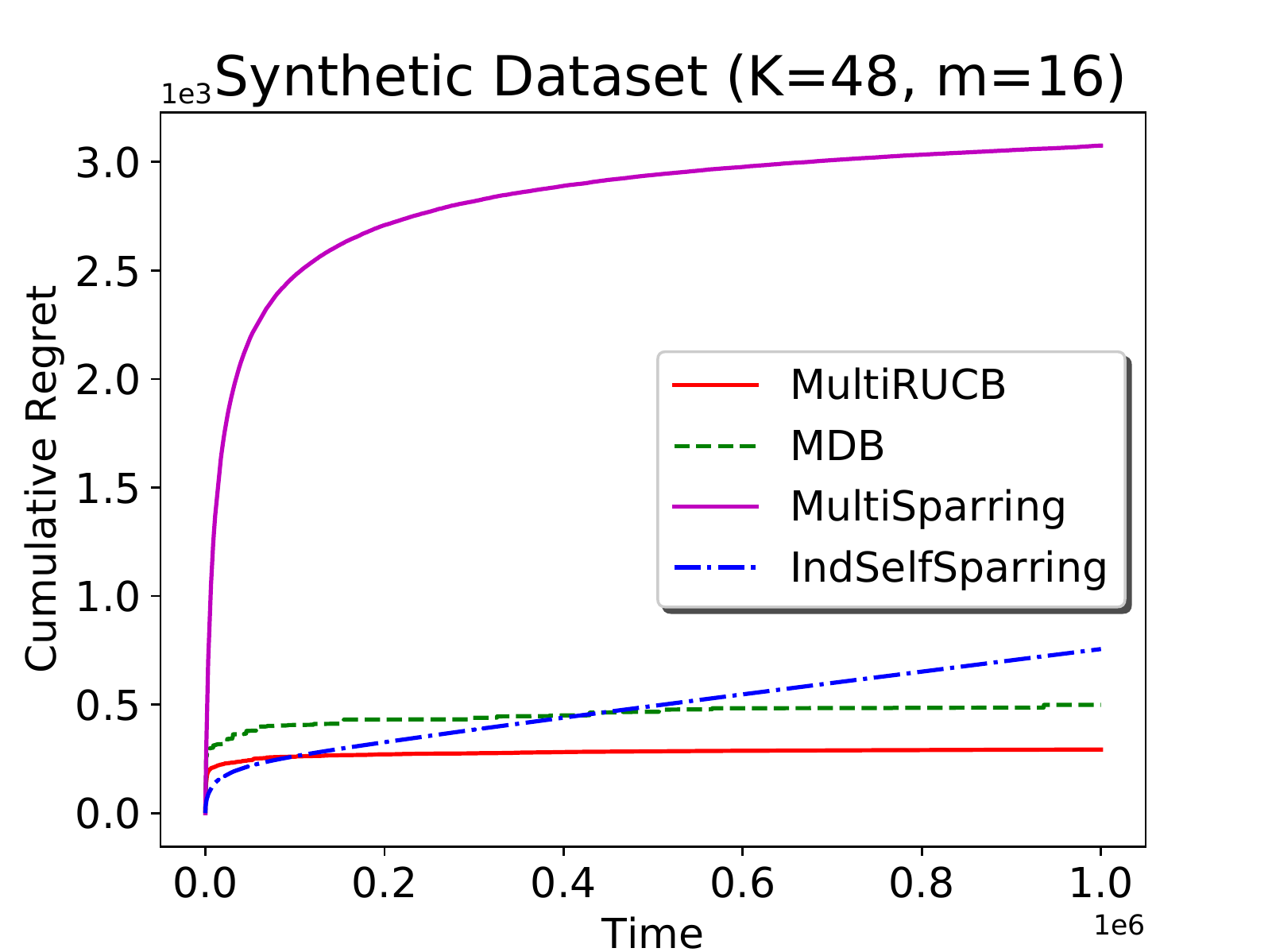}}
	
	\hspace*{-2em}
	\subfigure[MSLR, $K=24$, $m=8$]{
		\includegraphics[height=0.24\textwidth]{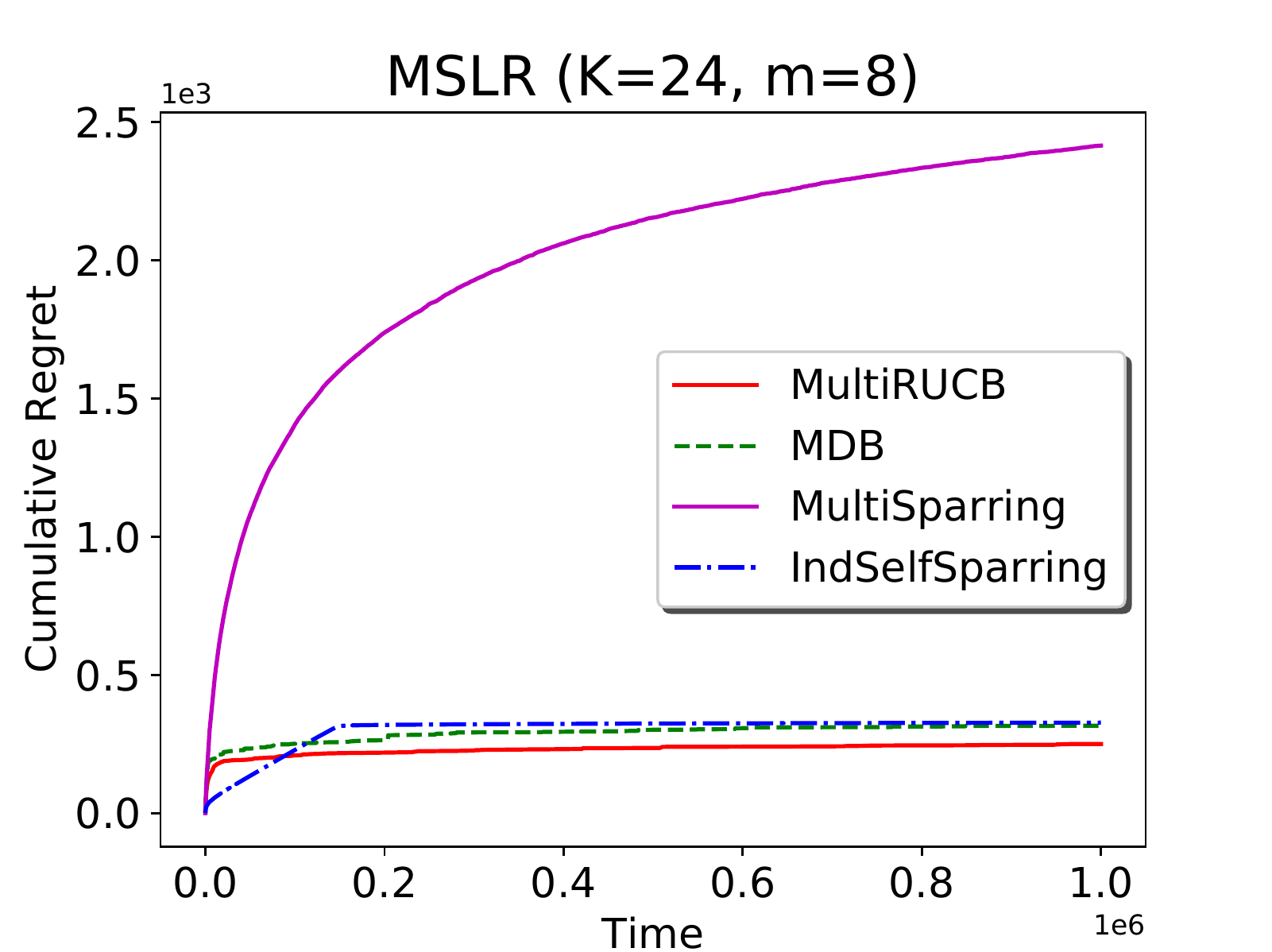}}
	\hspace*{2em}
	\subfigure[MSLR, $K=24$, $m=16$]{
		\includegraphics[height=0.24\textwidth]{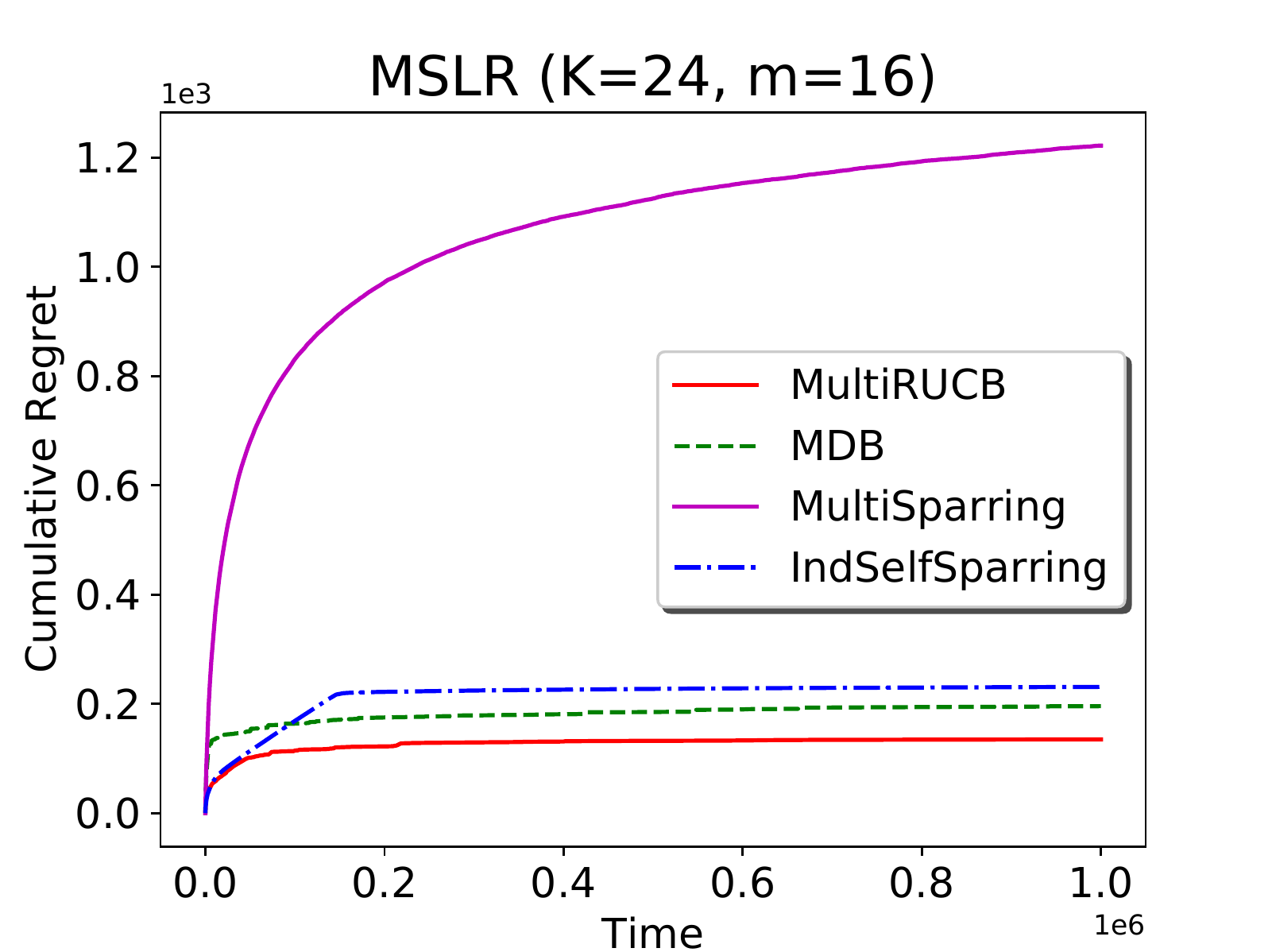}}
	\caption{Regret results of multi-dueling bandits on the synthetic (a-b) and MSLR (c-d) dataset. The results are obtained using the linear link function.}
	\label{fig_multi-dueling}
\end{figure*}

\section{Experiments} \label{section_experiments}

We conduct  experiments for two-dueling bandits and multi-dueling bandits on both the synthetic and real-world datasets.
In our synthetic datasets, the expected utilities of $K$ arms are set as $\mu(x_1)=0.8$ and  $\mu(x_2), ..., \mu(x_{K})$ forming a geometric sequence with  $\mu(x_2)=0.7, \mu(x_K)=0.2$. Moreover, besides the linear link function, we also conduct experiments for two additional link functions, natural  and logit, which are  respectively defined as follows:
\begin{align}
&\phi_{\mathtt{natural}} \big(\mu(x_i), \mu(x_j) \big):= \frac{\mu(x_i)}{\mu(x_i) + \mu(x_j)}, \nonumber
\\
&\phi_{\mathtt{logit}} \big(\mu(x_i), \mu(x_j) \big):= \frac{1}{1+\exp \big( \mu(x_j)-\mu(x_i) \big) }. \nonumber
\end{align}
For the real-world dataset, we use the Microsoft Learning to Rank (MSLR) dataset \cite{Introducing_LETOR_Datasets} in information  retrieval, which contains query-document pairs labeled with relevance scores. Our  setup follows that of  \cite{Copeland_dueling_bandits}, which estimates a preference matrix for $136$ rankers. Each ranker can be regarded as an arm in  dueling bandits since it is a function mapping a query of the user to a document ranking. We use a  submatrix of $24$ rankers selected from the full preference matrix. We remark here that the choice of $K=24$ here is made to ensure a total order of the chosen arms (since their relations are obtained from a preference matrix), such that the existence of an optimal arm is guaranteed. 
The presented results are averaged over 50 independent runs for each algorithm.

\subsection{Two-dueling Bandits Experiments}
For the special case of our general setting, i.e., two-dueling bandits, we compare  DoublerBAI and MultiSBM-Feedback with their baselines Doubler and MultiSBM \cite{Reducing_dueling_bandits}, and other state-of-the-art algorithms including IF \cite{IF_JCSS2012}, BTM \cite{btm}, SAVAGE \cite{savage} and SCB in \cite{Copeland_dueling_bandits}. For DoublerBAI, we choose the  LUCB algorithm in  \cite{lucb} as the BAIM, and set parameters $a=10, \  b=1.1$. For the finite-horizon algorithms, IF, BTM and SAVAGE, we obtain each point of their regret curves by resetting the horizon to the corresponding time value.
As shown in \cref{fig_two-dueling}, DoublerBAI and MultiSBM-Feedback not only achieve significant improvements  over their baselines, Doubler and MultiSBM, but also outperform the other state-of-the-art algorithms. In particular,   compared to MultiSBM, MultiSBM-Feedback reduces the regret  by approximately a half, which matches our theoretical analysis.

\subsection{Multi-dueling Bandits Experiments} \label{section_multiexp}
For the general multi-dueling bandit setting, we compare MultiRUCB with three state-of-the-art algorithms including MDB \cite{Multi-dueling_bandits_and_their_application}, IndSelfSparring \cite{Multi-dueling_Bandits_with_Dependent_Arms} and MultiSparring (the multi-dueling extension of Sparring \cite{Reducing_dueling_bandits}).  \cref{fig_multi-dueling} plots the average cumulative regrets for 50 independent runs in the cases $m=8$ and $m=16$\footnote{The results are similar for other $m$ values.}.  In addition, the variances of cumulative regrets at the $10^6$ timestep corresponding to \cref{fig_multi-dueling} (a-d) are also presented in Section D of the supplementary material \cite{SupplementaryMaterial} due to the space limit.
The experimental results show that our MultiRUCB not only achieves the best regret performance, but also ensures the smallest variances among all the compared algorithms on both the synthetic and MSLR dataset. This demonstrates the  superiority of  MultiRUCB in practice, compared to existing algorithms for multi-dueling bandits. 
Moreover, among all the compared algorithms, MultiRUCB is the only algorithm possessing a finite-time analysis.


\section{Conclusion}
In this work, we study a general multi-dueling bandit problem, which has extensive real-world applications involving simultaneous duels of multiple options. 
For the special case of our setting, two-dueling bandits, we propose two efficient algorithms DoublerBAI and MultiSBM-Feedback, both achieving $O(\ln T)$ regret and outperforming existing algorithms. 
For the general  multi-dueling bandits, we propose MultiRUCB and provide the first finite-time analysis for the problem. We prove that MultiRUCB achieves an  $O(\ln T)$ regret. We also show that its regret improves as  the capacity of  the comparison set  increases. 
%
Our experimental results based on both synthetic and real-world datasets demonstrate the performance superiority of our algorithms, compared to other state-of-the-art algorithms.


\section*{Acknowledgments}
The work is supported in part by the National Natural Science Foundation of China Grant 61672316, the Zhongguancun Haihua Institute for Frontier Information Technology and the Turing AI Institute of Nanjing.


\clearpage
\bibliographystyle{ACM-Reference-Format}
\balance
\bibliography{reference_aamas2020_camready} 

\clearpage
\section*{A \quad  Proof of Theorem \ref{theorem1_DoublerBAI}}

\thmdoubler*

\begin{proof}
	For convenience, we recall the notation from the proof sketch of \cref{theorem1_DoublerBAI}.  $B(\delta)$ denotes the supremum of the expected regret of the BAIM $S$ to identify the best arm with probability at least $1-\delta$. We also denote the sample complexity of the BAIM $S$ to identify the best arm with probability at least $1-\delta$ by $\widetilde{B}(\delta)$. Because each sample on arm $x_i \in \mathcal{X}$ incurs regret $\Delta_{i} \leq1 $, $B(\delta)$ and $\widetilde{B}(\delta)$ have the same order of $O(H \ln(\frac{H}{\delta}))$ \cite{lucb}. Thus, we can assume  $B(\delta) = c_1 H \ln(\frac{H}{\delta})$ and $\widetilde{B}(\delta)= c_2 H \ln(\frac{H}{\delta})$, where  $c_2 \geq c_1 \geq 1$. Let $R_t^{\textup{left}}$ and $R_t^{\textup{right}}$ denote the regret incurred by the left and right arm of the played two arms $(x_t, y_t)$ at time-step $t$, respectively .
	
	We firstly consider $R_t^{\textup{right}}$ of the exploration stage. 
	From the definition of regret in the dueling bandits problem, we have that
	\begin{align}
	&\mathbb{E}[\sum \limits_{t=1} \limits^{\tau_i^{\textup{explore}}} R_t^{\textup{right}}]\nonumber \\
	= & \mathbb{E}[\sum \limits_{t=1} \limits^{\tau_i^{\textup{explore}}}\frac{\mu(x_1)-\mu(y_t)}{2}]   \nonumber
	\\
	\label{eq_1122}= & \mathbb{E}[\sum \limits_{t=1} \limits^{\tau_i^{\textup{explore}}}( \frac{\mu(x_1)-\mu(\bar{x}_i)+1}{2} - \frac{\mu(y_t)-\mu(\bar{x}_i)+1}{2} )] 
	\\
	\label{eq_1123}\leq & B(\frac{1}{\tau_{i+1}}).  
	\end{align}
	The last inequality holds because  \cref{eq_1122} is the regret of the  BAI game in epoch $i$, and \cref{eq_1123} is a upper bound of that regret.
	
	In the stage of exploitation, the right arm is simply chosen as the identified  best arm of the exploration stage, with error probability at most $\frac{1}{\tau_{i+1}}$. 
	
	Thus, we can bound $\mathbb{E}[\sum \limits_{t=1} \limits^{\tau_i } R_t^{\textup{right}}]$ by
	\begin{align}
	\mathbb{E}[\sum \limits_{t=1} \limits^{\tau_i } R_t^{\textup{right}}]& \leq  (1-\frac{1}{\tau_{i+1}})  B(\frac{1}{\tau_{i+1}}) + \frac{1}{\tau_{i+1}} O(\tau_i)  \nonumber
	\\
	&\leq  B(\frac{1}{\tau_{i+1}}) + O(1). \nonumber
	\end{align} 
	
	Now we consider $R_t^{\textup{left}}$. In epoch $i$, the left arm $\bar{x}_i$ is chosen in two cases: (i) $\bar{x}_i$ is chosen as the identified best arm $\hat{x}_{i-1}$ of epoch $i-1$ when $S$ terminates and returns $\hat{x}_{i-1}$ in epoch $i-1$, and (ii) $\bar{x}_i$ is randomly chosen from $\mathcal{X}$ otherwise.

	In case (i), the probability of $\hat{x}_{i-1} \neq x_1$ is at most $\frac{1}{\tau_{i}}$, and thus we have 
	\begin{align}
	\mathbb{E}[\sum \limits_{t=1} \limits^{\tau_i } R_t^{\textup{left}}] \leq \frac{1}{\tau_{i}} \cdot O(\tau_i) + (1-\frac{1}{\tau_{i}}) \cdot 0 = O(1). \nonumber
	\end{align}

	In case (ii), we simply have
	\begin{align}
	\mathbb{E}[\sum \limits_{t=1} \limits^{\tau_i } R_t^{\textup{left}}] = O(\tau_i). \nonumber
	\end{align}
	
	Thus, in case (i), the regret is upper bounded by $B(\frac{1}{\tau_{i+1}}) + O(1)$. In case (ii), the regret is upper bounded by $O(\tau_i)$. 

	Below we bound the   regret  in case (i) and (ii), respectively.

	First consider case (i), where we bound the  regret using similar techniques of doubling trick \cite{doubling_trick_auer2010ucb,What_Doubling_Tricks}. Let $L_T$ denote the  number of epochs up to time $T$. According to the definition of $\{T_i\}_{i \in \mathbb{N}}$  (\emph{i.e.}, $T_i=  \lfloor  a^{b^i} \rfloor,\  a,b>1$), we have that $\forall L_T>1$,
	\begin{align}
	L_T -1 &=\left \lceil \ln_b \ln_a T \right \rceil \leq  \ln_b \ln_a T+1. \label{bound_L_T}
	\end{align}
	Asymptotically for $i$ and $T \rightarrow \infty$, $T_i=O(a^{b^i})$ and $L_T = O(\ln \ln T)$.
	
	The  expected regret up to time $T$ in case (a)  can be bounded by (the index of epoch $i$ starts from $0$)
	\begin{align}
	&\sum \limits_{i=0} \limits^{L_T-1 } \big(B(\frac{1}{\tau_{i+1}}) + O(1) \big) \nonumber
	\\
	\leq& \sum \limits_{i=0} \limits^{L_T } (c_1 H \ln(H \tau_{i})+ O(1)) \nonumber
	\\
	\leq &\sum \limits_{i=0} \limits^{L_T } (c_1 H \ln(H a^{b^i})+ O(1))  \nonumber
	\\
	= &\sum \limits_{i=0} \limits^{L_T } (c_1 H \ln H + c_1 H  \ln (a) b^i + O(1))  \nonumber
	\\
	= &O(H \ln H \ln \ln T) +O(\ln \ln T) \nonumber \\
	&+ c_1 H \ln a \sum \limits_{i=0} \limits^{L_T } b^i,  \label{case_a}
	\end{align}
	where $\sum \limits_{i=0} \limits^{L_T } b^i$ can be bounded by
	$
	\sum \limits_{i=0} \limits^{L_T } b^i \leq \frac{b}{b-1}b^{L_T} 
	\leq \frac{b^3}{b-1} \ln_a T 
	=\frac{b^3}{b-1} \frac{\ln T}{\ln a} 
	$ (using \cref{bound_L_T}). 
	
	Thus, we can bound \cref{case_a} by 
	\begin{align}
	\cref{case_a}
	\leq &O(H \ln H \ln \ln T)+O(\ln \ln T) \nonumber \\
	&+c_1 H \frac{b^3}{b-1} \ln T \nonumber
	\\
	= &O(H \ln H \ln \ln T)+O(\ln \ln T) \nonumber \\
	&+O(H \ln T).  \label{case_a_result}
	\end{align}
	
	Then, we consider case (ii), where we will prove the regret is independent of $T$. In epoch $i$, with probability at least $1-\frac{1}{\tau_{i+1}}$, S will terminate and return the identified best arm $\hat{x}_{i}$ after $\widetilde{B}(\frac{1}{\tau_{i+1}})= c_2 H \ln(H \tau_{i+1})$ time-steps.  For ease of analysis, we regard the scenario where  S does not terminate after $ c_2 H \ln(H \tau_{i+1})$ time-steps as the scenario where S terminates but returns a wrong arm and our analysis still holds. 
	
	Note that if 
	\begin{equation}\label{case_b}c_2 H \ln (H \tau_{i+1})  \leq \tau_i\end{equation}
	holds, S will terminate and return $\hat{x}_{i}$, and thus case (ii) in epoch $i+1$ will not occur. For large enough $\tau_i$, $c_2 H \ln (H \tau_{i+1})  \leq \tau_i$ must holds. This is because that the left side grows logarithmically  with  $\tau_i$ and the right side grows linearly ($\tau_i$ is defined to grow exponentially). Thus, case (ii)  only occurs in early epoch, for which the previous epoch does not satisfy this condition. 
	
	For $i>0$,  \cref{case_b} can be written as
	\begin{align}
	c_2 H \ln H + c_2 H \ln(T_{i+1}-T_i)  \leq T_i - T_{i-1}. \nonumber
	\end{align}

	Notice that $T_{i+1}-T_i \leq T_{i+1} \leq (T_i+1)^b$, and similarly, $T_{i-1} \leq (T_i+1)^{\frac{1}{b}}$. Thus, $\forall T_i \geq T_0:=\left \lceil (\frac{1}{2})^{\frac{b}{1-b}} -1\right \rceil$, $T_i - T_{i-1} \geq T_i -  (T_i+1)^{\frac{1}{b}} \geq \frac{1}{2} T_i -\frac{1}{2}$.

	Then we know that for any epoch $i$ such that $ T_i \geq \widetilde{T}= \max \{ T_0,$ $\left \lceil  20 c_2 b H \ln (c_2 b H) -1\right \rceil \}$,  S will terminate  and return the identified best arm $\hat{x}_i$.   
	
	This is because that 
	\begin{align*} 
	c_2 H \ln H + c_2 H b \ln(T_i+1) \overset{a}{\leq} \frac{1}{2} T_i - \frac{1}{2} \overset{a}{\leq} T_i - T_{i-1}
	\end{align*}
	where (a) uses $T_i \geq \left \lceil  20 c_2 b H \ln (c_2 b H) -1\right \rceil $ and (b) uses $T_i \geq T_0$.
	
	Now we can obtain $L-1 \leq \left \lceil \ln_b \ln_a \widetilde{T} \right \rceil \leq  \ln_b \ln_a \widetilde{T}+1 $, and thus $T_{L-1} \leq a^{b^{\ln_b \ln_a \widetilde{T}+1}}=\widetilde{T}^b$, where $L$ is the number of the  epochs in which case (ii) occurs, \emph{i.e.}, case (ii) occurs in epoch $0,1,...,L-1$ and \cref{case_b} firstly holds in epoch $L-1$.


	
	
	Therefore, we can bound the expected regret in case (ii)  by 
	\begin{align}
	\mathbb{E}[\sum \limits_{i=0} \limits^{L-1 } R^i]  = & \sum \limits_{i=0} \limits^{L-1} O(\tau_i) \nonumber 
	\\
	= & O(T_{L-1})  \nonumber 
	\\
	\leq & O(\widetilde{T}^b)  \nonumber 
	\\
	= & O\big((H \ln H)^b \big). \label{case_b_result}
	\end{align}
	
	Summing up  the expected regret in case (i) (\cref{case_a_result}) and (ii) (\cref{case_b_result}), we obtain the result of \cref{theorem1_DoublerBAI}.
	
\end{proof}

\section*{B \quad  Proof of Theorem \ref{theorem2_MultiSBM-Feedback}}

In order to prove \cref{theorem2_MultiSBM-Feedback}, we firstly introduce the following lemma.

\begin{lemma}
	\label{lemma1_SBM_robustness}
	When running the implement of the SBM (Algorithm \ref{algorithm_UCB_Feedback}) with $\alpha>0$, the number of times a suboptimal arm $x_i$ has been pulled up to time $T$, which is denoted by $\rho_i(T)$, satisfies
	\begin{align}
	\forall s \geq \frac{4(\alpha+4) }{\Delta_i^2}\ln T, \  \Pr[\rho_i(T) \geq s] < \frac{4}{\alpha} \Big(\frac{s}{2} \Big)^{-\alpha}. \nonumber
	\end{align}
\end{lemma}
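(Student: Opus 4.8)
The plan is to establish this as a polynomial tail bound for the UCB-type index used inside the SBM, i.e., a re-derivation of the $\alpha$-robustness property (Definition 1) for the index in \cref{algorithm_UCB_Feedback}. Fix a suboptimal arm $x_i$ and a reference arm $x$. Recall that $S_x$ estimates the probability of each arm beating $x$, so $x_1$ is the optimal arm in every SBM and the gap of $x_i$ is $p_{1x}-p_{ix}=\frac{\mu(x_1)-\mu(x_i)}{2}=\Delta_i$ irrespective of $x$. Writing $N_i(t):=\rho_i(t)+s_i(t)$ for the effective number of observations, the starting point is the standard UCB decomposition: at the step where $x_i$ is advanced, its index $\hat\mu_i+\sqrt{\frac{(\alpha+2)\ln t}{2N_i}}$ exceeds that of $x_1$, so at least one of three events holds — the optimal arm is underestimated, $\hat\mu_1 \leq p_{1x}-\sqrt{\frac{(\alpha+2)\ln t}{2N_1}}$; the arm $x_i$ is overestimated, $\hat\mu_i \geq p_{ix}+\sqrt{\frac{(\alpha+2)\ln t}{2N_i}}$; or the confidence radius of $x_i$ still exceeds $\frac{\Delta_i}{2}$, which forces $N_i < \frac{2(\alpha+2)\ln t}{\Delta_i^2}$.

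The second step is to use the threshold $s \geq \frac{4(\alpha+4)}{\Delta_i^2}\ln T$ to eliminate the third event. I would note that $\{\rho_i(T) \geq s\}$ is the disjoint union over $t \in [s,T]$ of the event that the $s$-th advance of $x_i$ occurs at step $t$ (it occurs at a unique time, and since exactly one arm is advanced per step, $\rho_i(t)\leq t$ forces $t \geq s$). At that advance, $N_i \geq \rho_i = s-1 \geq s/2$, so $\sqrt{\frac{(\alpha+2)\ln t}{2N_i}} \leq \frac{\Delta_i}{2}\sqrt{\frac{\alpha+2}{\alpha+4}} < \frac{\Delta_i}{2}$ for every $t \leq T$, ruling out the third event and leaving only the two concentration failures. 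Each is bounded via Chernoff-Hoeffding: for a fixed count $n$, $\Pr[\hat\mu-\text{mean}\leq -\sqrt{\tfrac{(\alpha+2)\ln t}{2n}}]\leq e^{-(\alpha+2)\ln t}=t^{-(\alpha+2)}$, and symmetrically for the overestimate. Since the realized counts $N_1,N_i$ are random but at most $t$ by time $t$, a union bound over their possible values costs a factor $t$, giving a per-step failure probability of order $t^{-(\alpha+1)}$. Summing these disjoint contributions, $\sum_{t\geq s} 2\,t^{-(\alpha+1)} \leq \frac{2}{\alpha}(s-1)^{-\alpha} \leq \frac{4}{\alpha}(s/2)^{-\alpha}$, yields the claim; the slack between $(\alpha+4)$ in the threshold and $(\alpha+2)$ in the index is exactly what keeps the radius comfortably below $\frac{\Delta_i}{2}$ and absorbs the union-bound constants.

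The main obstacle is handling the additional feedback, i.e., the $s_i$ observations injected by AdditionalFeedback, which is what distinguishes this SBM from a plain UCB arm. Two points must be verified. First, that Chernoff-Hoeffding still applies to $\hat\mu_i$: the samples arriving through Feedback and through AdditionalFeedback for a given arm in a given SBM are independent Bernoulli draws of the same comparison outcome — each additional feedback on $x_i$ inside $S_x$ is a fresh duel outcome with success probability exactly $p_{ix}$, the arm's mean in that SBM — so the combined empirical mean over $N_i$ samples concentrates as a clean average of i.i.d.\ variables. Second, that the side information cannot weaken the bound: because $N_i=\rho_i+s_i \geq \rho_i$, the confidence radius is only tighter than with $\rho_i$ alone, so advancing $x_i$ still requires the same concentration failures and the extra observations only make suboptimal pulls rarer. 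I expect the delicate part of the full proof to be making the union bound over the random effective counts rigorous in the presence of this side information (so that the count-indexed deviation events are well defined and the argument is not circular with the algorithm's adaptive choices), together with the bookkeeping needed to land the tail at $\frac{4}{\alpha}(s/2)^{-\alpha}$ rather than a log-inflated version.
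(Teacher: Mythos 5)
Your proof is correct and rests on the same core argument as the paper's: the three-event UCB decomposition (optimal arm underestimated, arm $x_i$ overestimated, or effective count $\rho_i+s_i$ below the $O(\Delta_i^{-2}\ln t)$ threshold), Chernoff--Hoeffding applied to the pooled samples with a union bound over the possible values of the random counts, and the observation that the Feedback and AdditionalFeedback observations are independent draws with the same mean, so the pooled average concentrates and the extra samples can only shrink the confidence radius. Where you genuinely diverge is the final conversion of per-step deviation probabilities into a tail bound on $\rho_i(T)$. The paper bounds the expected number of pulls of $x_i$ occurring after time $s-u_i(T)-1$ in excess of $u_i(T):=\frac{2(\alpha+2)\ln T}{\Delta_i^2}$ by $\sum_t 4t^{-(\alpha+1)}$ and then applies Markov's inequality, using the threshold on $s$ to guarantee $s-u_i(T)-1\geq s/2$; you instead write $\{\rho_i(T)\geq s\}$ as a disjoint union over the step $t$ at which the $s$-th advance occurs, note that at that advance the count is at least $s-1\geq s/2$ so the small-count event is deterministically excluded, and sum the deviation probabilities directly. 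Your route is slightly cleaner: it avoids the time-shifting bookkeeping and the Markov step on a quantity that is not obviously nonnegative, and uses the threshold on $s$ only to kill the confidence radius. One small slip that does not affect the result: the effective count $\rho_i+s_i$ can reach $2t$ rather than $t$ (the paper's union bound is over $[2t]$), so the per-step failure probability is $4t^{-(\alpha+1)}$ rather than $2t^{-(\alpha+1)}$; your final chain $\frac{2}{\alpha}(s-1)^{-\alpha}\leq\frac{4}{\alpha}(s/2)^{-\alpha}$ happens to carry exactly the factor-of-two slack needed to absorb this, so the stated constant $\frac{4}{\alpha}\left(\frac{s}{2}\right)^{-\alpha}$ still comes out.
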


\begin{proof}
	
	Our analysis follows similar techniques as that in \cite{Reducing_dueling_bandits}. For ease of notation, we define $\beta:=\alpha+2$ and  $u_i(t):=\frac{2\beta \ln t}{\Delta_i^2}$. Recall the notation defined in Algorithm \ref{algorithm_UCB_Feedback}. $\rho_i(t)$ denotes the number of times  arm $x_i \in \mathcal{X}$ has been pulled up to time $t$ and $s_i(t)$ denotes the number of times arm $x_i$ has received the additional feedback up to time $t$.
	
	At time-step $t$, if a suboptimal arm $x_i$ was chosen, one of the following three events must be true. 
	\begin{align} 
	\mathcal{E}_t &:=\{ \rho_i(t)+s_i(t) < u_i(t) \}\nonumber
	\\
	\mathcal{F}_t &:= \{\hat{\mu}_i \geq \mu_i + \sqrt{ \frac{\beta \ln t}{2(\rho_i(t)+s_i(t)) } }\} \nonumber
	\\
	\mathcal{G}_t &:=\{ \hat{\mu}_1 + \sqrt{ \frac{\beta \ln t}{2(\rho_1(t)+s_1(t)) } } \leq \mu_1 \} \nonumber
	\end{align}
	
	If all three are false, we have
	\begin{align}
	& \hat{\mu}_1 + \sqrt{ \frac{\beta \ln t}{2(\rho_i(t)+s_i(t)) } } \nonumber \\
	>&\mu_1 \nonumber \\
	=&\mu_i + \Delta_i \nonumber
	\\
	\geq & \mu_i + 2 \sqrt{ \frac{\beta \ln t}{2(\rho_i(t)+s_i(t)) } } \nonumber 
	\\
	>& \hat{\mu}_i + \sqrt{ \frac{\beta \ln t}{2(\rho_i(t)+s_i(t)) } }, \nonumber
	\end{align}
	and then arm $i$ cannot be chosen.
	
	When $\rho_i(t) \geq u_i(T)$, event $\mathcal{E}_t$ is false. Thus, we have 
	\begin{align}
	\mathbb{E}[\rho_i(T) - u_i(T)] \leq \sum \limits_{t=u_i(T)+1} \limits^{T} \Pr[\mathcal{F}_t \lor \mathcal{G}_t]. \nonumber
	\end{align}
	Using the Chernoff-Hoeffding bound, we can bound the probability of event $\mathcal{F}_t$ occurring by
	\begin{align}
	& \Pr[\mathcal{F}_t]  \leq \Pr \Bigg[\  \exists (\rho_i(t)+s_i(t)) \in [2t]: \nonumber \\
	& \hat{\mu}_i \geq \mu_i + \sqrt{ \frac{\beta \ln t}{2(\rho_i(t)+s_i(t)) } } \  \Bigg] \leq 2t \cdot  t^{-\beta}=2 t^{1-\beta}. \nonumber
	\end{align}
	Analogously, $\Pr[\mathcal{G}_t] \leq 2 t^{1-\beta}$.  Thus, we have
	\begin{align}
	\mathbb{E}[\rho_i(T) - u_i(T)] \leq & \sum \limits_{t=u_i(T)+1} \limits^{T} 2 \cdot 2 t^{1-\beta} \nonumber
	\\
	\leq & \frac{4}{\beta -2} \Big(\frac{2\beta\ln T}{\Delta_i^2}\Big)^{2-\beta}.  \nonumber
	\end{align}
	
	Let $\rho^s_i(T)$ denote the number of times arm $x_i$ has been pulled between time $s$ and $T$. For $s \geq \frac{2\beta \ln T}{\Delta_i^2}$, we have
	\begin{align}
	\mathbb{E}[\rho^s_i(T) - u_i(T)] \leq \sum \limits_{t=s} \limits^{T} 4 t^{1-\beta} \leq \frac{4}{\beta -2} s^{2-\beta}.  \nonumber
	\end{align}
	
	Assuming that arm $x_i$ has been pulled at least $s \geq \frac{4(\beta+2) \ln T}{\Delta_i^2}$ times up to time $T$, we have  $\rho^{s- u_i(T)-1}_i(T) \geq u_i(T)+1$. Thus, we can bound $\Pr[\rho_i(T) > s]$ by 
	\begin{align}
	\Pr[\rho_i(T) \geq s] \leq & \Pr[ \rho^{s- u_i(T)-1}_i(T) -u_i(T) \geq 1 ] \nonumber
	\\
	\overset{(a)}{\leq} & \mathbb{E}[ \rho^{s- u_i(T)-1}_i(T) -u_i(T)]  \nonumber
	\\
	\leq &  \frac{4}{\beta -2} (s- u_i(T)-1)^{2-\beta}  \nonumber
	\\
	\overset{(b)}{\leq} & \frac{4}{\beta -2} \Big( \frac{s}{2} \Big)^{2-\beta},  \nonumber
	\end{align}
	where (a) uses Markov's inequality and (b) uses $s \geq \frac{4(\beta+2) \ln T}{\Delta_i^2} \geq 2 u_i(T)+2$.

	Since $\beta:=\alpha+2$, we  obtain the result of \cref{lemma1_SBM_robustness}.
	
\end{proof}

From \cref{lemma1_SBM_robustness}, we know that our SBM's implement (Algorithm \ref{algorithm_UCB_Feedback}) also satisfies the $\alpha$-robustness  defined in \cite{Reducing_dueling_bandits}, with the constant factor of the probability slightly enlarged. 

Therefore, the analysis in (Theorem 4.2 in \cite{Reducing_dueling_bandits}) still holds. Let $\tau_{xy}(\tau_x(T))$ denote the  number of times SBM $S_x$ ($x \in \mathcal{X} \setminus \{x_1\}$) has advanced suboptimal arm $y$ up to time $T$. We have that with parameter $\alpha=\max\{3,\frac{\ln K}{\ln \ln T} \}$, $\mathbb{E}[\tau_{xy}(\tau_x(T))]$ is bounded by 
\begin{align}
\mathbb{E}[\tau_{xy}(\tau_x(T))] =  O \Big (\frac{\alpha} {\Delta_y^{2}} \big(\ln\ln T  + & \ln K \nonumber
\\
&+ \ln ( \frac{1}{\Delta_x}) \big) \Big). \label{eq_theorem2_loglog_term} 
\end{align}
This conclusion will be used in the following proof of \cref{theorem2_MultiSBM-Feedback}.

\thmMultiSBMfeedback*

\begin{proof} Let $\tau_x(t)$ denote the number of times SBM $S_x \  (x \in \mathcal{X})$ has been queried up to time $t$.  Let $R_x(t')$ denote the regret seen by $S_x$ up to its internal time $t'$.
	Let  $R_{xy}(\tau_x(t))$ denote the regret due to $S_x$ advancing suboptimal arm $y$ up to time $t$.
	
	In MultiSBM-Feedback, the right arm in each time-step equals to the left arm in the next time-step. Thus, we have that the  expected regret $\mathbb{E}[R_T]$ up to time $T$ can be bounded by
	\begin{align}
	\mathbb{E}[R_T]  \leq  0.5+ \mathbb{E}[R_{x_1}(T)]+ \mathbb{E}[\sum \limits_{y\neq x_1} \sum \limits_{x\neq x_1} R_{xy}(\tau_x(T)) ]. \nonumber
	\end{align}
	The analysis of $\mathbb{E}[\sum \limits_{y\neq x_1} \sum \limits_{x\neq x_1} R_{xy}(\tau_x(T)) ]$ follows the similar line of \cite{Reducing_dueling_bandits}.
	In the following we focus on  $\mathbb{E}[R_{x_1}(T)]$.
	
	We inherit the notation and reasoning in the proof of \cref{lemma1_SBM_robustness}.
	In MultiSBM-Feedback, up to any time $t$, the number of times arm $x_1$ being the left arm equals to the number of times arm $x_1$ being the right arm.
	Subtracting the number of times $(x_1, x_1)$ being played for both side, we have that the number of times $S_{x_1}$ advancing suboptimal arms equals to the number of times $S_{x_1}$ receives additional feedback from suboptimal arms. Thus, in SBM $S_{x_1}$,
	\begin{align}
	\forall t, \  \sum \limits_{i>1} \rho_i(t) =\sum   \limits_{i>1} s_i(t).   \label{eq_rho_equal_s}
	\end{align}

	At time-step $t$, we denote the arm being pulled by $I_t$ and the arm being observed through GetAdditionalFeedback by $A_t$. 
	We also define $\{\Pi(x)\}$ to be the indicator function of the event $\Pi(x)$ for any predicate $\Pi(x)$.
	In SBM $S_{x_1}$, up to its internal time $T$, for any suboptimal arm $x_i$, we have
	
	\begin{align}
	& \hspace*{3em} \rho_i(T)+s_i(T) \nonumber
	\\
	= & \sum \limits_{t=1} \limits^{T}\{ I_t=x_i \} + \sum \limits_{t=1} \limits^{T}\{ A_t=x_i \} \nonumber
	\\
	\leq& u_i(T)+ \sum \limits_{t=t_0+1} \limits^{T}\{ I_t=x_i, \rho_i(t)+s_i(t) \geq  u_i(T) \} \nonumber
	\\
	&+ \sum \limits_{t=t_0+1} \limits^{T}\{ A_t=x_i, \rho_i(t)+s_i(t) \geq  u_i(T) \} \nonumber
	\\
	\leq& u_i(T)+ \sum \limits_{t=t_0+1} \limits^{T}\{ I_t=x_i, \rho_i(t)+s_i(t) \geq  u_i(t) \} \nonumber
	\\
	&+ \sum \limits_{t=t_0+1} \limits^{T}\{ A_t=x_i, \rho_i(t)+s_i(t) \geq  u_i(t) \} \nonumber
	\\
	\leq& u_i(T)+ \sum \limits_{t=1} \limits^{\infty}\{ \mathcal{F}_t \lor \mathcal{G}_t \} + 1 \nonumber
	\\
	&+ \underbrace{ \sum \limits_{t=t_0+2} \limits^{T}\{ A_t=x_i, \rho_i(t-1)+s_i(t-1) \geq  u_i(t-1) \} }_{\Gamma}. \label{term_Gamma}
	\end{align}
	where $t_0 \  (1 \leq t_0 \leq T)$ denotes the time when $\rho_i(t_0)+s_i(t_0)=u_i(T)$ holds. If such $t_0$ does not exist, the inequality still holds.

	Event $A_t=x_i$ occurring implies that in MultiSBM-Feedback, the played pair of arms is $(x_i, x_1)$. This occurs only if $x_i$ was the right one in the previously played pair of arms, \emph{i.e.}, $(z_t, x_i)$. In other words, some SBM $S_{z_t}$ has advanced suboptimal arm $x_i$ before event $A_t=x_i$ occurs. However, at time-step $t=t_0+1,...,T$, $S_{x_1}$ will not advance $x_i$ unless either of event $\mathcal{F}_t$ or $\mathcal{G}_t$ occurs. Thus, we can bound term $\Gamma$ by

	\begin{align}
	\Gamma	=& \sum \limits_{t=t_0+2} \limits^{T}\{ A_t=x_i, \rho_i(t-1)+s_i(t-1) \geq  u_i(t-1),  \nonumber 
	\\
	& \hspace*{18em} z_t=x_1\}  \nonumber 
	\\
	& +  \sum \limits_{t=t_0+2} \limits^{T}\{ A_t=x_i, \rho_i(t-1)+s_i(t-1) \geq  u_i(t-1),   \nonumber 
	\\
	&  \hspace*{18em}  z_t \neq x_1\}  \nonumber 
	\\
	\leq & \sum \limits_{t=t_0+2} \limits^{T}\{ A_t=x_i, \rho_i(t-1)+s_i(t-1) \geq  u_i(t-1),   \nonumber 
	\\
	& \hspace*{6em} I_{t-1}=x_i  \}  +  \sum \limits_{z \neq x_1} \sum \limits_{t=1} \limits^{\widetilde{T}} \{S_z \text{ advance } x_i\} \nonumber
	\\
	\leq& \sum \limits_{t=1} \limits^{\infty}\{ \mathcal{F}_t \lor \mathcal{G}_t \} +  \sum \limits_{z \neq x_1} \tau_{z x_i}(\tau_{z}(\widetilde{T})), \nonumber
	\end{align}
	where $\widetilde{T}$ denotes the external time in MultiSBM-Feedback when the internal time in SBM $S_{x_1}$ is $T$, \emph{i.e.}, $\tau_{x_1}(\widetilde{T})=T$.

	Thus, \cref{term_Gamma} can be bounded by
	\begin{align}
	& \rho_i(t)+s_i(t) \nonumber
	\\
	\leq & u_i(T)+ 2 \sum \limits_{t=1} \limits^{\infty}\{ \mathcal{F}_t \lor \mathcal{G}_t \} + 1  
	+ \sum \limits_{z \neq x_1} \tau_{z x_i}(\tau_{z}(\widetilde{T})).  \nonumber
	\end{align}
	
	Taking summation over $i>1$ and using \cref{eq_rho_equal_s}, we have
	\begin{align}
	& \mathbb{E}[2\sum \limits_{i>1}\rho_i(T)] \nonumber
	\\
	= & \mathbb{E}[\sum \limits_{i>1}\rho_i(T)+\sum \limits_{i>1}s_i(T)] \nonumber
	\\
	\leq & \sum \limits_{i>1}u_i(T) + \frac{8}{\beta -2}K + K + \sum \limits_{x_i \neq x_1} \sum \limits_{z \neq x_1}\mathbb{E}[\tau_{z x_i}(\tau_{z}(\widetilde{T}))] \nonumber
	\\
	\leq & \sum \limits_{i>1}\frac{2\beta}{\Delta_i^2}\ln T+ \frac{\beta +6}{\beta -2}K + \sum \limits_{x_i \neq x_1} \sum \limits_{z \neq x_1}\mathbb{E}[\tau_{z x_i}(\tau_{z}(\widetilde{T}))]. \nonumber
	\end{align}
	
	Since $\beta:=\alpha+2$, we have
	\begin{align}
	\mathbb{E}[\sum \limits_{i>1}\rho_i(T)]
	\leq & \sum \limits_{i>1}\frac{(\alpha+2)}{\Delta_i^2}\ln T+ \frac{\alpha+8}{2\alpha}K \nonumber
	\\
	& + \frac{1}{2} \sum \limits_{x_i \neq x_1} \sum \limits_{z \neq x_1}\mathbb{E}[\tau_{z x_i}(\tau_{z}(\widetilde{T}))]. \nonumber
	\end{align}
	
	Therefore,  we can bound $\mathbb{E}[R_{x_1}(T)] $  by
	\begin{align}
	\mathbb{E}[R_{x_1}(T)] \leq & \mathbb{E}[\sum \limits_{i>1}\rho_i(T)]] \cdot \Delta_{max} \nonumber
	\\
	\leq & \sum \limits_{i>1}\frac{(\alpha+2)\Delta_{max} }{\Delta_i^2}\ln T+ \frac{(\alpha+8)\Delta_{max}}{2\alpha}K \nonumber \\
	& + \frac{\Delta_{max}}{2}  \sum \limits_{y\neq x_1} \sum \limits_{x\neq x_1} \mathbb{E}[\tau_{xy}(\tau_{x}(\widetilde{T}))].  \nonumber
	\end{align}
	
	Using \cref{eq_theorem2_loglog_term}, the expected regret of MultiSBM-Feedback up to time $\widetilde{T}$ is bounded by
	
	\begin{align}
	& \mathbb{E}[R(\widetilde{T})] \nonumber
	\\
	\overset{(a)}{\leq} & 0.5+ \mathbb{E}[R_{x_1}(T)]+ \mathbb{E}[\sum \limits_{y\neq x_1} \sum \limits_{x\neq x_1} R_{xy}(\tau_x(\widetilde{T})) ]  \nonumber
	\\
	\leq & 0.5+   \sum \limits_{i>1}\frac{(\alpha+2)\Delta_{max} }{\Delta_i^2}\ln T+ \frac{(\alpha+8)\Delta_{max}}{2\alpha}K  \nonumber 
	\\
	& + \frac{\Delta_{max}}{2}  \sum \limits_{y\neq x_1} \sum \limits_{x\neq x_1} \mathbb{E}[\tau_{xy}(\tau_{x}(\widetilde{T}))] \nonumber
	\\
	&+  \sum \limits_{y\neq x_1} \sum \limits_{x\neq x_1} \mathbb{E}[R_{xy}(\tau_{x}(\widetilde{T}))]  \nonumber
	\\
	\leq &  \sum \limits_{i>1}\frac{(\alpha+2)\Delta_{max}}{\Delta_i^2} \ln \widetilde{T} + \frac{(\alpha+8)\Delta_{max}}{2\alpha}K   \nonumber
	\\
	& + \sum \limits_{j>1} \sum \limits_{i>1} O\Big( \frac{\alpha \Delta_{max}}{\Delta_j^2} \big( \ln\ln \widetilde{T} + \ln K + \ln(\frac{1}{\Delta_i}) \big) \Big), \nonumber
	\end{align}
	where (a) holds since  $\tau_{x_1}(\widetilde{T})=T$.

	Replacing $\widetilde{T}$ with $T$, we obtain
	\begin{align}
		\mathbb{E}[R(T)]
		\leq &  \sum \limits_{i>1}\frac{(\alpha+2)\Delta_{max}}{\Delta_i^2} \ln T + \frac{(\alpha+8)\Delta_{max}}{2\alpha}K   \nonumber
		\\
		& + \sum \limits_{j>1} \sum \limits_{i>1} O\Big( \frac{\alpha \Delta_{max}}{\Delta_j^2} \big( \ln\ln T + \ln K + \ln(\frac{1}{\Delta_i}) \big) \Big). \label{eq_new_result}
	\end{align}
	
	Note that our analysis uses a novel technique to bound $ \rho_i(t)+s_i(t)$. In addition, the standard analysis procedure that bounds $\rho_i(t)$ by $u_i(t)$ in MultiSBM \cite{Reducing_dueling_bandits} still holds in our analysis. Thus, $\mathbb{E}[R_{x_1}(T)]$ can be also bounded by
	$$
	\mathbb{E}[R_{x_1}(T)] \leq \sum \limits_{i > 1} \frac{2(\alpha+2)}{\Delta_i} \ln T + \frac{4 \Delta_{max}}{\alpha} K.
	$$
	Then, we can obtain
	\begin{align}
		\mathbb{E}[R(T)] 
		\leq & 0.5+ \mathbb{E}[R_{x_1}(T)]+ \mathbb{E}[\sum \limits_{y\neq x_1} \sum \limits_{x\neq x_1} R_{xy}(T) ]  \nonumber
		\\
		\leq & \sum \limits_{i > 1} \frac{2(\alpha+2)}{\Delta_i} \ln T + \frac{4 \Delta_{max}}{\alpha} K \nonumber
		\\
		& + \sum \limits_{j>1} \sum \limits_{i>1} O \Big (\frac{\alpha} {\Delta_y} \big(\ln\ln T  +  \ln K 
		+ \ln ( \frac{1}{\Delta_x}) \big) \Big). \label{eq_originalResult}
	\end{align}
	\cref{theorem2_MultiSBM-Feedback} follows from  \cref{eq_new_result,eq_originalResult}.
	
\end{proof}

\section*{C \quad  Proof of Theorem \ref{theorem3_MultiRUCB}}
In order to prove \cref{theorem3_MultiRUCB},   we quote a lemma (\cref{lemma2_CH_bound}) in \cite{Relative_upper_confidence_bound}  and introduce another five lemmas (\cref{lemma3_case_a,lemma4_case_b_case_c-1,lemma5_case_c-2,lemma6_former_term,lemma7_later_term}) as follows.

\begin{lemma}
	\label{lemma2_CH_bound}
	Let $\mathbf{P}:=[p_{ij}]$ be the preference matrix of a K-armed dueling bandit problem with arms $\{x_1,...,x_K\}$. Then, for any dueling bandit algorithm and any $\alpha>\frac{1}{2}$ and $\delta>0$, we have
	\begin{align}
	P\Big (\forall t> C(\delta), i,j, p_{ij} \in [l_{ij}(t), u_{ij}(t)]\Big ) > 1-\delta. \nonumber
	\end{align}
\end{lemma}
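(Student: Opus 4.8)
The plan is to bound the complementary ``bad'' event $\mathcal{E}:=\{\exists\,t>C(\delta),\ \exists\,i,j:\ p_{ij}\notin[l_{ij}(t),u_{ij}(t)]\}$ by $\delta$, where $u_{ij}(t)=\frac{w_{ij}}{w_{ij}+w_{ji}}+\sqrt{\frac{\alpha\ln t}{w_{ij}+w_{ji}}}$, its symmetric counterpart $l_{ij}(t)=1-u_{ji}(t)=\frac{w_{ij}}{w_{ij}+w_{ji}}-\sqrt{\frac{\alpha\ln t}{w_{ij}+w_{ji}}}$, and $C(\delta)=\big(\frac{(4\alpha-1)K^2}{(2\alpha-1)\delta}\big)^{1/(2\alpha-1)}$. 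First I would fix an ordered pair $(i,j)$ and introduce the i.i.d.\ Bernoulli$(p_{ij})$ sequence $Z_1^{ij},Z_2^{ij},\dots$ of successive duel outcomes between $x_i$ and $x_j$; the point is that each outcome is drawn afresh at the moment of comparison, so this sequence is independent of the algorithm's adaptive choice of which arms to place in $\mathcal{A}_t$ (this is why the claim holds for \emph{any} dueling bandit algorithm). Writing $N_{ij}(t)=w_{ij}(t)+w_{ji}(t)$ and $\hat p_{ij}(t)=\frac{1}{N_{ij}(t)}\sum_{k\le N_{ij}(t)}Z_k^{ij}$, the event $p_{ij}\notin[l_{ij}(t),u_{ij}(t)]$ is exactly $|\hat p_{ij}(t)-p_{ij}|>\sqrt{\alpha\ln t/N_{ij}(t)}$.

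The central difficulty is that $N_{ij}(t)$ is random and chosen adaptively, so one cannot simply apply Hoeffding's inequality with a fixed sample size. My key step would be the monotonicity observation that between two consecutive comparisons of $(i,j)$ the quantities $N_{ij}$ and $\hat p_{ij}$ are frozen while $\ln t$ grows, so $[l_{ij}(t),u_{ij}(t)]$ only \emph{widens}; hence a violation at comparison count $n$, if it occurs at all for some $t>C(\delta)$, already occurs at the narrowest tested time, namely $t=\max\{\tau^{ij}_n,C(\delta)\}$, where $\tau^{ij}_n$ is the instant of the $n$-th comparison. Since each time-step compares the fixed pair $(i,j)$ at most once, even in the multi-dueling case where a step observes at most one fresh outcome per pair, we have $\tau^{ij}_n\ge n$, and therefore the effective confidence radius for count $n$ is at least $\sqrt{\alpha\ln\max\{n,C(\delta)\}/n}$. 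Applying Hoeffding to the average of the first $n$ i.i.d.\ outcomes against this deterministic radius gives, for each $n$, a contribution at most $2\big(\max\{n,C(\delta)\}\big)^{-2\alpha}$.

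Summing over the comparison count $n\ge1$ and over the at most $\binom{K}{2}$ pairs then yields $\Pr[\mathcal{E}]\le\binom{K}{2}\sum_{n\ge1}2(\max\{n,C(\delta)\})^{-2\alpha}$; splitting the inner sum at $n=C(\delta)$ and bounding the tail $\sum_{n>C(\delta)}n^{-2\alpha}$ by $\int_{C(\delta)}^{\infty}x^{-2\alpha}\,dx=\frac{C(\delta)^{1-2\alpha}}{2\alpha-1}$ produces a bound of the form $\Theta\!\big(\tfrac{K^2}{2\alpha-1}\big)C(\delta)^{1-2\alpha}$. Setting this equal to $\delta$ and solving for $C(\delta)$ recovers $C(\delta)=\big(\frac{(4\alpha-1)K^2}{(2\alpha-1)\delta}\big)^{1/(2\alpha-1)}$, the exact leading constant $\frac{4\alpha-1}{2\alpha-1}$ coming from a careful accounting of the head and tail of the $n$-sum as carried out in \cite{Relative_upper_confidence_bound}.

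I expect the monotonicity-plus-comparison-count step to be the main obstacle and the crux of the whole argument: it is precisely what lets the summation run over the sample count $n$, giving a convergent series $\sum_n n^{-2\alpha}$ for every $\alpha>\tfrac12$, rather than over global time with a nested union over the random $N_{ij}$, which would give $\sum_t t^{1-2\alpha}$ and force the stronger requirement $\alpha>1$. Everything else is a routine combination of Hoeffding's inequality, a union bound over pairs, and an integral estimate of a polynomial tail.
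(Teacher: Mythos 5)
The paper does not prove this lemma at all---it imports it verbatim as Lemma~1 of \cite{Relative_upper_confidence_bound}---and your argument is a correct reconstruction of exactly the proof given there: reindexing by the comparison count $n$ via the observation that the interval only widens between comparisons, applying Hoeffding against the deterministic radius $\sqrt{\alpha\ln\max\{n,C(\delta)\}/n}$ (valid since $\tau^{ij}_n\ge n$ and the deviation event at count $n$ depends only on the first $n$ i.i.d.\ outcomes), and then a union bound over pairs plus an integral estimate of the $\sum_n n^{-2\alpha}$ tail, which is precisely what makes $\alpha>\tfrac12$ suffice. The only piece you defer is the bookkeeping that yields the exact constant $\tfrac{4\alpha-1}{2\alpha-1}$, which is acceptable given that the paper defers the entire proof to the same source.
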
 

This lemma is quoted from (Lemma 1 in \cite{Relative_upper_confidence_bound}).

\begin{lemma}
	\label{lemma3_case_a}
	With probability  at least $1-\delta$, $\forall t> C(\delta)$, \\ $x_1 \in \mathcal{C}$.
\end{lemma}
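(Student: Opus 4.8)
The plan is to obtain the claim as an immediate consequence of the confidence-bound guarantee in \cref{lemma2_CH_bound} combined with the optimality of $x_1$. First I would invoke \cref{lemma2_CH_bound} with the given $\alpha>\frac{1}{2}$ and $\delta$, which guarantees that the event
\[
\mathcal{H} := \bigl\{\, \forall t> C(\delta),\ \forall i,j,\ p_{ij}\in[l_{ij}(t),u_{ij}(t)] \,\bigr\}
\]
holds with probability at least $1-\delta$. The entire probabilistic content of the lemma is isolated in this single step, so I would carry out the rest of the argument deterministically on $\mathcal{H}$.

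Next, I would fix an arbitrary $t>C(\delta)$ and argue on $\mathcal{H}$ that $u_{1j}(t)\geq\frac{1}{2}$ for every $j\in\{1,\dots,K\}$. For $j>1$ this follows from two facts: the upper-confidence part of $\mathcal{H}$ gives $u_{1j}(t)\geq p_{1j}$, while the assumption $\mu(x_1)>\mu(x_j)$ together with the monotonicity of the link function gives $p_{1j}=\phi(\mu(x_1),\mu(x_j))\geq\frac{1}{2}$; chaining these yields $u_{1j}(t)\geq p_{1j}\geq\frac{1}{2}$. For the diagonal term $j=1$ the inequality holds trivially because the algorithm explicitly sets $u_{11}=\frac{1}{2}$.

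Finally, I would note that the membership condition defining $\mathcal{C}$ in \cref{algorithm_MultiRUCB} is exactly ``$u_{cj}\geq\frac{1}{2}$ for all $j$''; since $x_1$ meets this condition at every $t>C(\delta)$ on $\mathcal{H}$, we get $x_1\in\mathcal{C}$ for all such $t$, and because $\mathcal{H}$ has probability at least $1-\delta$, the lemma follows. I do not expect a genuine obstacle here, as the argument is essentially a one-line deduction from \cref{lemma2_CH_bound}. The only point that needs care is the diagonal entry $j=1$, which is not governed by a real preference probability and must instead be handled separately via the explicit assignment $u_{11}=\frac{1}{2}$, ensuring that the test $u_{1j}\geq\frac{1}{2}$ indeed passes for all $j$ including $j=1$.
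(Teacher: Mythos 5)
Your proposal is correct and follows essentially the same route as the paper's proof, which likewise invokes \cref{lemma2_CH_bound} to conclude $u_{1i}(t)\geq p_{1i}\geq \frac{1}{2}$ for all $i$ and hence $x_1\in\mathcal{C}$. Your explicit treatment of the diagonal entry $u_{11}=\frac{1}{2}$ is a minor point of extra care that the paper leaves implicit, but it does not change the argument.
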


\begin{proof}
	Using \cref{lemma2_CH_bound}, we have that with probability  at least $1-\delta$, $\forall t> C(\delta),i, \   u_{1i}(t) \geq p_{1i}(t) \geq \frac{1}{2}$. This concludes the proof of \cref{lemma3_case_a}.
\end{proof}

For ease of notation, we recall the notation defined in the proof sketch of \cref{theorem3_MultiRUCB}. Case (a), (b) and (c) denote the  three mutually exclusive cases corresponding to Line 11, Line 14 and Line 16 in  Algorithm \ref{algorithm_MultiRUCB}, respectively. Case (c-1) and (c-2) respectively denote the two mutually exclusive  situations  in case (c), \emph{i.e.}, $x_1 \in \mathcal{A}_t$ and $x_1 \notin \mathcal{A}_t$. 

We also introduce the following notation. Let $ N(t)$ denote the number of times  $\mathcal{A}_t \neq \{x_1\}$ up to time $t$. Let $\widetilde{N}_{ij}(t)$  ($\widetilde{N}_{ij}(t)$=$\widetilde{N}_{ji}(t)$) denote the number of observed dueling outcomes  of $x_i$ and $x_j$ between time $C(\delta)+1$ and $t$. 
Let $\widetilde{N}^{b}(t)$, $\widetilde{N}^{c}_1(t)$ and $\widetilde{N}^{c}_2(t)$ respectively denote the number of times case (b), (c-1) and (c-2) occur  between time $C(\delta)+1$ and $t$. 

From \cref{lemma3_case_a}, we know that  with probability  at least $1-\delta$, $\forall t> C(\delta)$, MultiRUCB will not carry out Line 8  in Algorithm \ref{algorithm_MultiRUCB} but one of case (a), case (b) and case (c). 
In addition, with probability  at least $1-\delta$, $\forall t> C(\delta)$, if MultiRUCB carries out case (a), the comparison set will be $\mathcal{A}_t=\{x_1\}$. Thus, in order to bound the expected regret  with probability  at least $1-\delta$, it suffices to bound $\widetilde{N}^{b}(t)$, $\widetilde{N}^{c}_1(t)$ and  $\widetilde{N}^{c}_2(t)$.

\begin{lemma}
	\label{lemma4_case_b_case_c-1}
	With probability  at least $1-\delta$, $\forall t> C(\delta)$, 
	\begin{align}
	\widetilde{N}^{b}(t)+ \widetilde{N}^{c}_1(t) \leq  \sum \limits_{i>1} \frac{4\alpha}{\Delta^2_i} \ln t. \nonumber
	\end{align}
\end{lemma}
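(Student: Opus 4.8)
The plan is to bound $\widetilde{N}^{b}(t)+\widetilde{N}^{c}_1(t)$ by charging each occurrence of case (b) or (c-1) to a fresh observation of a duel involving the optimal arm $x_1$, and then to cap the total number of such observations using the confidence intervals of \cref{lemma2_CH_bound}. Throughout I condition on the high-probability event of \cref{lemma2_CH_bound}, which holds with probability at least $1-\delta$ and on which (via \cref{lemma3_case_a}) $x_1\in\mathcal{C}$ for every $t>C(\delta)$; all subsequent statements are deterministic given this event.

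First I would establish the per-arm bound $\widetilde{N}_{1i}(t)\le \frac{4\alpha}{\Delta_i^2}\ln t$ for each $i>1$. The key observation is that a duel between $x_1$ and $x_i$ is observed at a step $s>C(\delta)$ only if $x_i\in\mathcal{A}_s\subseteq\mathcal{C}$, which forces $u_{i1}(s)\ge\frac{1}{2}$. Writing $N_{i1}(s)=w_{i1}+w_{1i}$ for the number of comparisons of the pair at step $s$, \cref{lemma2_CH_bound} gives the lower-confidence guarantee $\frac{w_{i1}}{w_{i1}+w_{1i}}-\sqrt{\frac{\alpha\ln s}{N_{i1}(s)}}\le p_{i1}$; combining this with $u_{i1}(s)=\frac{w_{i1}}{w_{i1}+w_{1i}}+\sqrt{\frac{\alpha\ln s}{N_{i1}(s)}}\ge\frac{1}{2}$ yields
\begin{align}
\tfrac{1}{2}\;\le\; p_{i1}+2\sqrt{\tfrac{\alpha\ln s}{N_{i1}(s)}}. \nonumber
\end{align}
Since $p_{i1}=\frac{1}{2}-\Delta_i$, this rearranges to $N_{i1}(s)\le \frac{4\alpha}{\Delta_i^2}\ln s\le\frac{4\alpha}{\Delta_i^2}\ln t$. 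In words, once the number of observed $x_1$-versus-$x_i$ duels exceeds this threshold, $u_{i1}$ falls below $\frac{1}{2}$, $x_i$ leaves $\mathcal{C}$, and the pair can no longer be played, so $\widetilde{N}_{1i}(t)\le\frac{4\alpha}{\Delta_i^2}\ln t$ (counting from $C(\delta)+1$ rather than the start only helps, and the at-most-one extra observation taken at the threshold is absorbed into the bound).

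Next I would translate case counts into observation counts. On the good event, every step $t>C(\delta)$ in case (b) or (c-1) places $x_1$ in $\mathcal{A}_t$: in case (b) we have $\mathcal{A}_t=\mathcal{C}\ni x_1$ with $|\mathcal{C}|\ge2$, so playing $\mathcal{A}_t$ produces at least one new duel of $x_1$ against a suboptimal arm; in case (c-1), by definition $x_1\in\mathcal{A}_t$ and $|\mathcal{A}_t|=m$, producing exactly $m-1$ such duels. Hence each case-(b) step increments $\sum_{i>1}\widetilde{N}_{1i}(t)$ by at least $1$ and each case-(c-1) step by $m-1\ge1$, giving
\begin{align}
\widetilde{N}^{b}(t)+\widetilde{N}^{c}_1(t)\;\le\;\widetilde{N}^{b}(t)+(m-1)\widetilde{N}^{c}_1(t)\;\le\;\sum_{i>1}\widetilde{N}_{1i}(t). \nonumber
\end{align}
Combining with the per-arm bound from the previous step yields $\widetilde{N}^{b}(t)+\widetilde{N}^{c}_1(t)\le\sum_{i>1}\frac{4\alpha}{\Delta_i^2}\ln t$, as claimed.

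I expect the main obstacle to be the first step, namely correctly arguing that candidate-set membership caps the observation count. This requires pairing the upper-confidence condition $u_{i1}\ge\frac{1}{2}$ with the matching lower-confidence guarantee from \cref{lemma2_CH_bound} \emph{at the same random step} $s$ at which the pair is actually played, rather than only at the final time $t$; one must be careful that the decision rule at step $s$ uses the confidence bounds with $\ln s$ and the comparison counts accumulated strictly before $s$. The remaining steps form a clean counting argument, whose only subtleties are the harmless slackening $m-1\ge1$ and the off-by-one observation at the threshold.
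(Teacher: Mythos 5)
Your proposal is correct and follows essentially the same route as the paper's proof: on the confidence event of \cref{lemma2_CH_bound}, bound each $\widetilde{N}_{1i}(t)$ by $\frac{4\alpha}{\Delta_i^2}\ln t$ via the candidate-set condition $u_{i1}\ge\frac{1}{2}$ at the last time the pair is played, then charge each case-(b) occurrence at least one and each case-(c-1) occurrence $m-1$ observations against $\sum_{i>1}\widetilde{N}_{1i}(t)$. The only cosmetic difference is that the paper phrases the per-arm bound as a proof by contradiction, and it glosses over the same off-by-one at the threshold that you explicitly note.
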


\begin{proof} According to \cref{lemma3_case_a}, with probability  at least $1-\delta$, $\forall t> C(\delta)$,  every time  case (b) occurs, we can observe at least one  outcome of duel between $x_1$ and some $x_i \  (i>1)$ ($\sum \limits_{i>1} \widetilde{N}_{1i}(t)$ increments by 1). Every time  case (c-1) occurs, we can observe outcomes of  $m-1$  duels between  $x_1$ and $x_i \  (i>1)$ ($\sum \limits_{i>1} \widetilde{N}_{1i}(t)$ increments by $m-1$).

	In the following we prove that with probability  at least $1-\delta$, $\forall t> C(\delta), i>1$, $\widetilde{N}_{1i}(t) \leq    \frac{4\alpha}{\Delta^2_i} \ln t $.
	
	Assume that $\exists t> C(\delta)$, $\exists i>1$. $\widetilde{N}_{1i}(t) > \frac{4\alpha}{\Delta^2_i} \ln t$. Let  $s$ denote the last time when we observed the dueling  outcome between $x_1$ and  $x_i$ up to time $t$, which implies $\widetilde{N}_{1i}(s)=\widetilde{N}_{1i}(t),  C(\delta) < s \leq t$. Using \cref{lemma2_CH_bound}, we  have that with probability  at least $1-\delta$,
	\begin{align}
	2 \sqrt{\frac{\alpha \ln s}{N_{1i}(s)}} \leq 2 \sqrt{\frac{\alpha \ln s}{\widetilde{N}_{1i}(s)}} \leq 2 \sqrt{\frac{\alpha \ln t}{\widetilde{N}_{1i}(t)}} < \Delta_i \nonumber
	\\
	u_{i1}(s) \leq p_{i1}+2 \sqrt{\frac{\alpha \ln s}{N_{1i}(t)}} < p_{i1}+\Delta_i=\frac{1}{2}.
	\nonumber
	\end{align} 
	Since $u_{i1}(s) < \frac{1}{2}$, $x_i$ cannot be in $\mathcal{C}$ and thus cannot be chosen into $\mathcal{A}_t$, which yields a contradiction.
	
	Taking summation over $i>1$, we have that with probability  at least $1-\delta$, $\forall t> C(\delta)$, $ \sum \limits_{i>1} \widetilde{N}_{1i}(t) \leq   \sum \limits_{i>1} \frac{4\alpha}{\Delta^2_i} \ln t$.

	Because with probability  at least $1-\delta$, $\forall t> C(\delta)$, each occurrence of  case (c-1) increments $\sum \limits_{i>1} \widetilde{N}_{1i}(t)$  by $m-1$, we can  bound the $\widetilde{N}^{c}_1(t)$ by 
	\begin{align}
	(m-1)\widetilde{N}^{c}_1(t) \leq & \sum \limits_{i>1} \widetilde{N}_{1i}(t) \leq   \sum \limits_{i>1} \frac{4\alpha}{\Delta^2_i} \ln t \nonumber 
	\\
	\widetilde{N}^{c}_1(t) \leq & \sum \limits_{i>1} \frac{4\alpha}{(m-1)\Delta^2_i} \ln t. \label{eq_c-1}
	\end{align}
	\cref{eq_c-1} will be used later (proof of \cref{lemma7_later_term}). 
	
	For $\widetilde{N}^{b}(t) + \widetilde{N}^{c}_1(t)$,  we have that with probability at least $1-\delta$, $\forall t> C(\delta)$, $\widetilde{N}^{b}(t) + \widetilde{N}^{c}_1(t) \leq \sum \limits_{i>1} \widetilde{N}_{1i}(t) \leq   \sum \limits_{i>1} \frac{4\alpha}{\Delta^2_i} \ln t$.
	
\end{proof}

\begin{lemma}
	\label{lemma5_case_c-2}
	With probability  at least $1-\delta$, $\forall t> C(\delta)$,  
	\begin{align}
	\widetilde{N}^{c}_2(t) \leq \sum \limits_{1<i<j} \frac{4\alpha}{C_m^2 \Delta^2_{ij}} \ln t.  \nonumber
	\end{align}
\end{lemma}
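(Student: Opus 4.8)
The plan is to mirror the contradiction argument of \cref{lemma4_case_b_case_c-1}, but now tracking the duels \emph{among suboptimal arms} rather than those involving $x_1$. The first step is to observe that in case (c-2) we have $|\mathcal{C}|>m$ and $x_1\notin\mathcal{A}_t$, so MultiRUCB selects exactly $m$ distinct arms, all of them suboptimal. Playing this comparison set and recording all pairwise outcomes therefore reveals exactly $C_m^2=\binom{m}{2}$ duels, every one of them between a pair $x_i,x_j$ with $1<i<j$. Consequently each occurrence of case (c-2) after time $C(\delta)$ increments $\sum_{1<i<j}\widetilde{N}_{ij}(t)$ by exactly $C_m^2$, which is the counting identity that produces the $1/C_m^2$ factor in the final bound.

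The second and central step is the per-pair estimate: with probability at least $1-\delta$, $\forall t>C(\delta)$ and all $1<i<j$, $\widetilde{N}_{ij}(t)\le \frac{4\alpha}{\Delta_{ij}^2}\ln t$. I would prove this by contradiction, exactly as in \cref{lemma4_case_b_case_c-1}. Suppose some pair violated the bound at some $t>C(\delta)$, and let $s$ be the last time up to $t$ at which $x_i$ and $x_j$ were dueled, so that $\widetilde{N}_{ij}(s)=\widetilde{N}_{ij}(t)>\frac{4\alpha}{\Delta_{ij}^2}\ln t\ge \frac{4\alpha}{\Delta_{ij}^2}\ln s$. Then the confidence radius satisfies $2\sqrt{\alpha\ln s/N_{ij}(s)}\le 2\sqrt{\alpha\ln s/\widetilde{N}_{ij}(s)}<\Delta_{ij}$, and invoking the uniform confidence event of \cref{lemma2_CH_bound} gives $u_{ji}(s)\le p_{ji}+2\sqrt{\alpha\ln s/N_{ij}(s)}<p_{ji}+\Delta_{ij}=\tfrac{1}{2}$. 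Since $u_{ji}(s)<\tfrac{1}{2}$ forces $x_j\notin\mathcal{C}$ at time $s$, the arm $x_j$ could not have been placed in $\mathcal{A}_s$, contradicting that a duel between $x_i$ and $x_j$ was observed at time $s$.

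Finally, summing the per-pair bound over all $1<i<j$ yields $\sum_{1<i<j}\widetilde{N}_{ij}(t)\le \sum_{1<i<j}\frac{4\alpha}{\Delta_{ij}^2}\ln t$ on the same $1-\delta$ event, and combining this with the counting identity $C_m^2\,\widetilde{N}^{c}_2(t)\le \sum_{1<i<j}\widetilde{N}_{ij}(t)$ gives the claimed bound after dividing by $C_m^2$.

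The main obstacle is getting the direction of the confidence-bound contradiction right: because $\mu(x_i)\ge\mu(x_j)$ for $i<j$ we have $\Delta_{ij}=p_{ij}-\tfrac{1}{2}\ge0$, so it is the \emph{weaker} arm $x_j$ whose optimistic estimate $u_{ji}$ must be driven below $\tfrac{1}{2}$ in order to eject it from $\mathcal{C}$; one must verify that membership in $\mathcal{C}$ (which requires $u_{cj'}\ge\tfrac{1}{2}$ for all $j'$, in particular $u_{ji}\ge\tfrac{1}{2}$) is indeed violated. A secondary point requiring care is that all per-pair estimates must hold simultaneously under a single $1-\delta$ event, which is guaranteed because \cref{lemma2_CH_bound} is uniform over all $i,j$ and all $t>C(\delta)$.
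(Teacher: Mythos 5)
Your proposal is correct and follows essentially the same route as the paper's proof: the same counting identity that each occurrence of case (c-2) increments $\sum_{1<i<j}\widetilde{N}_{ij}(t)$ by $C_m^2$, and the same per-pair contradiction argument via the last duel time $s$, the confidence radius falling below $\Delta_{ij}$, and $u_{ji}(s)<\tfrac{1}{2}$ ejecting $x_j$ from $\mathcal{C}$. Your explicit check of the direction of the inequality (that it is the weaker arm $x_j$ whose upper confidence bound drops below $\tfrac12$) is exactly how the paper handles it with the ``wlog $i<j$'' convention.
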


\begin{proof}
	$\forall t> C(\delta)$, every time  case (c-2) occurs, we will observe outcomes of  $C_m^2:=\frac{m(m-1)}{2}$  different duels between suboptimal arms, \emph{i.e.}, $\sum \limits_{1<i<j} \widetilde{N}_{ij}(t)$ will increment by $C_m^2$.

	In the following  we prove that with probability  at least $1-\delta$, $\forall t> C(\delta)$, $i,j>1$, $i \neq j $, $\widetilde{N}_{ij}(t) \leq    \frac{4\alpha}{\Delta^2_{ij}} \ln t $.
	
	Assume that $\exists t> C(\delta)$, $\exists i,j>1, \   i \neq j  \   (\text{wlog } i<j)$, $\widetilde{N}_{ij}(t) > \frac{4\alpha}{\Delta^2_{ij}} \ln t$. Let  $s$ denote the last time when we observed the dueling  outcome  between $x_i$ and $x_j$ up to time $t$, which implies $\widetilde{N}_{ij}(s)=\widetilde{N}_{ij}(t),  C(\delta) < s \leq t$. Using \cref{lemma2_CH_bound}, we have that with probability  at least $1-\delta$,
	\begin{align}
	2 \sqrt{\frac{\alpha \ln s}{N_{ij}(s)}} \leq 2 \sqrt{\frac{\alpha \ln s}{\widetilde{N}_{ij}(s)}} \leq 2 \sqrt{\frac{\alpha \ln t}{\widetilde{N}_{ij}(t)}} < \Delta_{ij} \nonumber
	\\
	u_{ji}(s) \leq p_{ji}+2 \sqrt{\frac{\alpha \ln s}{N_{ij}(t)}} < p_{ji}+\Delta_{ij}=\frac{1}{2}.  \nonumber
	\end{align} 
	Since $u_{ji}(s) < \frac{1}{2}$, $x_j$ cannot be in $\mathcal{C}$ and thus cannot be chosen into $\mathcal{A}_t$, which yields a contradiction.
	
	Taking summation over $1<i<j$, we have that with probability  at least $1-\delta$, $\forall t> C(\delta)$, $\sum \limits_{1<i<j} \widetilde{N}_{ij}(t) \leq   \sum \limits_{1<i<j} \frac{4\alpha}{\Delta^2_{ij}} \ln t$.
	Thus, with probability at least $1-\delta$, $\forall t> C(\delta)$, $ C_m^2 \widetilde{N}^{c}_2(t) \leq \sum \limits_{1<i<j} \widetilde{N}_{ij}(t) \leq   \sum \limits_{1<i<j} \frac{4\alpha}{\Delta^2_{ij}} \ln t$.
	This concludes the proof of \cref{lemma5_case_c-2}.
	
\end{proof}

\begin{lemma}
	\label{lemma6_former_term}
	With probability  at least $1-\delta$, for any time $T$, 
	\begin{align}
	N(T) \leq C(\delta)+ \sum \limits_{i>1} \frac{4\alpha}{\Delta^2_i} \ln T + \sum \limits_{1<i<j} \frac{4\alpha}{C_m^2 \Delta^2_{ij}} \ln T. \nonumber
	\end{align}
\end{lemma}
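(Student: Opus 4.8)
The plan is to split $N(T)$ into the increments accumulated during the first $C(\delta)$ rounds and those accumulated afterwards, and then feed the three preceding lemmas into the second part. A useful preliminary observation is that \cref{lemma3_case_a}, \cref{lemma4_case_b_case_c-1} and \cref{lemma5_case_c-2} are all consequences of the single confidence-bound event of \cref{lemma2_CH_bound}, which holds with probability at least $1-\delta$; hence I would fix that event at the outset and reason deterministically on it, avoiding any union bound over separate failure probabilities.

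First I would bound the early part trivially: since $N(\cdot)$ increases by at most one per round, at most $C(\delta)$ of the first $C(\delta)$ rounds can have $\mathcal{A}_t\neq\{x_1\}$, so $N(C(\delta))\leq C(\delta)$. The crux is the case analysis for $t>C(\delta)$. On the good event, \cref{lemma3_case_a} guarantees $x_1\in\mathcal{C}$ for every such $t$, which rules out the empty-candidate branch (Line~8) entirely, and also forces $\mathcal{C}=\{x_1\}$ whenever case (a) fires (because then $|\mathcal{C}|=1$), making $\mathcal{A}_t=\{x_1\}$ and contributing nothing to $N$. Therefore every round after $C(\delta)$ with $\mathcal{A}_t\neq\{x_1\}$ is exactly one of the cases (b), (c-1), (c-2), so the post-$C(\delta)$ increments of $N$ are counted precisely by $\widetilde{N}^{b}(T)+\widetilde{N}^{c}_1(T)+\widetilde{N}^{c}_2(T)$, yielding
\begin{align}
N(T)\leq C(\delta)+\widetilde{N}^{b}(T)+\widetilde{N}^{c}_1(T)+\widetilde{N}^{c}_2(T). \nonumber
\end{align}

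Finally I would substitute the counts already bounded: \cref{lemma4_case_b_case_c-1} gives $\widetilde{N}^{b}(T)+\widetilde{N}^{c}_1(T)\leq\sum_{i>1}\frac{4\alpha}{\Delta^2_i}\ln T$ and \cref{lemma5_case_c-2} gives $\widetilde{N}^{c}_2(T)\leq\sum_{1<i<j}\frac{4\alpha}{C_m^2\Delta^2_{ij}}\ln T$, and adding these to $C(\delta)$ reproduces the claimed bound. The only step requiring real care is the middle case analysis: one must verify on the good event that no comparison set other than $\{x_1\}$ can arise after round $C(\delta)$ except through case (b) or case (c), so that the three combinatorial counts are genuinely exhaustive; the remainder is direct substitution of the earlier lemmas.
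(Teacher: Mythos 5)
Your proposal is correct and follows essentially the same route as the paper: the paper's proof of this lemma is simply ``combine \cref{lemma4_case_b_case_c-1} and \cref{lemma5_case_c-2},'' relying on the case analysis stated just before \cref{lemma4_case_b_case_c-1} (on the good event of \cref{lemma2_CH_bound}, Line 8 never fires after $C(\delta)$ and case (a) yields $\mathcal{A}_t=\{x_1\}$, so only cases (b), (c-1), (c-2) contribute to $N$), which is exactly the decomposition you spell out. Your explicit remark that all three preceding lemmas condition on the single confidence event, so no union bound is needed, is a correct and worthwhile clarification of what the paper leaves implicit.
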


\begin{proof}
	\cref{lemma6_former_term} holds by combining \cref{lemma4_case_b_case_c-1} and \cref{lemma5_case_c-2}.
\end{proof}

\cref{lemma6_former_term} gives a high probability bound of $N(T)$. In the following we will give another high probability bound (\cref{lemma7_later_term}) of  $N(T)$ using the choice strategy of case (c).

Before stating \cref{lemma7_later_term}, we firstly introduce a definition, which will be used in the proof of \cref{lemma7_later_term}.

\begin{definition}
	\label{definition1}
	Let $\widehat{T}_{\delta}$ be the smallest time satisfying
	\begin{align}
	\widehat{T}_{\delta}> C(\frac{\delta}{2}) +  \sum \limits_{i>1} \frac{4\alpha}{\Delta^2_i} \ln \widehat{T}_{\delta} + \sum \limits_{1<i<j} \frac{4\alpha}{C_m^2 \Delta^2_{ij}} \ln \widehat{T}_{\delta}. \nonumber
	\end{align}
	where $\widehat{T}_{\delta}$ is guaranteed to exist because the left side of the inequality grows linearly with $\widehat{T}_{\delta}$ and the right side grows logarithmically.
\end{definition}

In the following we prove a upper bound of $\widehat{T}_{\delta}$ using similar techniques in \cite{Relative_upper_confidence_bound}.

Define $C:=C(\frac{\delta}{2}),\  D:=\sum \limits_{i>1} \frac{4\alpha}{\Delta^2_i}+ \sum \limits_{1<i<j} \frac{4\alpha}{C_m^2 \Delta^2_{ij}} $. To find a upper bound of $\widehat{T}_{\delta}$, we need to  produce one number $T$ satisfying  $T>C+D \ln T$. It is easy to prove one such number is $T=2C+2D \ln 2D$.
\begin{align}
C+D \ln (2C+2D \ln 2D)  \overset{a}{\leq} & C+ D \ln (2D \ln 2D) \nonumber
\\
& + D \frac{2C}{2D \ln 2D} \nonumber
\\
\overset{b}{\leq} & C+ D \ln ((2D)^2)   \nonumber
\\
& +  \frac{C}{ \ln 2D} \nonumber
\\
\overset{c}{\leq} & 2C+ 2D \ln 2D, \nonumber
\end{align}
where (a) uses a first order Taylor expansion, (b) uses $\ln 2D <2D$ and (c) uses $D>2$.

Thus, we can bound  $\widehat{T}_{\delta}$ by 
\begin{align}
\widehat{T}_{\delta} \leq 2C+2D \ln 2D. \label{eq_lemma7_T_delta}
\end{align}

\begin{lemma}
	\label{lemma7_later_term}
	With probability at least $1-\delta$,   for any time $T$,  
	\begin{align}
	N(T) \leq & 2C+2D \ln 2D +4 \ln \frac{2}{\delta} +   \sum \limits_{i>1} \frac{4\alpha}{\Delta^2_i} \ln T \nonumber 
	\\
	& +   2\sum \limits_{i>1} \frac{4\alpha}{(m-1)\Delta^2_i} \ln T. \nonumber
	\end{align}
	where $C:=C(\frac{\delta}{2}),\  D:=\sum \limits_{i>1} \frac{4\alpha}{\Delta^2_i}+ \sum \limits_{1<i<j} \frac{4\alpha}{C_m^2 \Delta^2_{ij}} $.
\end{lemma}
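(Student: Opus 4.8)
The plan is to establish a second, geometric-flavored high-probability bound on $N(T)$ that exploits the $\tfrac12$--$\tfrac12$ randomization of case (c) once the hypothesized optimal arm has been locked in. First I would pin down the time $T_\delta$ at which $\mathcal{B}$ becomes $\{x_1\}$. Conditioning on the confidence event of \cref{lemma3_case_a} (so that $x_1\in\mathcal{C}$ for all $t>C(\tfrac{\delta}{2})$), I would argue by contradiction with the counting behind \cref{lemma6_former_term}: if case (a) never occurred on $(C(\tfrac{\delta}{2}),\widehat{T}_\delta]$, every such step would be case (b) or (c), forcing more than $D\ln\widehat{T}_\delta$ non-$\{x_1\}$ comparisons and contradicting the defining inequality of $\widehat{T}_\delta$ in \cref{definition1}. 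Hence, with probability at least $1-\tfrac{\delta}{2}$, some $T_\delta\in(C(\tfrac{\delta}{2}),\widehat{T}_\delta]$ realizes case (a); since case (a) sets $\mathcal{B}\leftarrow\mathcal{C}=\{x_1\}$ and the later update $\mathcal{B}\leftarrow\mathcal{B}\cap\mathcal{C}$ together with $x_1\in\mathcal{C}$ keeps $\mathcal{B}=\{x_1\}$ thereafter. I would then invoke $\widehat{T}_\delta\leq 2C+2D\ln 2D$ from \cref{eq_lemma7_T_delta} to control $T_\delta$.

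Second, with $\mathcal{B}=\{x_1\}$ fixed from $T_\delta$ on, I would note that by Lines 18--22 of \cref{algorithm_MultiRUCB} each occurrence of case (c) is an independent Bernoulli$(\tfrac12)$ trial producing case (c-1) (success, $x_1$ added) or case (c-2) (failure). Restricting attention to the subsequence of case (c) steps, I would set up the variables of the sketch: $\tau_0:=T_\delta$, $\tau_l$ the $l$-th case (c-1) after $T_\delta$, and $n_l$ the number of case (c-2) steps between $\tau_{l-1}$ and $\tau_l$; on this subsequence the $n_l$ are i.i.d. geometric with $\Pr[n_l=k]=(\tfrac12)^{k+1}$ and mean $1$. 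The cap on case (c-1) comes from \cref{eq_c-1}: on the confidence event $\widehat{N}^{c}_1(t)\leq L^{c}_1(t):=\sum_{i>1}\frac{4\alpha}{(m-1)\Delta_i^2}\ln t$, and once case (c-1) has run $L^{c}_1(t)$ times every suboptimal $x_i$ satisfies $u_{i1}<\tfrac12$, so it has left $\mathcal{C}$ and case (c-2) can no longer occur. All case (c-2) steps up to $t$ therefore fall within the first $L^{c}_1(t)$ geometric segments, giving $\widehat{N}^{c}_2(t)\leq\sum_{l=1}^{L^{c}_1(t)} n_l$ (extending the geometric process past the last realized success should fewer than $L^{c}_1(t)$ successes have occurred).

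Third, I would apply a Chernoff-type concentration bound for partial sums of i.i.d. geometric random variables (as in the RUCB analysis) to obtain, with probability at least $1-\tfrac{\delta}{2}$ and uniformly in $t$, $\sum_{l=1}^{L^{c}_1(t)} n_l\leq 2L^{c}_1(t)+4\ln\tfrac{2}{\delta}$, hence $\widehat{N}^{c}_2(t)\leq 2\sum_{i>1}\frac{4\alpha}{(m-1)\Delta_i^2}\ln t+4\ln\tfrac{2}{\delta}$. Finally I would assemble the pieces via a union bound over the two $\tfrac{\delta}{2}$-events: splitting the horizon at $T_\delta$ gives $N(T)\leq T_\delta+\widehat{N}^{b}(T)+\widehat{N}^{c}_1(T)+\widehat{N}^{c}_2(T)$, since case (a) after $T_\delta$ yields $\mathcal{A}_t=\{x_1\}$ and contributes nothing; bounding $T_\delta\leq 2C+2D\ln 2D$, $\widehat{N}^{b}(T)+\widehat{N}^{c}_1(T)\leq\sum_{i>1}\frac{4\alpha}{\Delta_i^2}\ln T$ by \cref{lemma4_case_b_case_c-1}, and using the case (c-2) bound above produces exactly the stated inequality. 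I expect the main obstacle to be the coupling/termination argument in the second paragraph---rigorously justifying that the realized case (c-2) count is dominated by the sum over the first $L^{c}_1(t)$ geometric segments even when the true process terminates early---together with stating the geometric concentration with the right constants so that the $4\ln\tfrac{2}{\delta}$ term and the factor $2$ emerge cleanly.
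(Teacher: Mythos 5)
Your proposal is correct and follows essentially the same route as the paper's proof: establish the existence of $T_{\delta}\in(C(\tfrac{\delta}{2}),\widehat{T}_{\delta}]$ at which case (a) fires and $\mathcal{B}$ locks onto $\{x_1\}$, bound $\widehat{N}^{b}+\widehat{N}^{c}_1$ via \cref{lemma4_case_b_case_c-1}, cap the number of case (c-1) occurrences by $L^{c}_1(t)$ using \cref{eq_c-1}, dominate $\widehat{N}^{c}_2$ by a sum of $L^{c}_1(t)$ geometric$(\tfrac12)$ variables concentrated as $2L^{c}_1(t)+4\ln\tfrac{2}{\delta}$, and combine with a union bound over the two $\tfrac{\delta}{2}$ events. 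Your explicit attention to the coupling/early-termination issue in dominating the $n_l$ by i.i.d.\ geometrics is if anything slightly more careful than the paper's treatment.
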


\begin{proof} From \cref{lemma3_case_a,lemma4_case_b_case_c-1,lemma5_case_c-2} and \cref{definition1}, we have that with probability  at least $1-\frac{\delta}{2}$, there exists a time $T_{\delta} \in (C(\frac{\delta}{2}), \widehat{T}_{\delta} ]$ when case (a) occurs. This implies that with probability  at least $1-\frac{\delta}{2}$, $\mathcal{B}$ has been set as $\mathcal{B} =\{x_1\}$ from time $T_{\delta}$ on. Thus, from  time $T_{\delta}$ on, if MultiRUCB carries out case (c), case(c-1) will occur with probability of $\frac{1}{2}$.
	
	Let  $\widehat{N}^{b}(t)$, $\widehat{N}^{c}_1(t)$ and  $\widehat{N}^{c}_2(t)$ denote the number of times case (b), (c-1) and (c-2) occur between time $T_{\delta}+1$ and $t$, respectively.
	We also introduce the following two sets of random variables:
	\begin{itemize}
		\item $\tau_0,\tau_1,\tau_2,...,$ where $\tau_0:=T_{\delta}$ and $\tau_l$ is the $l^{th}$ time case (c-1) occurs after time $T_{\delta}$.
		\item $n_1,n_2,...,$ where $n_l$ is the number of times case (c-2) occurs between $\tau_{l-1}$ and $\tau_l$.
	\end{itemize}
	
	Using \cref{lemma4_case_b_case_c-1},  we have that with probability at least $1-\frac{\delta}{2}$, $\forall t> T_{\delta}$, 
	\begin{align}
	\widehat{N}^{b}(t)+ \widehat{N}^{c}_1(t) \leq \widetilde{N}^{b}(t)+ \widetilde{N}^{c}_1(t) \leq  \sum \limits_{i>1} \frac{4\alpha}{\Delta^2_i} \ln t.  \label{eq_lemma7_case_b_and_case_c-1}
	\end{align}

	Using \cref{eq_c-1}, we have that with probability at least $1-\frac{\delta}{2}$, $\forall t> T_{\delta}$, 
	\begin{align}
	\widehat{N}^{c}_1(t)\leq \widetilde{N}^{c}_1(t) \leq  \sum \limits_{i>1} \frac{4\alpha}{(m-1)\Delta^2_i} \ln t. \nonumber
	\end{align}

	This means that with probability at least $1-\frac{\delta}{2}$, between time $T_{\delta}+1$ and $t$, case (c-1) occurs at most $L^{c}_1(t):= \sum \limits_{i>1} \frac{4\alpha}{(m-1)\Delta^2_i} \ln t $ times. Moreover,
	with probability at least $1-\frac{\delta}{2}$,  for any time  $t>T_{\delta}$, if case (c-1) has occurred  $L^{c}_1(t)$ times, all suboptimal arms $x_i \  (i>1)$ satisfy $u_{i1}<\frac{1}{2}$ and  case (c-2) cannot occur.
	Thus, we have that with probability at least $1-\frac{\delta}{2}$, $\forall t> T_{\delta}$,
	\begin{align}
	\widehat{N}^{c}_2(t) \leq \sum \limits_{l=1} \limits^{L^{c}_1(t)} n_l. \nonumber
	\end{align}
	
	To bound the sum of intervals $n_l$, we introduce i.i.d. geometric random variables $\{\hat{n}_l\}_{l=1,2,...,r} $ with parameter $\frac{1}{2}$.  $\hat{n}_l$ bounds $n_l$ because $n_l$ counts the number of times it takes for case (c) to produce one case (c-1). 
	
	Using similar techniques in  \cite{feller1957_geometric_distribution,Relative_upper_confidence_bound} to bound the sum of $\{\hat{n}_l\}_{l=1,2,...,r} $, which we denote by $n$, we can obtain that  with probability at least $1-\frac{\delta}{2}$, 
	\begin{align}
	n < 2r+4 \ln \frac{2}{\delta}, \nonumber
	\end{align}
	Note that this  $1-\frac{\delta}{2}$ is different from the aforementioned  $1-\frac{\delta}{2}$ that is derived from \cref{lemma2_CH_bound}.  
	
	Setting $r=L^{c}_1(t)$, we have that with probability at least $1-\delta$, $\forall t> T_{\delta}$,
	\begin{align}
	\widehat{N}^{c}_2(t) & \leq \sum \limits_{l=1} \limits^{L^{c}_1(t)} n_l \nonumber
	\\
	& \leq 2\sum \limits_{i>1} \frac{4\alpha}{(m-1)\Delta^2_i} \ln t +4 \ln  \frac{2}{\delta}. \label{eq_lemma7_case_c-2}
	\end{align}
	
	Taking summation over $T_{\delta}$ (\cref{eq_lemma7_T_delta}), $\widehat{N}^{b}(t)$, $\widehat{N}^{c}_1(t)$ (\cref{eq_lemma7_case_b_and_case_c-1}) and  $\widehat{N}^{c}_2(t)$ (\cref{eq_lemma7_case_c-2}), we obtain the result of \cref{lemma7_later_term}.
	
\end{proof}


\thmMultiRUCB*

\begin{proof} Combining \cref{lemma6_former_term} and \cref{lemma7_later_term}, we can obtain that with probability  at least $1-\delta$, for any time $T$, 
	\begin{align}
	N(T) \leq  \min \Bigg\{  C(\delta)+ D \ln T, \   2C(\frac{\delta}{2})+2D \ln 2D \nonumber
	\\
	+ 4 \ln \frac{2}{\delta}  	 
	+ \sum \limits_{i>1} \frac{4\alpha}{\Delta^2_i} \ln T
	+ 2\sum \limits_{i>1} \frac{4\alpha}{(m-1)\Delta^2_i} \ln T \Bigg \}, \nonumber
	\end{align}

	Integrating $N(T)$ with respect to $\delta$ from $0$ to $1$, we have that given $\alpha>1$,  $\mathbb{E}[N(T)]$ is bounded by
	\begin{align}
	\mathbb{E}  [N(T)]  \leq &  \left[ \left(\frac{2(4\alpha -1)K^2}{2\alpha -1} \right)^{\frac{1}{2\alpha -1}}  \frac{2\alpha -1}{\alpha -1} \right] \nonumber \\
	& +  \min  \Bigg\{   D \ln T, \nonumber
	\\
	& 2D \ln 2D + 8 +  \frac{m+1}{m-1} \sum \limits_{i>1} \frac{4\alpha}{\Delta^2_i} \ln T \Bigg\}. \nonumber
	\end{align}
	
	\cref{theorem3_MultiRUCB} is obtained by applying
	\begin{align}
	\mathbb{E}[R_T] \leq \mathbb{E}[N(T)] \cdot \Delta_{max}. \nonumber
	\end{align}
	
\end{proof}

\section*{D \quad  Variance Results in Multi-dueling Bandits Experiments}
In this section, we present the omitted variance results in the multi-dueling bandits experiments (See \cref{section_multiexp}). 
\cref{table_variance} shows the variances of cumulative regrets at the $10^6$ timestep for 50 independent runs. Columns 2-5 correspond to the experiments in Figure \ref{fig_multi-dueling} (a-d). ``Syn'' refers to the synthetic dataset. 

\begin{table}[h]
	\centering
	\caption{The variance results in the multi-dueling bandit experiments.}
	\label{table_variance}
	\renewcommand\arraystretch{1.0}
	\normalsize
	\setlength{\tabcolsep}{2.2pt}
	\begin{tabular}{ccccc}
		\toprule
		Algorithms&Syn, m=8&Syn, m=16&MSLR, m=8&MSLR, m=16\\
		\midrule	
		\textbf{MultiRUCB}&1648.05&1042.07&678.42&231.97\\
		IndSelfSparring&2047.69&1243.43&741.39&264.67\\
		MDB&2142.22&1246.30&754.21&277.62\\
		MultiSparring&2201.16&1328.71&864.56&392.61\\
		\bottomrule
	\end{tabular}
\end{table}

\end{document}